\newcommand{\bigzero}{\mbox{\normalfont\Large\bfseries 0}}
\newcommand{\dbtilde}[1]{\accentset{\approx}{#1}}
\newtheorem*{corollary*}{Corollary}
\newtheorem{theorem}{Theorem}[section]
\newtheorem{lemma}[theorem]{Lemma}
\newtheorem{proposition}[theorem]{Proposition}
\newtheorem{corollary}[theorem]{Corollary}
\newtheorem{conjecture}[theorem]{Conjecture}
\newtheorem{definition-theorem}[theorem]{Definition-Theorem}
\theoremstyle{definition}
\newtheorem{definition}[theorem]{Definition}
\newtheorem{example}[theorem]{Example}
\newtheorem{terminology}[theorem]{Terminology}
\theoremstyle{remark}
\newtheorem{remark}[theorem]{Remark}
\def\l@subsection{\@tocline{2}{0pt}{2.5pc}{5pc}{}} 
\numberwithin{equation}{section}
\title{PDE constraints on smooth hierarchical functions computed by neural networks}
\author{Khashayar Filom, Konrad Paul Kording, Roozbeh Farhoodi}
\address{Khashayar Filom, Department of Mathematics, University of Michigan}
\email{filom@umich.edu}
\address{Konrad Paul Kording, Department of Bioengineering, University of Pennsylvania}
\email{kording@upenn.edu}
\address{Roozbeh Farhoodi, Department of Bioengineering, University of Pennsylvania}
\email{roozbeh@seas.upenn.edu}
\date{}
\begin{document}

\vspace*{-2cm}
\maketitle

\vspace{-1cm}
\begin{abstract}
Neural networks are versatile tools for computation, having the ability to approximate a broad range of functions. An important problem in the theory of deep neural networks is expressivity; that is, we want to understand the functions that are computable by a given network.  We study real infinitely differentiable (smooth) hierarchical functions implemented by feedforward neural networks via composing simpler functions in two cases:
\begin{enumerate}
\item each constituent function of the composition has fewer inputs than the resulting function;
\item constituent functions are in the more specific yet prevalent form of a non-linear univariate function (e.g. tanh) applied to a linear multivariate function. 
\end{enumerate}
We establish that in each of these regimes there exist non-trivial algebraic partial differential equations (PDEs), which are satisfied by the computed functions. These PDEs are purely in terms of the partial derivatives and are dependent only on the topology of the network.  For compositions of polynomial functions, the algebraic PDEs yield non-trivial equations (of degrees dependent only on the architecture) in the ambient polynomial space  that are satisfied on the associated functional varieties. Conversely, we conjecture that such PDE constraints, once accompanied by appropriate non-singularity conditions and perhaps certain inequalities involving partial derivatives, guarantee that the smooth function under consideration can be represented by the network. The conjecture is verified in numerous examples including the case of tree architectures which are of neuroscientific interest.  Our approach is a step toward formulating an algebraic description of functional spaces associated with specific neural networks, and may provide new, useful tools for constructing neural networks. 
\end{abstract}

\renewcommand{\baselinestretch}{0.9}\normalsize
\footnotesize
\tableofcontents
\renewcommand{\baselinestretch}{1.0}\normalsize
\normalsize

\section{Introduction}\label{Introduction}

\subsection{Motivation}
A central problem in the theory of deep neural networks is to understand the functions that can be computed by a particular architecture \cite{raghu2017expressive,poggio2019theoretical}. Such functions are typically \textit{superpositions} of simpler functions; that is, compositions of functions of fewer variables. This article aims to study  superpositions of real \textit{smooth} (i.e. infinitely differentiable or $C^\infty$) functions which are constructed hierarchically; see Figure \ref{fig:superposition}. Our core thesis is that such functions (also referred to as \textit{hierarchical} or \textit{compositional} interchangeably) are constrained in the sense that they satisfy 
certain \textit{\textbf{p}artial \textbf{d}ifferential \textbf{e}quations} (\textbf{PDE}s). These PDEs are dependent only on the topology of the network, and could furthermore be employed to characterize smooth functions computable by a given network.

\begin{example}\label{basic-1}
One of the simplest examples of a superposition is when a trivariate function is obtained from composing two bivariate functions; for instance, let us consider the composition 
\begin{equation}\label{3var_form}
F(x,y,z) = g\left(f(x,y),z\right)
\end{equation}
of functions $f=f(x,y)$ and $g=g(u,z)$ that can be computed by the network in Figure \ref{fig:basic-1}.
Assuming that all functions appearing here are twice continuously differentiable (or $C^2$), the chain rule yields
$$
F_x=g_uf_x,\quad F_y=g_uf_y.
$$
If either $F_x$ or $F_y$ -- say the former -- is non-zero, the equations above imply that the ratio between $F_x$ and $F_y$ is independent of $z$:
\small
\begin{equation}\label{auxiliary-basic}
\frac{F_y}{F_x}=\frac{f_y}{f_x}. 
\end{equation}
\normalsize
Therefore, its derivative with respect to $z$ must be identically zero:
\small
\begin{equation}\label{auxiliary-basic-constraint}
\left(\frac{F_y}{F_x}\right)_z = \frac{F_{yz}F_x-F_{xz}F_y}{(F_x)^2}=0.
\end{equation}
\normalsize
This amounts to 
\begin{equation}\label{auxiliary-basic-constraint'}
F_{yz}F_x=F_{xz}F_y,
\end{equation}
\normalsize
an equation that always holds for functions of the form \eqref{3var_form}. Notice that one may readily exhibit functions that do not satisfy the necessary PDE constraint $F_{xz}F_y=F_{yz}F_x$ and so cannot be brought into form \eqref{3var_form}, e.g.  
\begin{equation}\label{non-example}
xyz+x+y+z.    
\end{equation}
\normalsize
Conversely, if the constraint $F_{yz}F_x=F_{xz}F_y$ is satisfied and $F_x$ (or $F_y$) is non-zero, we can reverse this processes to obtain a local expression of the form \eqref{3var_form} for $F(x,y,z)$: By interpreting the constraint as the independence of $\frac{F_x}{F_y}$ of $z$, one can devise a function $f=f(x,y)$ whose ratio of partial derivatives coincides with $\frac{F_x}{F_y}$ (this is a calculus fact; see Theorem \ref{integrability}). Now that \eqref{auxiliary-basic}  is satisfied, the gradient of $F$ may be written as
\small
$$
\nabla F
=\begin{bmatrix}
F_x\\
F_y\\
F_z
\end{bmatrix}
=\frac{F_x}{f_x}
\begin{bmatrix}
f_x\\
f_y\\
0
\end{bmatrix}
+F_z
\begin{bmatrix}
0\\
0\\
1
\end{bmatrix};
$$
\normalsize
i.e. as a linear combination of gradients of $f(x,y)$ and $z$. The Implicit Function Theorem then guarantees that $F(x,y,z)$ is (at least locally) a function of the latter two: There exists a bivariate function $g$ defined on a suitable domain with $F(x,y,z)=g(f(x,y),z)$. Later in the paper, we shall generalize this toy example to a characterization of superpositions computed by \textit{tree architectures}; cf. Theorem \ref{main-tree} \\
\indent
Functions appearing in the context of neural networks are more specific than a general superposition such as \eqref{3var_form}; they are predominantly constructed by composing univariate non-linear \textit{activation functions} and multivariate linear functions defined by \textit{weights} and \textit{biases}. In the case of a trivariate functions $F(x,y,z)$, we should replace the representation $g(f(x,y),z)$ studied so far with 
\begin{equation}\label{3var_form-activation}
F(x,y,z)=g(w_3f(w_1x+w_2y+b_1)+w_4z+b_2).
\end{equation}
Assuming that activation functions $f$ and $g$ are differentiable, now new constraints of the form \eqref{auxiliary-basic-constraint} are imposed: The ratio $\frac{F_y}{F_x}$ is equal to $\frac{w_2}{w_1}$, hence it is not only independent of $z$ as \eqref{auxiliary-basic-constraint} suggests, but indeed a constant function. So we arrive at
\small
$$
\left(\frac{F_y}{F_x}\right)_x = \left(\frac{F_y}{F_x}\right)_y = \left(\frac{F_y}{F_x}\right)_z = 0;
$$
\normalsize
or equivalently
$$
F_{xy}F_x=F_{xx}F_y,\quad F_{yy}F_x=F_{xy}F_y, \quad F_{yz}F_x=F_{xz}F_y.
$$
\normalsize
Again, these equations characterize differentiable functions of the form \eqref{3var_form-activation}; this is a special case of Theorem \ref{main-tree-activation} below.

\begin{figure}
    \centering
    \includegraphics[height=3cm]{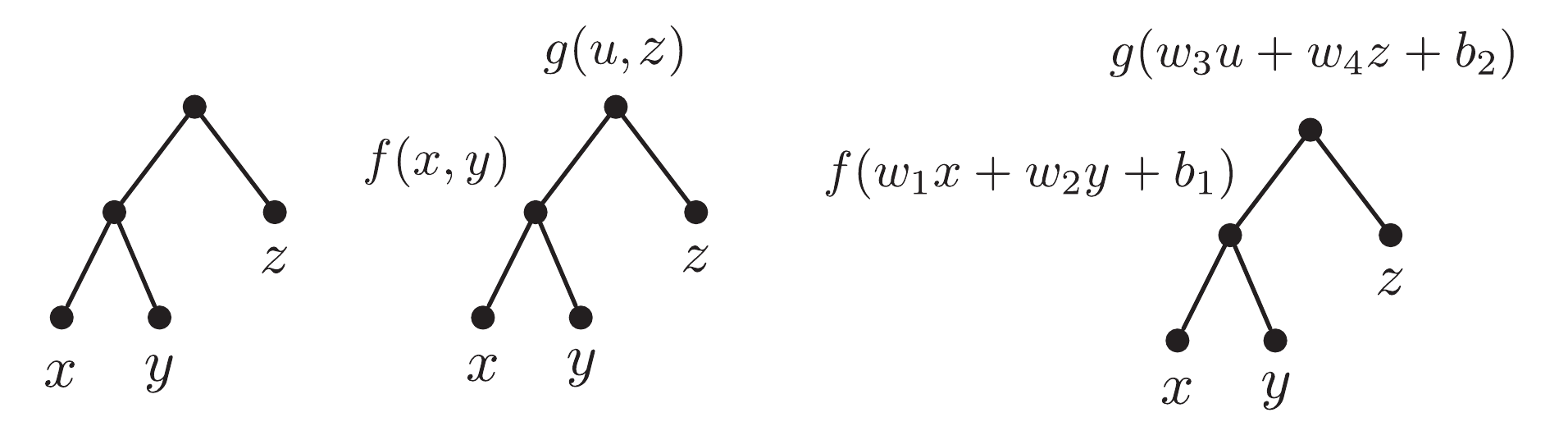}
    \caption{The architecture on the left (studied in Example \ref{basic-1}) can compute functions of the form $g(f(x,y),z)$ as in the middle. They involve the smaller class of functions of the form $g(w_3f(w_1x+w_2y+b_1)+w_4z+b_2)$ on the right. }
    \label{fig:basic-1}
\end{figure}

\end{example}

\begin{example}\label{basic-1'}
The preceding example dealt with compositions of functions with disjoint sets of variables and this facilitated our calculations. But this is not the case for compositions constructed by most neural networks, e.g. networks may be fully connected or may have repeated inputs. For instance, let us consider a superposition of the form 
\begin{equation}\label{toy1}
F(x,y,z)=g(f(x,y),h(x,z))    
\end{equation}
of functions $f(x,y)$, $h(x,z)$ and $g(u,v)$ as implemented in Figure \ref{fig:basic-2}. Applying the chain rule tends to be more complicated than the case of \eqref{3var_form} and results in identities
\begin{equation}\label{auxiliary-basic-combination}
F_x=g_uf_x+g_vh_x,\quad F_y=g_uf_y,\quad F_z=g_vh_z.
\end{equation}
Nevertheless, it is not hard to see that there are again (perhaps cumbersome) non-trivial PDE constraints imposed on the hierarchical function $F$ -- this fact will be established generally in Theorem \ref{main} below. To elaborate, notice that identities in \eqref{auxiliary-basic-combination} together imply 
\begin{equation}\label{auxiliary7}
F_x=A(x,y)F_y+B(x,z)F_z
\end{equation}
where $A:=\frac{f_x}{f_y}$ and $B:=\frac{h_x}{h_z}$ are independent of $z$ and $y$ respectively. Repeatedly differentiating this identity (if possible) with respect to $y,z$ results in linear dependence relations between partial derivatives of $F$ (and hence PDEs) since the number of partial derivatives of $F_x$ of order at most $n$ with respect to $y,z$  grows quadratically with $n$ while on the right-hand side the number of possibilities for coefficients (partial derivatives of $A$ and $B$ with respect to $y$ and $z$ respectively) grows only linearly. Such dependencies could be encoded by the vanishing of determinants of suitable matrices formed by partial derivatives of $F$.
In Example \ref{toy example 1}, by pursuing the strategy just mentioned, we shall complete this treatment of superpositions \eqref{toy1} by deriving the corresponding \textit{characteristic PDEs} that are necessary and (in a sense) sufficient conditions on $F$ that it be in the form of \eqref{toy1}. Moreover, in order to be able to differentiate several times, we shall assume that all functions are smooth (or $C^\infty$) hereafter.   
\end{example}

\subsection{Statements of main results}
Fixing a neural network hierarchy for composing functions, we shall prove that once the constituent functions of corresponding superpositions have fewer inputs (lower arity), there exist universal 
\textit{\textbf{algebraic} \textbf{p}artial \textbf{d}ifferential \textbf{e}quations} (\textbf{algebraic PDE}s) that have these superpositions as their solutions. A conjecture -- which shall be verified in several cases -- states that such PDE constraints characterize a \textit{generic} smooth superposition computable by the network. Here, genericity  means a \textit{non-vanishing condition} imposed on an algebraic expression of partial derivatives. 
Such a condition has already occurred in Example \ref{basic-1} where in the proof of the sufficiency of \eqref{auxiliary-basic-constraint'} for the existence of a representation of the form \eqref{3var_form} for a  function $F(x,y,z)$ we assumed either $F_x$ or $F_y$ is non-zero. 
Before proceeding with the statements of main results, we formally define some of the terms that have appeared so far.          
\begin{terminology}
\begin{minipage}[t]{\linegoal}
\begin{itemize}
\item We take all neural networks to be feedforward. A \textbf{feedforward neural network} is an acyclic hierarchical layer to layer scheme of computation. We also include \textit{\textbf{Res}idual \textbf{Net}works} 
(\textbf{ResNet}s) into this category: an identity function in a layer could be interpreted as a jump in layers. \textit{Tree architectures} are recurring examples of this kind.  We shall always assume that in the first layer the inputs are labeled by (not necessarily distinct) labels chosen from  coordinate functions $x_1,\dots,x_n$; and there is only one node in the output layer. Assigning functions to nodes in layers above the input layer implements a real scalar-valued function $F=F(x_1,\dots,x_n)$ as the superposition of functions appearing at nodes; see Figure \ref{fig:superposition}.
\item In our setting, an \textbf{algebraic PDE} is a non-trivial polynomial relation such as    
\begin{equation}\label{alg PDE}
\Phi\left(F_{x_1},\dots,F_{x_n},F_{x_1^2}, F_{x_1x_2},\dots,F_{\mathbf{x}^{\mathbf{\alpha}}},\dots\right)=0
\end{equation}
among the partial derivatives (up to a certain order) of a smooth function $F=F(x_1,\dots,x_n)$. Here, for a tuple $\mathbf{\alpha}:=(\alpha_1,\dots,\alpha_n)$ of non-negative integers,  the partial derivative
$\frac{\partial^{\alpha_1+\dots+\alpha_n}F}{\partial x_1^{\alpha_1}\dots\partial x_n^{\alpha_n}}$ (which is of order $|\alpha|:=\alpha_1+\dots+\alpha_n$) is denoted by $F_{\mathbf{x}^{\mathbf{\alpha}}}$. For instance,  asking for a polynomial expression of partial derivatives of $F$ to be constant amounts to $n$ algebraic PDEs given by setting the first order partial derivatives of that expression with respect to $x_1,\dots,x_n$ to be zero. 
\item A \textbf{non-vanishing condition} imposed on smooth functions $F=F(x_1,\dots,x_n)$ is asking for these functions not to satisfy a particular algebraic PDE; namely,
\begin{equation}\label{non-vanishing}
\Psi\left(F_{x_1},\dots,F_{x_n},F_{x_1^2}, F_{x_1x_2},\dots,F_{\mathbf{x}^{\mathbf{\alpha}}},\dots\right)\neq 0
\end{equation}
for a non-constant polynomial $\Psi$. Such a condition could be deemed pointwise since if it holds at a point $\mathbf{p}\in\Bbb{R}^n$, it persists throughout a small enough neighborhood. Moreover, \eqref{non-vanishing} determines an open dense subset of the functional space; so, it is satisfied generically.    
\end{itemize}
\end{minipage}
\end{terminology}

\begin{figure}
    \centering
    \includegraphics[height=2.5cm]{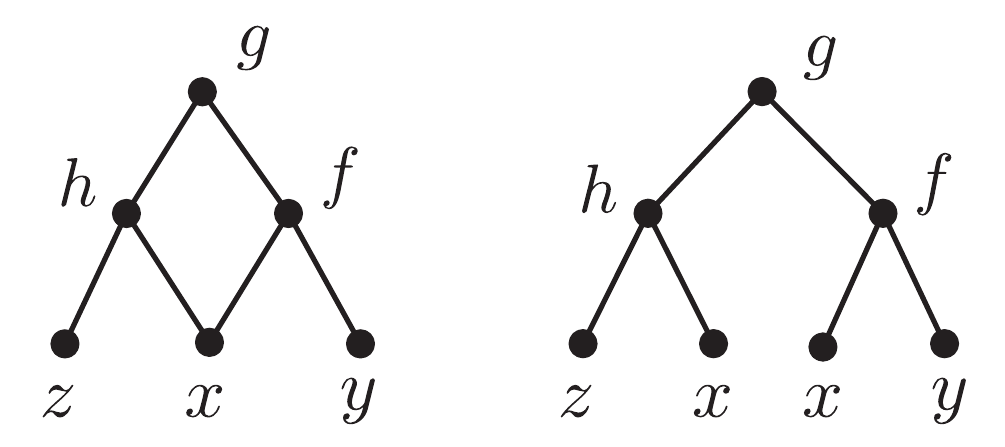}
    \caption{Implementations of superpositions of the form $F(x,y,z)=g(f(x,y),h(x,z))$ (studied in Examples \ref{basic-1'} and \ref{toy example 1}) by three-layer neural networks.}
    \label{fig:basic-2}
\end{figure}

\begin{figure}
    \centering
    \includegraphics[height=4cm]{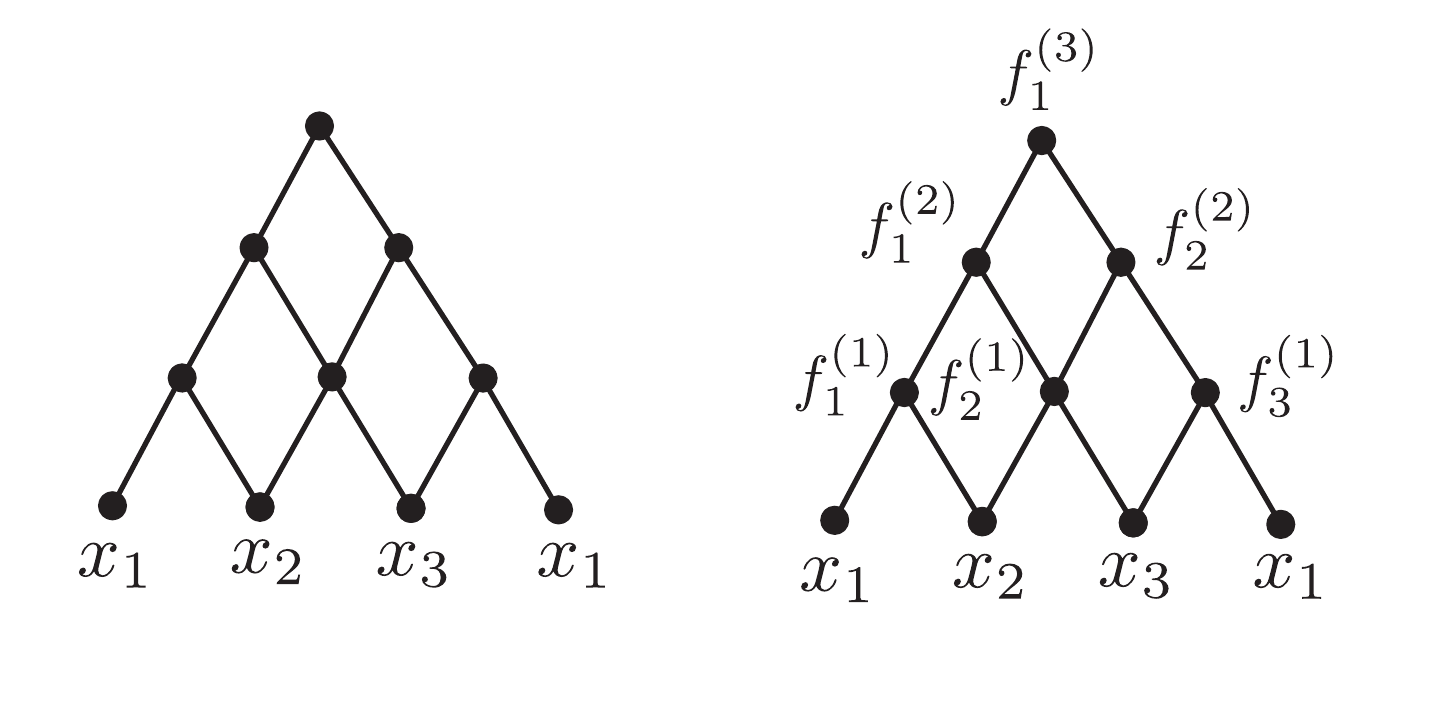}
    \caption{The neural network on the left can compute the hierarchical function 
    $F(x_1,x_2,x_3)=f^{(3)}_1\Bigg(f^{(2)}_1\left(f^{(1)}_1(x_1,x_2),f^{(1)}_2(x_2,x_3)\right),f^{(2)}_2\left(f^{(1)}_2(x_2,x_3),f^{(1)}_3(x_3,x_1)\right)\Bigg)$
    once appropriate functions are assigned to its nodes as on the right.}
    \label{fig:superposition}
\end{figure}

\begin{theorem}\label{main}
Let $\mathcal{N}$ be a feedforward neural network in which the number of inputs to each node is less than the total number of distinct inputs to the network. Superpositions of smooth functions computed by this network satisfy non-trivial constraints in the form of certain algebraic PDEs which are dependent only on the topology of $\mathcal{N}$. 
\end{theorem}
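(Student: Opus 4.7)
The plan is to count dimensions. Fix the network $\mathcal{N}$ with $n$ distinct input coordinates $x_1, \ldots, x_n$, and let $m < n$ be the maximum number of inputs feeding into any single node. For a smooth function $F(x_1, \ldots, x_n)$ computable by $\mathcal{N}$, I realize $F$ as a superposition of smooth constituent functions $f_v$, one for each non-input node $v$. A repeated application of the chain rule then shows that every partial derivative $F_{\mathbf{x}^{\mathbf{\alpha}}}$ of order $|\mathbf{\alpha}| \leq k$ can be written as a universal polynomial, depending only on the combinatorics of $\mathcal{N}$, in the partial derivatives of the constituents $f_v$ of order at most $k$ evaluated at the points determined by the intermediate computations of the network.

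Next I would set up the key counting. The number of partial derivatives of $F$ of order at most $k$ is $\binom{n+k}{n} = \Theta(k^n)$, while the total number of partial derivatives of all constituents of order at most $k$ is bounded by $N \cdot \binom{m+k}{m} = O(k^{n-1})$, where $N$ is the number of non-input nodes. Since $m < n$, for all $k$ sufficiently large the jets of $F$ outnumber the jets of the constituents.

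This discrepancy is what yields the PDEs. I interpret the chain-rule calculation as a polynomial map
\[
\Phi_k : \mathbb{R}^{D_k} \longrightarrow \mathbb{R}^{E_k},
\]
whose source parametrizes the jets of the constituents (which, by Borel's theorem applied at the relevant base points, can be prescribed arbitrarily) and whose target records the jets of $F$, with $D_k = O(k^{n-1}) < E_k = \Theta(k^n)$ for $k$ large. The image of $\Phi_k$ is exactly the set of jets realized by functions computable by $\mathcal{N}$. By the Tarski--Seidenberg theorem (or simply by passing to the Zariski closure), this image lies in a proper algebraic subvariety of $\mathbb{R}^{E_k}$ of dimension at most $D_k$, and such a subvariety is cut out by a non-empty collection of polynomial relations in the coordinates $F_{\mathbf{x}^{\mathbf{\alpha}}}$. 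These are precisely the desired algebraic PDEs; and since $\Phi_k$ is determined entirely by the wiring of $\mathcal{N}$, so are the PDEs.

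The principal obstacle will be verifying the non-triviality claim: that the image of $\Phi_k$ really sits inside a proper subvariety. The dimension count above is what forces this, but its validity rests on two points that must be checked carefully. First, the jets of the constituents at the relevant intermediate points must be genuinely free real parameters; this is where Borel's theorem is crucial, since only the evaluation points (not the Taylor coefficients at those points) are fixed by the previous layers. Second, the inductive construction producing $\Phi_k$ layer by layer must remain polynomial at every stage so that the image stays semi-algebraic and dimension bounds carry through. Once these two points are in hand, the theorem follows from elementary algebraic geometry, with the PDEs manifestly depending only on the topology of $\mathcal{N}$.
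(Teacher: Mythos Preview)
Your proposal is correct and follows essentially the same route as the paper: express the $k$-jet of $F$ via the chain rule as a polynomial map from a space of dimension $O(k^{n-1})$ (the constituent derivatives) to one of dimension $\Theta(k^n)$ (the derivatives of $F$), and conclude that the image lies in a proper subvariety. The paper's version is a bit leaner: it simply treats the constituent partial derivatives (evaluated wherever the chain rule dictates) as formal indeterminates $t_1,\dots,t_m$ and invokes the elementary fact that $\ell>m$ polynomials in $m$ indeterminates are algebraically dependent (Lemma~\ref{dependence}); hence your ``principal obstacles''---Borel's theorem for freeness of the source jets, and Tarski--Seidenberg for the image---are unnecessary, since the resulting polynomial identity $\Phi\big(F_{\mathbf{x}^{\alpha}}\big)=0$ holds \emph{identically} in the $t_i$'s and therefore for every $F$ computable by $\mathcal{N}$, with no need to know that the source parameters are genuinely free.
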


In the context of deep learning, the functions applied at each node are in the form of  
\begin{equation}\label{activation}
\mathbf{y}\mapsto\sigma\left(\langle\mathbf{w},\mathbf{y}\rangle\right);
\end{equation}
that is, they are obtained by applying an activation function $\sigma$  to a linear functional $\mathbf{y}\mapsto\langle\mathbf{w},\mathbf{y}\rangle$. Here, as usual, the bias term is absorbed into the weight vector. The bias term could also be excluded via composing $\sigma$ with a translation since 
throughout our discussion the only requirement for a function $\sigma$ to be the activation function  of a node is smoothness; and activation functions are allowed to vary from a node to another.
In our setting, $\sigma$ in \eqref{activation} could be a \textit{polynomial} or a \textit{sigmoidal} function such as \textit{hyperbolic tangent} or \textit{logistic} functions, but not \textit{ReLU} or \textit{maxout} activation functions. 
We shall study functions computable by neural networks as either superpositions of arbitrary smooth functions or as superpositions of functions of the form  \eqref{activation} which is a  more limited regime.  Indeed, the question of how well arbitrary  compositional functions -- which are the subject of Theorem \ref{main} -- may be approximated by a deep network has been studied in the literature  \cite{mhaskar2017and,poggio2017and}.\\
\indent
In order to guarantee the existence of PDE constraints for superpositions, Theorem \ref{main} assumes a condition on the topology of the network. However, the theorem below states that by restricting the functions that can appear in the superposition, one can still obtain PDE constraints even for a fully connected multi-layer perceptron.

\begin{theorem}\label{main'}
Let $\mathcal{N}$ be an arbitrary feedforward neural network with at least two distinct inputs, with smooth functions of the form \eqref{activation} applied at its nodes.  Any function computed by this network satisfies non-trivial constraints in the form of certain algebraic PDEs which are dependent only on the topology of $\mathcal{N}$. 
\end{theorem}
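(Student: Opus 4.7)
The plan is to prove existence of algebraic PDEs by a dimension count on jet spaces. The decisive feature of the activation form $\mathbf{y}\mapsto\sigma(\langle\mathbf{w},\mathbf{y}\rangle)$ is that, irrespective of the arity of a node, the $k$-th Taylor data at that node is controlled by only finitely many scalars --- the weights together with the $k+1$ Taylor coefficients of the univariate $\sigma$ --- while the ambient jet space of smooth functions in $n\geq 2$ variables has dimension growing polynomially of degree $n$ in $k$. This asymmetry forces the parameter-to-jet map to miss a proper algebraic subvariety of the target, and any non-trivial polynomial vanishing on that subvariety supplies the required PDE.

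Fix a jet order $k$. Traversing $\mathcal{N}$ from the input layer upward, the $k$-jet $J^kF(\mathbf{p})\in\mathbb{R}^{\binom{n+k}{k}}$ at any $\mathbf{p}\in\mathbb{R}^n$ is computed inductively: if $L_v=\sum_{u\to v}w_{u,v}f_u$ and $f_v=\sigma_v(L_v)$ at a node $v$, then $J^kL_v(\mathbf{p})$ is linear in the weights and in the jets of the predecessors, and $J^kf_v(\mathbf{p})$ is polynomial in $J^kL_v(\mathbf{p})$ and in the $(k+1)$-entry Taylor jet of $\sigma_v$ at the scalar $L_v(\mathbf{p})$ by Fa\`a di Bruno's formula. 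Since any prescribed finite Taylor jet of a smooth univariate function at a given point can be realized (Borel's lemma), the activation jets furnish $N(k+1)$ independent free parameters, where $N$ is the number of nodes of $\mathcal{N}$. Bundling these with the $W$ weights of $\mathcal{N}$ and with the coordinates of $\mathbf{p}$ yields a polynomial map
\[
\Pi_k:\ \mathbb{R}^W\times\mathbb{R}^{N(k+1)}\times\mathbb{R}^n\ \longrightarrow\ \mathbb{R}^{\binom{n+k}{k}}
\]
whose image contains every $k$-jet $J^kF(\mathbf{p})$ arising from $\mathcal{N}$, and which depends only on the topology of $\mathcal{N}$.

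The Zariski closure of $\operatorname{Im}(\Pi_k)$ has dimension at most $W+N(k+1)+n$, which is linear in $k$, whereas the target dimension $\binom{n+k}{k}\geq\binom{k+2}{2}=\tfrac{(k+1)(k+2)}{2}$ is at least quadratic in $k$ since $n\geq 2$. Hence for all sufficiently large $k$ the image lies in a proper algebraic subvariety $\mathcal{V}_k\subsetneq\mathbb{R}^{\binom{n+k}{k}}$, and any non-zero polynomial in its vanishing ideal is an algebraic PDE of the form \eqref{alg PDE} satisfied identically by $F$. The main technical difficulty is ensuring that the activation jets across distinct nodes can be treated as genuinely independent parameters: for fixed weights and fixed $\mathbf{p}$, the sample scalars $L_v(\mathbf{p})$ are not free but prescribed by the forward pass, so one must verify (e.g.\ by a perturbation argument in $\mathbf{p}$ and the weights, or by organizing the induction on depth carefully) that their joint configuration still permits independent jet prescriptions; granted this, the dimension count completes the proof.
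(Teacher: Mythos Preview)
Your approach is essentially the paper's: both express the order-$\leq k$ partial derivatives of $F$ as polynomials in the weights (finitely many, say $W=C$) and in the derivatives $\sigma_v^{(j)}$ of the univariate activations evaluated at the intermediate scalars (at most $N(k+1)$ of these), then invoke Lemma~\ref{dependence} to conclude that a polynomial map from a source of dimension linear in $k$ into a target of dimension $\binom{n+k}{k}$ (degree $n\geq 2$ in $k$) must have image contained in a proper subvariety once $k$ is large. The paper's count $C+Nr$ and yours $W+N(k{+}1)+n$ differ only by constants.

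The concern in your final paragraph is not a genuine gap, and the invocation of Borel's lemma is unnecessary. You do \emph{not} need the activation-jet parameters to be ``genuinely independent'' or freely prescribable. All that is required for the argument is that $\Pi_k$ be a well-defined polynomial map and that every actual jet $J^kF(\mathbf{p})$ lie in its image. The first holds because Fa\`a di Bruno produces a fixed polynomial formula valid for arbitrary real inputs; the second holds because, given an $F$ computed by $\mathcal{N}$ and a point $\mathbf{p}$, one simply plugs the actual weights and the actual numbers $\sigma_v^{(j)}\bigl(L_v(\mathbf{p})\bigr)$ into $\Pi_k$. Whether those numbers could have been varied independently of one another is immaterial: the bound $\dim\overline{\operatorname{Im}(\Pi_k)}\leq\dim(\text{source})$ uses only the dimension of the domain, not any surjectivity or genericity of the parametrization. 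So your proof is complete at the end of the second paragraph; the third can be dropped. (The factor $\mathbb{R}^n$ for $\mathbf{p}$ is likewise superfluous: once the values $\sigma_v^{(j)}\bigl(L_v(\mathbf{p})\bigr)$ are treated as formal indeterminates, the chain-rule polynomial no longer contains $\mathbf{p}$ explicitly.)
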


\begin{example}\label{basic-2}
As the simplest example of PDE constraints imposed on compositions of functions of the form \eqref{activation}, recall that d'Alembert's solution to the wave equation 
\begin{equation}\label{wave}
u_{tt}=c^2u_{xx}
\end{equation}
is famously given by superpositions of the form $f(x+ct)+g(x-ct)$. This function can be implemented by a network with two inputs $x,t$ and with one hidden layer in which the activation functions $f,g$ are applied; see Figure \ref{fig:basic-2s}. Since we wish for a PDE that works for this architecture universally, we should get rid of $c$. The PDE \eqref{wave} may be written as $\frac{u_{tt}}{u_{xx}}=c^2$; that is to say, the ratio  $\frac{u_{tt}}{u_{xx}}$ must be constant. Hence, for our purposes the wave equation should be written as $\left(\frac{u_{tt}}{u_{xx}}\right)_x=\left(\frac{u_{tt}}{u_{xx}}\right)_t=0$, or equivalently
$$
u_{xtt}u_{xx}-u_{tt}u_{xxx}=0,\quad u_{ttt}u_{xx}-u_{tt}u_{xxt}=0.
$$
A crucial point to notice is that the aforementioned constant $c^2$ is non-negative, thus an inequality of the form $\frac{u_{xx}}{u_{tt}}\geq 0$ or $u_{xx}u_{tt}\geq 0$ is imposed as well. 
In Example \ref{network1}, we visit upon this network again and we shall study functions of the form
\begin{equation}\label{2var_form}
F(x,t)=\sigma(a'' f(ax+bt)+b'' g(a'x+b't))    
\end{equation}
via a number of equalities and inequalities involving partial derivatives of $F$.
\begin{figure}
    \centering
    \includegraphics[height=3cm]{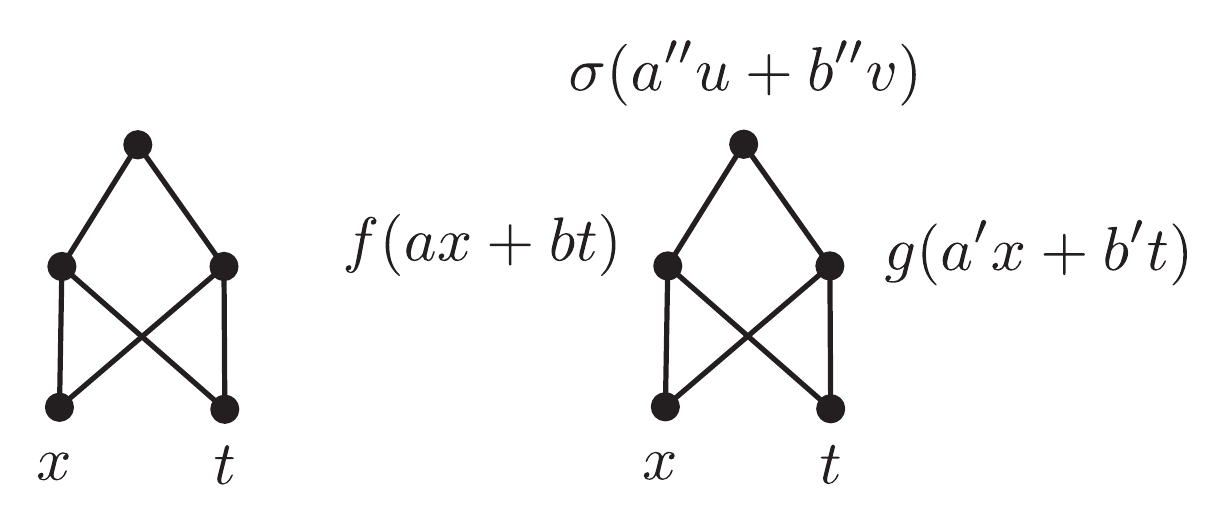}
    \caption{The neural network on the left can compute the function 
    $F(x,t)=\sigma(a'' f(ax+bt)+b'' g(a'x+b't))$ once,  as on the right, the activation functions $\sigma,f,g$ and appropriate weights are assigned to the nodes. Such functions are the subject of Examples \ref{basic-2} and \ref{network1}.}
    \label{fig:basic-2s}
\end{figure}
\end{example}

The preceding example suggests that smooth functions implemented by a neural network may be required to obey a non-trivial 
\textit{\textbf{algebraic} \textbf{p}artial \textbf{d}ifferential \textbf{i}nequality} (\textbf{algebraic PDI}). So it is convenient to have the following set-up of terminology. 
\begin{terminology}
\begin{minipage}[t]{\linegoal}
\begin{itemize}
\item An \textbf{algebraic PDI} is an inequality of the form 
\begin{equation}\label{alg PDI}
\Theta\left(F_{x_1},\dots,F_{x_n},F_{x_1^2}, F_{x_1x_2},\dots,F_{\mathbf{x}^{\mathbf{\alpha}}},\dots\right)>0
\end{equation}
\end{itemize}
involving partial derivatives (up to a certain order) where $\Theta$ is a real polynomial.
\end{minipage}
\end{terminology}

\begin{remark}
Without any loss of generality, we assume that the PDIs are strict since a non-strict one such as  $\Theta\geq 0$ could be written as the union of $\Theta>0$ and the algebraic PDE $\Theta=0$.
\end{remark}

Theorem \ref{main} and Example \ref{basic-1} deal with superpositions of arbitrary smooth functions while Theorem \ref{main'} and Example \ref{basic-2} are concerned with superpositions of a specific class of smooth functions, functions of the form \eqref{activation}. In view of the necessary PDE constraints appeared in both situations, the following question then arises: Are there sufficient conditions in the form of algebraic PDEs and PDIs that guarantee a smooth function can be represented, at least locally, by the neural network in question? 

\begin{conjecture}\label{conjecture}
Let $\mathcal{N}$ be a feedforward neural network whose inputs are labeled by the coordinate functions $x_1,\dots,x_n$.
Suppose we are working in the setting of one of Theorems \ref{main} or \ref{main'}.
Then there exist 
\begin{itemize}
\item finitely many non-vanishing conditions $\left\{\Psi_i\left(\left(F_{\mathbf{x}^{\mathbf{\alpha}}}\right)_{|\alpha|\leq r}\right)\neq 0\right\}_{i}$,
\item finitely many algebraic PDEs $\left\{\Phi_j\left(\left(F_{\mathbf{x}^{\mathbf{\alpha}}}\right)_{|\alpha|\leq r}\right)=0\right\}_{j}$, 
\item finitely many algebraic PDIs $\left\{\Theta_k\left(\left(F_{\mathbf{x}^{\mathbf{\alpha}}}\right)_{|\alpha|\leq r}\right)>0\right\}_{k}$;
\end{itemize}
with the following property: For any arbitrary point $\mathbf{p}\in\Bbb{R}^n$, the space of smooth functions $F=F(x_1,\dots,x_n)$ defined in a vicinity\footnote{To be mathematically precise, the open neighborhood of $\mathbf{p}$ on which $F$ admits a compositional representation in the desired form may be dependent on $F$ and $\mathbf{p}$. So Conjecture \ref{conjecture} is local in nature and must be understood as a statement about \textit{function germs}.} of $\mathbf{p}$
which satisfy $\Psi_i\neq 0$ at $\mathbf{p}$ and are computable by $\mathcal{N}$ (in the sense of the regime under consideration) is non-vacuous and is characterized by PDEs $\Phi_j=0$ and PDIs $\Theta_k>0$.
\end{conjecture}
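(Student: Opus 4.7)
The plan is to prove the conjecture by induction on the topological depth of the network $\mathcal{N}$, leveraging local inversion in the spirit of Example \ref{basic-1}. The necessity of the PDE constraints is already supplied by Theorems \ref{main} and \ref{main'}; the conjecture's real content is the converse, so I focus on showing that a smooth $F$ satisfying the enumerated PDEs, PDIs, and non-vanishing conditions at a point $\mathbf{p}$ admits a local factorization implementable by $\mathcal{N}$.

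I would handle the tree architecture case first, generalizing the toy calculation of Example \ref{basic-1} and the forthcoming Theorems \ref{main-tree} and \ref{main-tree-activation}. If the output node computes $g(f_1, \ldots, f_k)$ where the subtrees at its children feed on disjoint variable sets $S_1, \ldots, S_k$, the architecture-dependent PDE constraints should assert precisely that, for each $i$, the ratios of partial derivatives of $F$ taken with respect to variables in $S_i$ are independent of the variables in $\bigcup_{j\neq i} S_j$. An appeal to the integrability result of Theorem \ref{integrability} then produces candidate smooth functions $f_i$ of $S_i$ whose gradient directions match those of $F$ along $S_i$; the non-vanishing conditions $\Psi_i\neq 0$ at $\mathbf{p}$ guarantee that the gradients $\nabla f_1, \ldots, \nabla f_k$ are linearly independent there, so the Implicit Function Theorem yields a smooth $g$ with $F = g(f_1, \ldots, f_k)$ in a neighborhood of $\mathbf{p}$. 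Recursion into each subtree, together with the inductive hypothesis applied to each $f_i$ and to $g$, closes the argument. For the activation-function regime of Theorem \ref{main'}, the extra requirement that each internal function factor through a linear functional $\langle \mathbf{w}, \cdot\rangle$ translates to additional PDEs saying that the ratios of partials are not merely independent of some variables but are globally constant; this strengthens but does not alter the extraction step.

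The main obstacle is the non-tree case, illustrated in Example \ref{basic-1'}, where constituent functions share inputs and the chain rule produces intertwined identities such as \eqref{auxiliary7} rather than clean single-summand expressions. Here the PDE constraints are no longer simple independence statements but encode linear dependence relations among higher-order partials, obtained from the quadratic-versus-linear growth heuristic sketched after \eqref{auxiliary7}. Reconstructing the intermediate functions then requires solving an overdetermined system of first-order PDEs at each hidden layer, whose compatibility is not automatic and must itself be extracted from the full suite of algebraic PDEs and non-vanishing conditions. I would attempt a Frobenius-type integrability argument: show that the distribution cut out by the linear relations has the correct dimension (controlled by the non-vanishing conditions) and is involutive (controlled by the PDEs), and peel off hidden nodes one at a time from the output layer back toward the inputs. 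The subtle point, and in my view the true reason this remains a conjecture rather than a theorem, is checking that the inductively extracted intermediate functions inherit \emph{exactly} the PDE constraints appropriate to the sub-architecture they should implement, rather than some strictly stronger system; a careful analysis of how the ideals generated by the postulated PDE polynomials restrict to subnetworks seems essential, and is where I would expect the bulk of the technical work to lie.
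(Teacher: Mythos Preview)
The statement is a \emph{conjecture}, and the paper does not prove it. The paper explicitly presents it as open and only verifies it for specific architectures: trees with distinct inputs (Theorems \ref{main-tree} and \ref{main-tree-activation}), two families of trees with repeated inputs (Propositions \ref{form1} and \ref{form2}), and a few small networks in the examples of \S\ref{toy examples}. There is therefore no ``paper's own proof'' to compare your proposal against.

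That said, your plan for the tree case is essentially the argument the paper \emph{does} carry out in \S\ref{generalization-tree1}: induct on the tree structure, use the PDE constraints to show that appropriate ratios of partials depend only on the variables of a subtree, invoke Theorem \ref{integrability} to produce the intermediate functions $G_s$, and express $\nabla F$ in their span. So for trees with distinct inputs your outline is correct and matches the paper. You are also right that the activation-function regime adds constraints forcing certain ratios to be constant or to split multiplicatively (Lemma \ref{split}), and the paper handles this in the proof of Theorem \ref{main-tree-activation}.

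Where your proposal does not constitute a proof is exactly where you say it doesn't: the general non-tree case. Your suggestion to ``peel off hidden nodes one at a time'' via a Frobenius-type argument is the natural idea, and the paper's Examples \ref{toy example 1}, \ref{toy example 2}, and \ref{network2'} execute something like this in individual cases by forming overdetermined linear systems in the unknown coefficient functions (such as $A,B$ in \eqref{auxiliary7}), solving via Cramer's rule under a non-vanishing determinant hypothesis, and then imposing the residual independence conditions as the characteristic PDEs. But the paper gives no uniform mechanism for doing this across all architectures, and your proposal does not supply one either. The difficulty you flag---that the extracted intermediate functions must inherit precisely the PDE constraints of the sub-architecture, no more and no less---is real, and neither you nor the paper resolves it in general. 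Your final paragraph is an accurate assessment of why the statement remains a conjecture; it is not a gap in your reasoning so much as an honest acknowledgment that the proposal is a research plan rather than a proof.
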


To motivate the conjecture, notice that it claims the existence of functionals 
$$\left\{F\mapsto\Psi_i\left(\left(F_{\mathbf{x}^{\mathbf{\alpha}}}\right)_{|\alpha|\leq r}\right)\right\}_{i},\,
\left\{F\mapsto\Phi_j\left(\left(F_{\mathbf{x}^{\mathbf{\alpha}}}\right)_{|\alpha|\leq r}\right)\right\}_{j},\,
\left\{F\mapsto\Theta_k\left(\left(F_{\mathbf{x}^{\mathbf{\alpha}}}\right)_{|\alpha|\leq r}\right)\right\}_{k}
$$
which are polynomial expressions of partial derivatives, and hence continuous in the $C^r$-norm\footnote{Convergence in the $C^r$-norm is defined as the uniform convergence of the function and its partial derivatives up to order $r$.}, such that in the space of functions computable by $\mathcal{N}$, the open dense subset given by 
$\left\{\Psi_i\neq 0\right\}_{i}$ (see the remark below) can be described in terms of finitely many equations and inequalities as the locally closed subset 
$
\left\{\Phi_j=0\right\}_{j}\bigcup\left\{\Theta_k>0\right\}_{k}.
$
(Also see Corollary \ref{not dense in C^r}.) The usage of $C^r$-norm here is novel. For instance, with respect to $L^p$-norms, the space of functions computable by $\mathcal{N}$ lacks such a description, and often has undesirable properties like non-closedness \cite{petersen2020topological}. 
\begin{remark}\label{generic}
In Conjecture \ref{conjecture}, the set formed by functions excluded by the equations $\Psi_i=0$ is meager in both analytic and algebraic senses. 
The set cut off by the equations is a closed and (due to the term ``non-vacuous'' appearing in the conjecture) proper subset of the space of functions computable by $\mathcal{N}$, and a function implemented by $\mathcal{N}$ at which a $\Psi_i$ vanishes  could  be perturbed to another computable function at which all of $\Psi_i$'s are non-zero. In the algebraic setting, if the \textit{variety} ascertained by polynomials computable by $\mathcal{N}$ in an ambient polynomial space is irreducible, the non-empty subset defined by  $\left\{\Psi_i\neq 0\right\}_i$ is
\textit{Zariski open} and thus dense in the analytic topology. (See Appendix \ref{Background} for the algebraic geometry terminology.)  
\end{remark}

Conjecture \ref{conjecture} is settled in \cite{Farhoodi2019OnFC} for trees (a particular type of architectures) with distinct inputs, a situation in which no PDI is required and the inequalities should be taken to be trivial. Throughout the paper,  the conjecture above will be established for a number of architectures; in particular,  we shall  characterize \textit{tree functions} (cf. Theorems \ref{main-tree} and \ref{main-tree-activation} below).

\subsection{Related work}\label{Literature review}
There is an extensive literature on the \textit{expressive power} of neural networks. Although shallow networks with sigmoidal activation functions can approximate any continuous function on compact sets \cite{cybenko1989approximation,hornik1989multilayer,hornik1991approximation,mhaskar1996neural}, this cannot be achieved without the hidden layer getting exponentially large \cite{eldan2016power,telgarsky2016benefits,mhaskar2017and,poggio2017and}. Many articles thus try to demonstrate how the expressive power is affected by depth. This line of research draws on a number of different scientific fields including algebraic topology (\cite{bianchini2014complexity}), algebraic geometry (\cite{kileel2019expressive}),  dynamical systems (\cite{chatziafratis2019depth}), tensor analysis (\cite{cohen2016expressive}), Vapnik–Chervonenkis theory (\cite{bartlett1999almost}) and statistical physics (\cite{lin2017does}). One approach is to argue that deeper networks are able to approximate/represent functions of higher complexity after defining a ``complexity measure'' \cite{bianchini2014complexity,montufar2014number,poole2016exponential,telgarsky2016benefits,raghu2017expressive}. Another approach  more in line with this paper is to use the ``size'' of an associated functional space as a measure of representation power. This point of view is adapted in \cite[\S6]{Farhoodi2019OnFC} by enumerating Boolean functions, and in \cite{kileel2019expressive} by regarding dimensions of functional varieties as such a measure. 

A central result in the mathematical study of superpositions of functions is the celebrated \textit{Kolmogorov-Arnold Representation Theorem} \cite{MR0111809} which resolves (in the context of continuous functions) the $13^{\rm{th}}$ problem on Hilbert's famous list of $23$ major mathematical problems \cite{hilbert1902mathematical}. The theorem states that every continuous function $F(x_1,\dots,x_n)$ on the closed unit cube may be written as 
\begin{equation}\label{Kolmogorov-Arnold}
F(x_1,\dots,x_n)=\sum_{i=1}^{2n+1}f_i\left(\sum_{j=1}^n\phi_{i,j}(x_j)\right)
\end{equation}
for suitable continuous univariate functions $f_i$, $\phi_{i,j}$ defined on the unit interval. See \cite[chap. 1]{MR0237729} or \cite{vitushkin2004hilbert} for a historical account. In more refined versions of this theorem (\cite{MR210852,MR0213785}), the outer functions $f_i$ are arranged to be the same; and the inner ones  $\phi_{i,j}$ are be taken to be in the form of $\lambda_j\phi_i$ with $\lambda_j$'s and $\phi_i$'s independent of $F$.  Based on the existence of such an improved representation, Hecht-Nielsen argued that any continuous function $F$ can be implemented by a three-layer neural network whose weights and activation functions are determined by the representation \cite{hecht1987kolmogorov}.  
On the other hand, it is well known that even when $F$ is smooth one cannot arrange for functions appearing in the representation \eqref{Kolmogorov-Arnold} to be smooth  \cite{MR0165138}. As a matter of fact, there exist continuously differentiable functions  of three variables which cannot be represented as sums of superpositions of the form $g\left(f(x,y),z\right)$ with $f$ and $g$ being continuously differentiable as well (\cite{MR0062212}) whereas in the continuous category, one can write any trivariate continuous functions as a sum of nine superpositions of the form $g\left(f(x,y),z\right)$ (\cite{arnold2009representation}). Due to this emergence of non-differentiable functions, it has been argued that Kolmogorov-Arnold's Theorem is not useful for obtaining exact representations of functions via networks \cite{girosi1989representation}, although it may be used for approximation \cite{kuurkova1991kolmogorov,kuurkova1992kolmogorov}. More on algorithmic aspects of the theorem and its applications to the network theory could be found in \cite{brattka2007hilbert}.

Focusing on a superposition  
\begin{equation}
F=f^{(L)}_1\Bigg(f^{(L-1)}_1\left(f^{(L-2)}_{a_1}(\dots),\dots\right),\dots,f^{(L-1)}_j\left(f^{(L-2)}_{a_j}(\dots),\dots\right),\dots,f^{(L-1)}_{N_{L-1}}\left(f^{(L-2)}_{a_{N_{L-1}}}(\dots),\dots\right)\Bigg)    
\end{equation}
of smooth functions (which can be computed by a neural network as in Figure \ref{fig:superposition}), the chain rule provides descriptions for partial derivatives of $F$ in terms of partial derivatives of functions $f^{(i)}_j$ that constitute the superposition. The key insight behind the proof of Theorem \ref{main} is that when the former functions have fewer variables  compared to $F$, one may eliminate the derivatives of $f^{(i)}_j$'s to obtain relations among partial derivatives of $F$. This idea of elimination has been utilized in \cite{MR630992,MR609048} to prove the existence of  \textit{universal} algebraic differential equations whose $C^\infty$ solutions are dense in the space of continuous functions. The fact that there will be constraints imposed on derivatives of a function $F$ which is written as a superposition of differentiable functions was employed by  Hilbert himself to argue that  certain analytic functions of three variables are not superpositions of analytic functions of two variables \cite[p. 28]{arnold2009representation1}; and by Ostrowski to exhibit an analytic bivariate function that cannot be represented as a superposition of univariate smooth functions and multivariate algebraic functions due to the fact that it does not satisfy any non-trivial algebraic PDE \cite[p. 14]{vitushkin2004hilbert}, \cite{ostrowski1920dirichletsche}. The novelty of our approach is to adapt this point of view to demonstrate theoretical limitations of smooth functions that neural networks  compute either as a superposition as in Theorem \ref{main} or as compositions of functions of the form \eqref{activation} as in Theorem \ref{main'}; and to try to characterize these functions via calculating PDE constraints that are sufficient too (cf. Conjecture \ref{conjecture}). Furthermore, necessary PDE constraints enable us to easily exhibit functions that cannot be computed by a particular architecture; see Example \ref{basic-1}. This is reminiscent of the famous Minsky XOR Theorem \cite{minsky2017perceptrons}. An interesting non-example from the literature is $F(x,y,z)=xy+yz+zx$ that cannot be written as a superposition of the form \eqref{toy1} even in the continuous category \cite{MR0015435,MR541075,MR602748,MR606252,arnold2009representation1}. 

To the best of our knowledge, the closest mentions of a characterization of a class of superpositions by necessary and sufficient PDE constraints in the literature are papers \cite{MR541075,MR606252} of R. C. Buck. The first one (along with its earlier version \cite{buck1976approximate}) characterizes superpositions of the form $g(f(x,y),z)$ in a similar fashion as Example \ref{basic-1}. Also in those papers, superpositions such as $g(f(x,y),h(x,z))$ (appeared in Example \ref{basic-1'}) are discussed although only the existence of necessary PDE constraints is shown; see \cite[Lemma 7]{MR541075} and \cite[p. 141]{MR606252}. We shall exhibit a PDE characterization for superpositions of this form in Example \ref{toy example 1}. The aforementioned papers also characterize sufficiently differentiable \textit{nomographic} functions of the form $\sigma(f(x)+g(y))$ and $\sigma(f(x)+g(y)+h(z))$.

A special class of neural network architectures is provided by rooted trees where any output of a layer is passed to exactly one node from one of the layers above; see Figure \ref{fig:terminology}.  Investigating functions computable by trees is of neuroscientific interest because the morphology of the dendrites of a neuron processes information through a tree which is often binary \cite{kollins2005branching, gillette2015topological}. Assuming that the inputs to a tree are distinct, in our previous work \cite{Farhoodi2019OnFC} we have completely characterized the corresponding superpositions through formulating necessary and sufficient PDE constraints; a result which answers Conjecture \ref{conjecture} in positive for such architectures. 
\begin{remark}
The characterization suggested by the theorem below is a generalization of Example \ref{basic-1} which was concerned with smooth superpositions of the form \eqref{3var_form}. The characterization therein of such superpositions as solutions of PDE \eqref{auxiliary-basic-constraint'} has also appeared in paper \cite{MR541075} which we were not aware of while writing \cite{Farhoodi2019OnFC}. 
\end{remark}

\begin{figure}
    \centering
    \includegraphics[height=3cm]{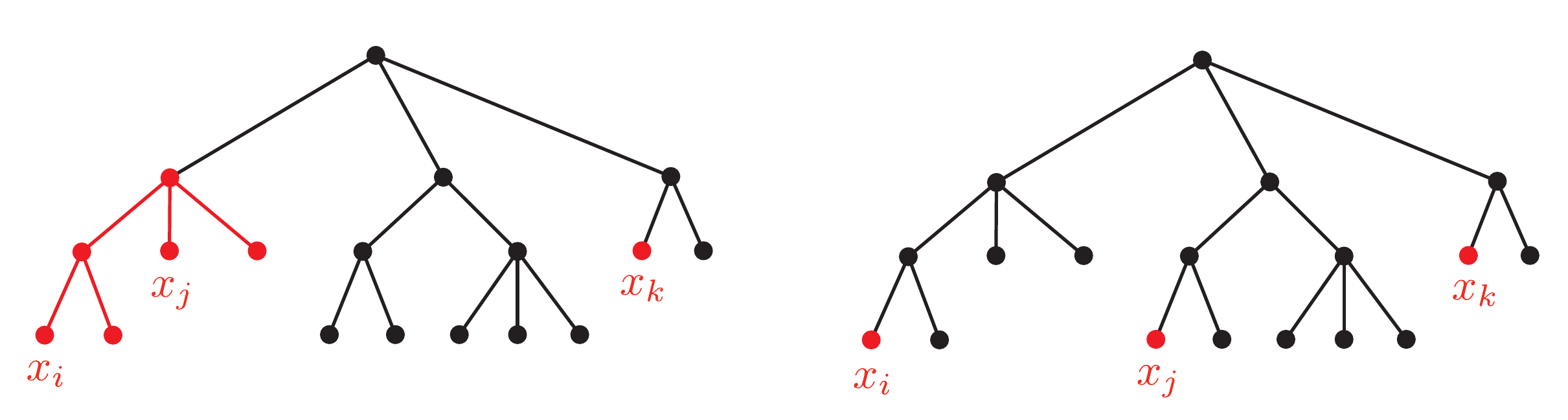}
    \caption{Theorems \ref{main-tree} and \ref{main-tree-activation} impose constraints \eqref{temp1} and \eqref{temp2} for any three leaves $x_i$, $x_j$ and $x_k$. In the former theorem, the constraint should hold whenever (as on the left) there exists a  rooted full sub-tree separating $x_i$ and $x_j$ from $x_k$ while in the latter theorem, the constraint is imposed for certain other triples as well (as on the right).}
    \label{fig:xixjxk}
\end{figure}

\begin{figure}
    \centering
    \includegraphics[height=3cm]{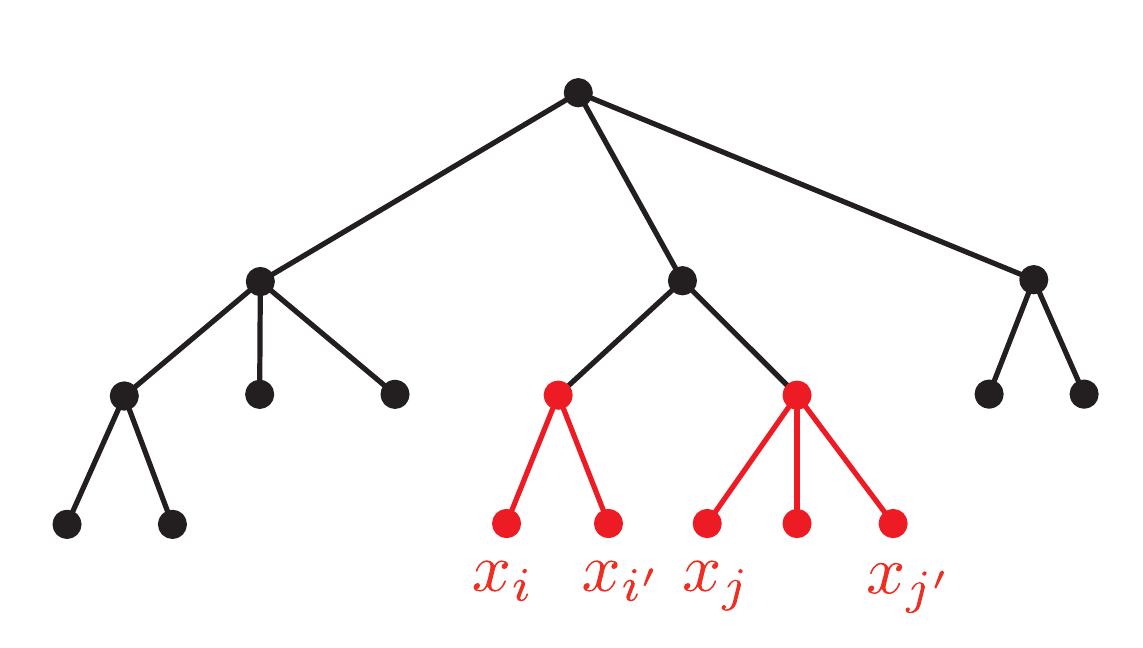}
    \caption{Theorem \ref{main-tree-activation} imposes constraint \eqref{temp3} for any four leaves $x_i,x_{i'}$ and $x_j,x_{j'}$ that belong to two different rooted full  sub-trees emanating from a node.}
    \label{fig:xixj}
\end{figure}

\begin{theorem}[\cite{Farhoodi2019OnFC}]\label{main-tree}
Let $\mathcal{T}$ be a rooted tree with $n$ leaves  that are labeled by the coordinate functions $x_1,\dots,x_n$. 
Let $F=F(x_1,\dots,x_n)$ be a smooth function implemented on this tree. 
Then for  any three leaves of $\mathcal{T}$ corresponding to variables $x_i,x_j,x_k$ of $F$ with the property that there is a (rooted full) sub-tree of $\mathcal{T}$ containing the leaves $x_i,x_j$ while missing the leaf $x_k$ (see Figure \ref{fig:xixjxk}), $F$ must satisfy
\begin{equation}\label{temp1}
F_{x_ix_k}F_{x_j} = F_{x_jx_k}F_{x_i}.
\end{equation}
Conversely, a smooth function $F$ defined in a neighborhood of a point $\mathbf{p}\in\Bbb{R}^n$ can be implemented by the tree $\mathcal{T}$ provided that \eqref{temp1} holds for any triple $(x_i,x_j,x_k)$ of its variables with the above property;  and moreover, the non-vanishing conditions below are satisfied: 
\begin{itemize}
\item [--] for any leaf $x_i$ with siblings either $F_{x_i}(\mathbf{p})\neq 0$ or there is a sibling leaf $x_{i'}$ with $F_{x_{i'}}(\mathbf{p})\neq 0$.
\end{itemize}
\end{theorem}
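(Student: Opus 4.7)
The plan is to prove necessity by a direct chain rule computation and sufficiency by induction on the number of internal nodes of $\mathcal{T}$. At each inductive step I would peel off a deepest internal node whose children are all leaves and use Theorem \ref{integrability} to recover its computed function from the ratios of first partials of $F$. This is an iterated version of the construction in Example \ref{basic-1}.

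For necessity, let $F$ be implemented on $\mathcal{T}$ and let $x_i, x_j, x_k$ be leaves with a rooted full sub-tree $\mathcal{S}$ containing $x_i, x_j$ but not $x_k$, and let $h$ denote the function computed at the root of $\mathcal{S}$. Because the leaves of $\mathcal{T}$ are distinct, $x_i$ and $x_j$ only occur inside $\mathcal{S}$, so $F$ factors as $G(h, \mathbf{z})$, where $\mathbf{z}$ collects the remaining inputs to the nodes on the path from $\mathcal{S}$ up to the root, none of which involves $x_i$ or $x_j$. The chain rule yields $F_{x_i} = G_h\, h_{x_i}$ and $F_{x_j} = G_h\, h_{x_j}$, so wherever $F_{x_j} \ne 0$ the ratio $F_{x_i}/F_{x_j} = h_{x_i}/h_{x_j}$ is independent of $x_k$. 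Differentiating with respect to $x_k$ and clearing denominators produces \eqref{temp1} as a polynomial identity that then persists globally.

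For sufficiency, induct on the number of internal nodes of $\mathcal{T}$. In the base case $\mathcal{T}$ has a single internal node whose children are all the leaves, and $F$ itself may be taken as its activation. For the inductive step, choose an internal node $v$ all of whose children are leaves, which exists by taking the parent of a deepest leaf, and let its leaf children be $x_{i_1}, \ldots, x_{i_k}$. If $k = 1$ any univariate $h$ will do; otherwise the non-vanishing hypothesis applied to these sibling leaves produces some $r$ with $F_{x_{i_r}}(\mathbf{p}) \ne 0$. For any leaf $x_\ell$ of $\mathcal{T}$ outside $\{x_{i_1}, \ldots, x_{i_k}\}$, the sub-tree rooted at $v$ separates the pair $(x_{i_j}, x_{i_r})$ from $x_\ell$, so \eqref{temp1} forces each ratio $F_{x_{i_j}}/F_{x_{i_r}}$ to be independent of $x_\ell$ and hence to depend only on $x_{i_1}, \ldots, x_{i_k}$. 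Theorem \ref{integrability} then supplies a smooth $h(x_{i_1}, \ldots, x_{i_k})$, defined near $\mathbf{p}$, whose ratios of first partials match those of $F$, and the Implicit Function Theorem yields a smooth $G$ with $F = G(h, x_{j_1}, \ldots, x_{j_{n-k}})$ locally. Letting $\mathcal{T}'$ denote the tree obtained from $\mathcal{T}$ by removing $x_{i_1}, \ldots, x_{i_k}$ and relabeling $v$ as a new leaf, $\mathcal{T}'$ has one fewer internal node, and the inductive hypothesis applied to $G$ on $\mathcal{T}'$ finishes the argument.

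The main obstacle is the transfer of both the PDE constraints and the non-vanishing conditions from $(F, \mathcal{T})$ to $(G, \mathcal{T}')$. Differentiating $F = G(h, x_{j_1}, \ldots, x_{j_{n-k}})$ gives $F_{x_{j_s}} = G_{x_{j_s}}$ and $F_{x_{i_t}} = G_h \cdot h_{x_{i_t}}$, so any triple in $\mathcal{T}'$ separated by one of its rooted sub-trees lifts to a triple in $\mathcal{T}$ separated by the corresponding sub-tree, and the constraint \eqref{temp1} descends accordingly. At the new leaf the derivative $G_h(\mathbf{p}) = F_{x_{i_r}}(\mathbf{p})/h_{x_{i_r}}(\mathbf{p})$ is non-zero by construction, which supplies the sibling-leaf non-vanishing condition at the new leaf in $\mathcal{T}'$; the remaining such conditions are directly inherited from those of $\mathcal{T}$. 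Careful bookkeeping of these transfers across the induction -- especially ensuring compatibility of the constructed $h$ with the untouched part of the tree -- constitutes the principal technical work.
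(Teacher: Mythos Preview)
Your approach is sound but takes the opposite inductive direction from the paper. The paper removes the \emph{root} of $\mathcal{T}$, obtaining maximal sub-trees $\mathcal{T}_1,\dots,\mathcal{T}_l$ together with the leaves adjacent to the root, and writes $F$ as $g(G_1,\dots,G_l,x_{m_1+\dots+m_l+1},\dots,x_n)$ with one $G_s$ per sub-tree; the induction hypothesis is then applied to each $G_s$. You instead collapse a \emph{deepest} internal node whose children are all leaves, producing a single new function $h$ and a tree $\mathcal{T}'$ with one fewer internal node. Both routes are legitimate: your bottom-up scheme constructs only one auxiliary function per step (versus up to $l$ in the paper), at the price of more bookkeeping to carry the constraints and non-vanishing conditions through the relabeled tree $\mathcal{T}'$; the paper's top-down scheme mirrors the compositional structure more directly and makes the transfer of constraints essentially automatic, since ratios of partials of $G_s$ coincide with the corresponding ratios for $F$.

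There is one genuine omission. When you invoke Theorem~\ref{integrability} to produce $h$ from the vector of ratios $\bigl(F_{x_{i_1}}/F_{x_{i_r}},\dots,F_{x_{i_k}}/F_{x_{i_r}}\bigr)$, you must verify the hypothesis $\omega\wedge{\rm d}\omega=0$ for the associated $1$-form $\omega=\sum_{j}\frac{F_{x_{i_j}}}{F_{x_{i_r}}}\,{\rm d}x_{i_j}$; merely knowing that the coefficients depend only on $x_{i_1},\dots,x_{i_k}$ is not enough when $k\geq 3$. This is precisely the computation carried out at the end of the paper's proof, where the two pieces of the expansion vanish by the symmetry $F_{x_jx_k}=F_{x_kx_j}$ and by antisymmetry of the wedge product, and the same calculation works verbatim in your setting. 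You should state and perform it.
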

\noindent
This theorem  was formulated in \cite{Farhoodi2019OnFC} for binary trees and in the context of analytic functions (and also that of Boolean functions). Nevertheless, the proof carries over to the more general setting above. Below, we formulate the analogous characterization of functions that trees  compute via composing functions of the form \eqref{activation}. 
Proofs of Theorems \ref{main-tree} and \ref{main-tree-activation} shall be presented in \S\ref{generalization-tree1}.

\begin{theorem}\label{main-tree-activation}
Let $\mathcal{T}$ be a rooted tree admitting $n$ leaves  that are labeled by the coordinate functions $x_1,\dots,x_n$. We formulate the following constraints on smooth functions $F=F(x_1,\dots,x_n)$.   
\begin{itemize}
\item For any two leaves $x_i$ and $x_j$ of $\mathcal{T}$ we have  
\begin{equation}\label{temp2}
F_{x_ix_k}F_{x_j}=F_{x_jx_k}F_{x_i}
\end{equation}
for any other leaf $x_k$ of $\mathcal{T}$ which is not a leaf of a (rooted full) sub-tree that has exactly one of $x_i$ or $x_j$ (see Figure \ref{fig:xixjxk}). In particular, \eqref{temp2} holds for any $x_k$ if the leaves $x_i$ and $x_j$ are siblings; and for any $x_i$ and $x_j$ if the leaf $x_k$ is adjacent to the root of $\mathcal{T}$. 
\item For any two (rooted full) sub-trees $\mathcal{T}_1$ and $\mathcal{T}_2$ that emanate from a node of $\mathcal{T}$ (see Figure \ref{fig:xixj}), we have  
\begin{equation}\label{temp3}
\begin{split}
&F_{x_i}F_{x_j}\left[F_{x_ix_{i'}x_{j'}}F_{x_j}+F_{x_ix_{i'}}F_{x_jx_{j'}}-F_{x_ix_{j'}}F_{x_jx_{i'}}-F_{x_i}F_{x_jx_{i'}x_{j'}}\right]\\
&=\left(F_{x_ix_{i'}}F_{x_j}-F_{x_i}F_{x_jx_{i'}}\right)\left(F_{x_ix_{j'}}F_{x_j}+F_{x_i}F_{x_jx_{j'}}\right)    
\end{split}
\end{equation}
\normalsize
if $x_i$, $x_{i'}$ are leaves of $\mathcal{T}_1$ and $x_{j}$, $x_{j'}$ are leaves of $\mathcal{T}_2$.
\end{itemize}
These constraints are satisfied if $F(x_1,\dots,x_n)$ is a superposition of functions of the form $\mathbf{y}\mapsto\sigma\left(\langle\mathbf{w},\mathbf{y}\rangle\right)$ according to the hierarchy provided by $\mathcal{T}$. Conversely, a smooth function $F$ defined on an open box-like region\footnote{An open box-like region in $\Bbb{R}^n$ is a product $I_1\times\dots\times I_n$ of open intervals.} $B\subseteq\Bbb{R}^n$ can be written as such a superposition on $B$ provided that the constraints \eqref{temp2} and \eqref{temp3} formulated above hold and moreover, the non-vanishing conditions below are satisfied throughout $B$: 
\begin{itemize}
\item [--] for any leaf $x_i$ with siblings either $F_{x_i}\neq 0$ or there is a sibling leaf $x_{i'}$ with $F_{x_{i'}}\neq 0$;
\item  [--] for any leaf $x_i$ without siblings $F_{x_i}\neq 0$.
\end{itemize}
\end{theorem}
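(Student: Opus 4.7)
The plan is to split the proof into necessity and sufficiency, handling sufficiency by induction on the depth of $\mathcal{T}$.

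\emph{Necessity.} For $F$ realized as the superposition prescribed by $\mathcal{T}$ with activation-form nodes $\mathbf{y}\mapsto\sigma_v(\langle\mathbf{w}_v,\mathbf{y}\rangle)$, the chain rule gives
$$F_{x_i}=\prod_{v\text{ on the root-to-}x_i\text{ path}}\sigma_v'\cdot w_v^{(c(v))},$$
where $c(v)$ is the child of $v$ on that path. Letting $u=\mathrm{LCA}(x_i,x_j)$, this factorizes as $F_{x_i}=\Lambda_u\cdot\sigma_u'\cdot w_u^{(c_i)}\cdot B_i$, with $\Lambda_u$ collecting contributions above $u$ and $B_i$ depending only on variables in the branch of $u$ containing $x_i$ (analogously for $F_{x_j}$). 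For \eqref{temp2}, the combinatorial hypothesis on $x_k$ translates to $\mathrm{LCA}(x_i,x_k)=\mathrm{LCA}(x_j,x_k)$; a short case analysis (whether $x_k$ lies inside or outside the subtree at $u$) shows that $\partial_{x_k}$ acts identically on the factors common to $F_{x_i}$ and $F_{x_j}$, giving the cross-product identity. For \eqref{temp3}, routine algebraic manipulation recasts the identity as
$$\left(\log\frac{F_{x_i}}{F_{x_j}}\right)_{x_{i'}x_{j'}}=0.$$
Now $F_{x_i}/F_{x_j}=(w_u^{(c_i)}/w_u^{(c_j)})\cdot B_i/B_j$ where $B_i$ and $B_j$ involve only variables of $\mathcal{T}_1$ and $\mathcal{T}_2$ respectively, so the mixed partial with $x_{i'}\in\mathcal{T}_1$ and $x_{j'}\in\mathcal{T}_2$ annihilates this ratio.

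\emph{Sufficiency.} Induct on the depth of $\mathcal{T}$. Let $u$ be its root, with children $c_1,\dots,c_m$ generating subtrees $\mathcal{T}_1,\dots,\mathcal{T}_m$. Since the set of triples on which \eqref{temp2} is required strictly contains those on which \eqref{temp1} is required, Theorem \ref{main-tree} applies and supplies a local representation $F=G(g_1,\dots,g_m)$ on $B$, with $g_r$ a smooth function of the variables of $\mathcal{T}_r$. The heart of the argument is to upgrade this to activation form by showing $G$ is \emph{nomographic}, i.e.\ $G(\mathbf{y})=\sigma(\Phi_1(y_1)+\dots+\Phi_m(y_m))$. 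Substituting $F=G(\mathbf{g})$ into \eqref{temp2} with $x_i,x_j,x_k$ drawn from three distinct subtrees $\mathcal{T}_r,\mathcal{T}_s,\mathcal{T}_t$ yields, after cancelling the nonzero factors $(g_r)_{x_i},(g_s)_{x_j},(g_t)_{x_k}$, the constraint $\partial_{rt}G\cdot\partial_s G=\partial_{st}G\cdot\partial_r G$, equivalently $(\log(\partial_r G/\partial_s G))_{y_t}=0$ for every $t\neq r,s$. Substituting into \eqref{temp3} yields $(\log(\partial_r G/\partial_s G))_{y_ry_s}=0$. Together these force $\log(\partial_r G/\partial_s G)=A_r(y_r)-A_s(y_s)$ for a coherent family of univariate $A_r$, the coherence across index triples being secured by the cocycle $(\partial_r G/\partial_s G)(\partial_s G/\partial_t G)=\partial_r G/\partial_t G$. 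Setting $\Phi_r(y_r)=\int e^{A_r(y_r)}\,dy_r$ and changing variables to $\tilde y_r=\Phi_r(y_r)$ makes all partial derivatives of $G$ equal, yielding $G=\sigma(\sum_r\Phi_r(y_r))$. The inductive hypothesis then expresses each $g_r=\tilde\sigma_{c_r}(\ell_r)$ in activation form on $\mathcal{T}_r$; absorbing $\Phi_r$ into the top activation via $\widehat\sigma_{c_r}:=\Phi_r\circ\tilde\sigma_{c_r}$ produces $F=\sigma(\sum_r\widehat\sigma_{c_r}(\ell_r))$, which is activation form at $u$ with outgoing weights all equal to $1$.

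\emph{Main obstacle.} The technical crux is the nomographic reduction of $G$, and in particular the interlocking use of \eqref{temp2} and \eqref{temp3}. Neither family alone suffices: \eqref{temp2} confines $\log(\partial_r G/\partial_s G)$ to depend only on $(y_r,y_s)$ but leaves open a possible mixed $y_ry_s$ term, while \eqref{temp3} kills the mixed term but permits unwanted dependence on other $y_t$. Coherently gluing the pairwise decompositions $\alpha_{rs}(y_r)+\beta_{rs}(y_s)$ into a single consistent family $A_r(y_r)-A_s(y_s)$ across all pairs requires careful exploitation of the cocycle relation. A parallel layer of bookkeeping runs through the induction: one must verify that \eqref{temp2}, \eqref{temp3}, and the non-vanishing conditions on $F$ descend (after dividing out the nowhere-vanishing factor $\partial_r G$) to the analogous hypotheses for each $g_r$ on $\mathcal{T}_r$, so that the inductive hypothesis is actually applicable. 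Edge cases in which a child of $u$ is a leaf, or in which $u$ has only two children, require degenerate instances of \eqref{temp3} with repeated arguments $x_i=x_{i'}$ or $x_j=x_{j'}$, which have to be handled separately but by analogous reasoning.
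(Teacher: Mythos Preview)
Your necessity argument is essentially the paper's: the identification of \eqref{temp3} with $(\log(F_{x_i}/F_{x_j}))_{x_{i'}x_{j'}}=0$ is exactly the simplification \eqref{constraint-simplified-2}, and the product/LCA factorization of $F_{x_i}$ is what the paper writes out as \eqref{huge5}--\eqref{auxiliary14}.

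For sufficiency, however, there is a real gap. You begin by invoking Theorem~\ref{main-tree} to obtain $F=G(g_1,\dots,g_m)$ ``on $B$''. But Theorem~\ref{main-tree} is explicitly \emph{local} (see Remark~\ref{domain}): it produces such a representation only on some neighborhood of each point, with no control on how the $g_r$'s or $G$ match up across neighborhoods. Your subsequent nomographic reduction and the inductive step on each $g_r$ therefore also live only locally, and you end with a local representation of $F$ in activation form---not the global representation on the entire box $B$ that Theorem~\ref{main-tree-activation} asserts. The paper does \emph{not} bootstrap from Theorem~\ref{main-tree}; instead, it constructs the argument function $\tilde F$ directly (either $G+\sum c_jx_j$ or $G_1+G_2$, cf.\ \eqref{auxiliary9}--\eqref{auxiliary10}) by exhibiting its gradient, and then invokes Lemma~\ref{technical} to show that the level sets of $\tilde F$ are connected in $B$, so that the outer $\sigma$ is well-defined globally. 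This connectivity argument is the mechanism by which the activation-form structure and the box-like domain buy you a global statement, and it is entirely absent from your outline.

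Setting globality aside, your sufficiency strategy is genuinely different from the paper's and worth noting. You peel off \emph{all} children of the root at once and reduce to showing the outer $G$ is nomographic, using \eqref{temp2} (across three branches) to confine $\log(\partial_rG/\partial_sG)$ to two variables and \eqref{temp3} to kill the mixed partial. The paper instead removes \emph{one} subtree (or the root-adjacent leaves) at a time, splitting $F_{x_1}/F_{x_n}$ into a product via Lemma~\ref{split} and building $G_1,G_2$ from integrability conditions on their gradients. Your route is conceptually cleaner at the root node but requires the separate handling of the $m=2$ and leaf-child edge cases you flag (and note that in the paper's convention a single leaf is not a subtree, so \eqref{temp3} with $x_i=x_{i'}$ a lone leaf is not among the stated hypotheses---the paper's Case~1 handles such leaves via \eqref{temp2} alone). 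The paper's one-subtree-at-a-time induction avoids these edge cases and, more importantly, dovetails with Lemma~\ref{technical} to deliver the global conclusion.
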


The constraints appeared in Theorems \ref{main-tree} and \ref{main-tree-activation} may seem tedious, but they can be rewritten more conveniently once the intuition behind them is explained. 
Assuming that partial derivatives do not vanish (a non-vanishing condition) so that division is allowed, \eqref{temp1} and \eqref{temp2} may be written as
\small
\begin{equation}\label{constraint-simplified-1}
\left(\frac{F_{x_i}}{F_{x_j}}\right)_{x_k}=0\Leftrightarrow \left(\frac{F_{x_i}}{F_{x_k}}\right)_{x_j}=\left(\frac{F_{x_j}}{F_{x_k}}\right)_{x_i};
\end{equation}
\normalsize
while \eqref{temp3} is
\small
\begin{equation}\label{constraint-simplified-2}
\left(\frac{\left(\frac{F_{x_i}}{F_{x_j}}\right)_{x_{i'}}}{\frac{F_{x_i}}{F_{x_j}}}\right)_{x_{j'}}=0.
\end{equation}
\normalsize
The first one, \eqref{constraint-simplified-1}, simply states that the ratio $\frac{F_{x_i}}{F_{x_j}}$ is independent of $x_k$. 
Notice that in comparison with Theorem \ref{main-tree}, Theorem \ref{main-tree-activation} requires the equation $F_{x_ix_k}F_{x_j}=F_{x_jx_k}F_{x_i}$ to hold in a greater generality and for more triples $(x_i,x_j,x_k)$ of leaves (see Figure \ref{fig:xixjxk}).\footnote{A piece of terminology introduced in \cite{Farhoodi2019OnFC} may be illuminating here: A member of a  triple $(x_i,x_j,x_k)$ of (not necessarily distinct) leaves of $\mathcal{T}$ is called the \textit{outsider} of the triple if there is a (rooted full) sub-tree of $\mathcal{T}$ that misses it but has the other two members. Theorem \ref{main-tree} imposes $F_{x_ix_k}F_{x_j}=F_{x_jx_k}F_{x_i}$ whenever $x_k$ is the outsider while Theorem \ref{main-tree-activation} imposes the constraint whenever $x_i$ and $x_j$ are not outsiders.} 
The second simplified equation \eqref{constraint-simplified-2}
holds once the function $\frac{F_{x_i}}{F_{x_j}}$ of $(x_1,\dots,x_n)$ may be split into a product such as   
$$q_1(\dots,x_i,\dots,x_{i'},\dots)\, q_2(\dots,x_j,\dots,x_{j'},\dots).$$
Lemma \ref{split} discusses the necessity and sufficiency of these equations for the existence of such a splitting.  

\begin{remark}\label{domain}
A significant feature of Theorem \ref{main-tree-activation} is that once the appropriate conditions are satisfied on a box-like domain, the smooth function under consideration may be written as a superposition of the desired form on the entirety of that domain. On the contrary, Theorem \ref{main-tree} is local in nature.
\end{remark}

Aside from neuroscientific interest, studying tree architectures is important also because any neural network can be expanded into a tree network with repeated inputs  through a procedure called \textbf{TENN} (the \textit{\textbf{T}ree \textbf{E}xpansion of the \textbf{N}eural \textbf{N}etwork}); see Figure \ref{fig:tree-expanded-network}. Tree architectures with repeated inputs are relevant in the context of neuroscience too because the inputs to neurons may be repeated \cite{schneider2016quantitative,gerhard2017conserved}. 
We have already seen an example of a network along with its TENN in Figure \ref{fig:basic-2} of Example \ref{basic-1'}. Both networks implement functions of the form $F(x,y,z)=g(f(x,y),h(x,z))$. Even for this simplest example of a tree architecture with repeated inputs, the derivation of characteristic PDEs is computationally involved and will 
be done in Example \ref{toy example 1}. This verifies Conjecture \ref{conjecture} for the tree appeared in Figure \ref{fig:basic-2}. Another small tree with repeated inputs for which Conjecture \ref{conjecture} is established is illustrated in Figure \ref{fig:asymmetric-toy} of Example \ref{toy example 2}. The computations of Examples \ref{toy example 1} and \ref{toy example 2} are generalized in \S\S\ref{generalization-tree2},\ref{generalization-tree3} where Conjecture \ref{conjecture} is verified for two families of trees with repeated inputs (Figures \ref{fig:symmetric} and \ref{fig:asymmetric}). Again, the characteristic PDEs are more cumbersome than those appearing in Theorems \ref{main-tree} and \ref{main-tree-activation} where the inputs are assumed to be distinct.

Neural networks with polynomial activation functions have been studied in the literature \cite{du2018power,soltanolkotabi2018theoretical,venturi2018spurious,kileel2019expressive}. 
By bounding the degrees of constituent functions of superpositions computed by a polynomial neural network, the functional space formed by these superpositions  sits inside a finite-dimensional ambient space of real polynomials and is hence finite-dimensional and amenable to techniques of algebraic geometry. One can, for instance, in each degree associate a \textit{functional variety} to a neural network $\mathcal{N}$ (see Definition \ref{variety}) whose dimension could be interpreted as a measure of expressive power.   This algebraic approach to expressivity has been adapted  in \cite{kileel2019expressive} for the case of neural networks for which the activation functions $\sigma$ in \eqref{activation} is a power map. Our approach of describing real functions computable by neural networks via PDEs and PDIs has ramifications to the study of polynomial neural networks as well. Paper  \cite{Farhoodi2019OnFC} for instance, utilizes Theorem \ref{main-tree} to write equations for algebraic varieties associated with tree architectures.   Indeed,  if $F=F(x_1,\dots,x_n)$ is a polynomial, an algebraic PDE of the form \eqref{alg PDE} translates to a polynomial equation of the coefficients of $F$ (see Corollary \ref{equations}); and the condition that an algebraic PDI such as \eqref{alg PDI} is valid throughout $\Bbb{R}^n$ can again be described via equations and inequalities involving the coefficients of $F$ (see Lemma \ref{semi-algebraic}). In \S\ref{Polynomial}, as the polynomial analogue of Conjecture \ref{conjecture}, we will present  Conjecture \ref{conjecture'} which aims to characterize, in a similar fashion,  functions computed by polynomial neural networks by the means of algebraic PDEs and PDIs. In view of the discussion above, this amounts to equations and inequalities describing the functional variety associated with $\mathcal{N}$ in the ambient polynomial space. This description with equations and inequalities is reminiscent of the notion of a \textit{semi-algebraic set} from real algebraic geometry. A novel feature of Conjecture \ref{conjecture'} is the claim of the existence of a universal characterization  dependent only on the architecture  from which  a description as a semi-algebraic set could be read off in any degree.

\begin{figure}
    \centering
    \includegraphics[width=12cm]{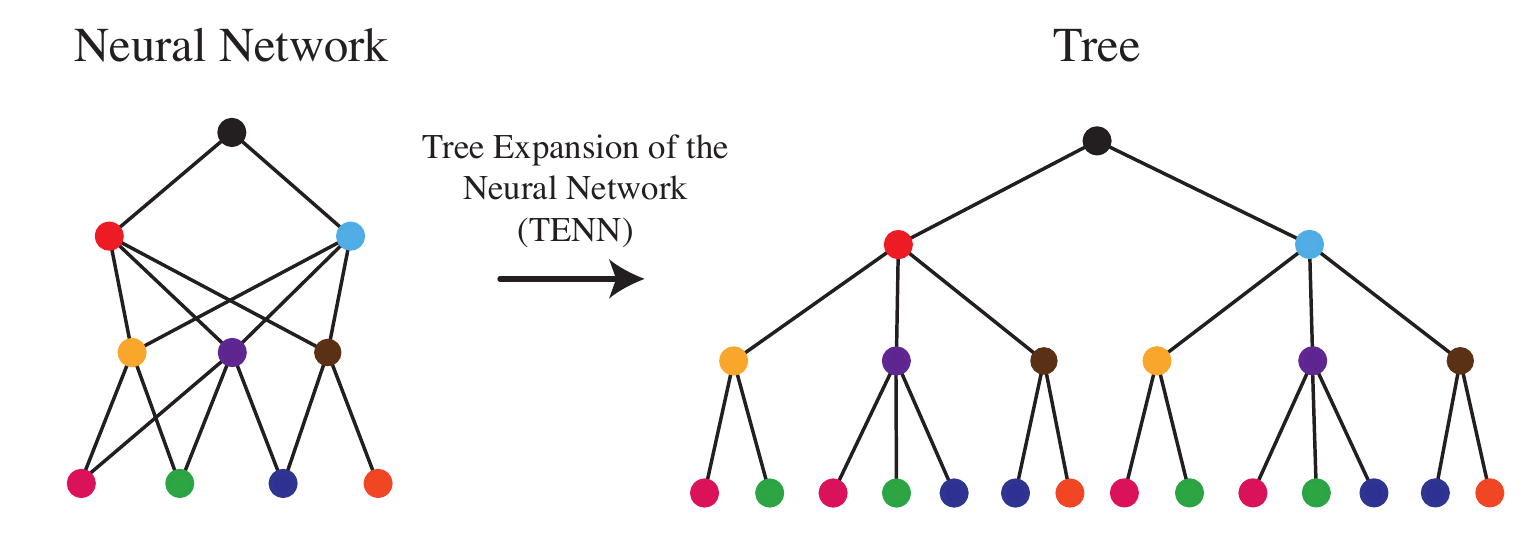}
    \caption{A multi-layer neural network can be expanded to a tree. The figure is adapted from \cite{Farhoodi2019OnFC}.}
    \label{fig:tree-expanded-network}
\end{figure}

\subsection{Outline of the paper}\label{outline}
Theorems \ref{main} and \ref{main'} are proven in  \S\ref{necessity} where it is established that in each setting there are necessary PDE conditions for expressibility of smooth functions by a neural network. In \S\ref{toy examples} we verify Conjecture \ref{conjecture} in several examples by characterizing computable functions via PDE constraints that are necessary and (given certain non-vanishing conditions) sufficient. This starts by studying tree architectures in \S\ref{toy examples-trees}: In Example \ref{toy example 1} we finish our treatment of a tree function with repeated inputs initiated in  Example \ref{basic-1'}; and  moreover, we present a number of examples to exhibit the key ideas of the proofs of Theorems \ref{main-tree} and \ref{main-tree-activation} which are concerned with tree functions with distinct inputs. The section then proceeds with switching from trees to other neural networks in \S\ref{toy examples-networks} where, building on Example \ref{basic-2}, Example \ref{network1} demonstrates why the characterization claimed by Conjecture \ref{conjecture} involves inequalities. Examples in \S\ref{toy examples} are generalized in the next section to a number of results establishing Conjecture \ref{conjecture} for certain families of tree architectures: Proofs of Theorems \ref{main-tree} and \ref{main-tree-activation} are presented in \S\ref{generalization-tree1}; and Examples \ref{toy example 1} and \ref{toy example 2} are generalized to tree architectures with repeated inputs in Propositions \ref{form1} and \ref{form2} of \S\ref{generalization-tree2} and \S\ref{generalization-tree3}. Section \ref{Polynomial} is concerned with polynomial neural networks and includes Conjecture \ref{conjecture'}, an algebraic variant  of Conjecture \ref{conjecture}. The last two section, \S\ref{conclusion} and \S\ref{future}, are devoted to few concluding remarks and possible future research directions. There are three appendices discussing technical proofs of propositions and lemmas (Appendix \ref{Proofs}), and the basic mathematical background on differential forms (Appendix \ref{Frobenius}) and algebraic geometry (Appendix \ref{Background}).

\section{Existence of PDE constraints}\label{necessity}
The goal of the section is to prove Theorems \ref{main} and \ref{main'}. Lemma below (to be proven in Appendix \ref{Background}) is our main tool for establishing the existence of constraints.

\begin{lemma}\label{dependence}
Any collection $p_1(t_1,\dots,t_m),\dots,p_l(t_1,\dots,t_m)$ of polynomials on $m$ indeterminates are algebraically dependent provided that $l>m$. In other words, if $l>m$ there exists a non-constant polynomial 
$\Phi=\Phi(s_1,\dots,s_l)$ dependent only on the coefficients of $p_i$'s for which 
$$
\Phi\left(p_1(t_1,\dots,t_m),\dots,p_l(t_1,\dots,t_m)\right)\equiv 0. 
$$
\end{lemma}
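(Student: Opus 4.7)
The plan is to establish the algebraic dependence via an elementary dimension-counting argument comparing growth rates in an auxiliary parameter $d$, rather than invoking transcendence degree abstractly.

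First I would set $D := \max_{1 \leq i \leq l} \deg p_i$ and consider the finite-dimensional vector space $V_N$ of polynomials in $t_1, \dots, t_m$ of degree at most $N$, whose dimension is $\binom{N+m}{m}$. Every monomial $p_1^{a_1} \cdots p_l^{a_l}$ with total exponent $|a| := a_1 + \cdots + a_l \leq d$ is a polynomial in the $t_j$'s of degree at most $dD$, and hence lies in $V_{dD}$. So I have a linear map from $\Bbb{R}^M$, where $M := \binom{l+d}{l}$ is the number of such monomials, into $V_{dD}$, sending the standard basis to these monomials.

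Next I would compare growth rates: $M = \binom{l+d}{l}$ grows like $d^{l}/l!$ as $d \to \infty$, whereas $\dim V_{dD} = \binom{dD+m}{m}$ grows only like $(dD)^{m}/m!$. Since by hypothesis $l > m$, the former eventually dominates, so for $d$ sufficiently large the source of the above linear map has strictly greater dimension than the target. The map must therefore have a non-zero kernel, yielding scalars $c_a \in \Bbb{R}$, not all zero, with
$$
\sum_{|a| \leq d} c_a \, p_1^{a_1} \cdots p_l^{a_l} \equiv 0.
$$
Setting $\Phi(s_1, \dots, s_l) := \sum_{|a| \leq d} c_a \, s_1^{a_1} \cdots s_l^{a_l}$ produces the required non-constant polynomial annihilating $(p_1, \dots, p_l)$.

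For the clause that $\Phi$ depends only on the coefficients of the $p_i$'s, I would observe that the matrix entries of the linear map constructed above are themselves polynomial expressions in the coefficients of $p_1, \dots, p_l$; consequently the non-zero vector $(c_a)$ found in its kernel can be taken to be a polynomial (or even rational) function of those coefficients, which transfers to $\Phi$. The main obstacle is purely bookkeeping rather than conceptual: the inequality of the two binomial coefficients has to be checked for a concrete sufficiently large $d$, but this follows immediately from the leading-term comparison $d^{l}/l! > (dD)^{m}/m!$ once $l > m$. An alternative proof would invoke that $\Bbb{R}(t_1, \dots, t_m)$ has transcendence degree $m$ over $\Bbb{R}$, forcing any $l > m$ elements to be algebraically dependent; the counting approach I outlined is preferable here because it is constructive and makes the dependence of $\Phi$ on the coefficients of the $p_i$'s transparent.
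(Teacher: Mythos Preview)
Your counting argument is correct and is essentially identical to the elementary proof the paper gives (same comparison of $\binom{a+l}{l}$ against $\binom{ad+m}{m}$, with your $d,D$ playing the roles of the paper's $a,d$). The paper also prefaces this with a one-line Zariski-dimension argument---the image of the polynomial map $\Bbb{R}^m\to\Bbb{R}^l$ has dimension at most $m<l$, hence lies in a proper subvariety---which is the geometric counterpart of the transcendence-degree alternative you mention.
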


\begin{proof}[Proof of Theorem \ref{main}]
Let $F=F(x_1,\dots,x_n)$ be a superposition of smooth functions 
\begin{equation}\label{functions at nodes}
f^{(1)}_1,\dots,f^{(1)}_{N_1};\dots;f^{(i)}_{1},\dots,f^{(i)}_{N_i};\dots;f^{(L)}_1
\end{equation}
according to the hierarchy provided by $\mathcal{N}$ where $f^{(i)}_{1},\dots,f^{(i)}_{N_i}$ are the functions appearing at the neurons of the $i^{th}$ layer above the input layer (in the last layer, $f^{(L)}_{N_L=1}$ appears at the output neuron). The total number of these functions is $N:=N_1+\dots+N_L$; namely, the number of the neurons of the network. 
By the chain rule, any partial derivative $F_{\mathbf{x}^\alpha}$ of the superposition may be described as a polynomial of partial derivatives of order not greater than $|\alpha|$ of functions appeared in \eqref{functions at nodes}. These polynomials are determined solely by how neurons in consecutive layers are connected to each other; that is, the architecture. The function $F$ of $n$ variables admits $\binom{r+n}{n}-1$ partial derivatives (excluding the function itself) of order at most $r$ whereas the same number for any of the functions listed in \eqref{functions at nodes} is at most $\binom{r+n-1}{n-1}-1$ because by the hypothesis each of them is  dependent on less than $n$ variables. Denote the partial derivatives of order at most $r$ of functions $f^{(i)}_j$ (evaluated at appropriate points as required by the chain rule) by indeterminates $t_1,\dots,t_m$. Following the previous discussion, one has $m\leq N\left(\binom{r+n-1}{n-1}-1\right)$. Hence, the chain rule describes the partial derivatives of order not greater than $r$ of $F$ as polynomials -- dependent only on the architecture of $\mathcal{N}$-- of $t_1,\dots,t_m$. Invoking Lemma \ref{dependence}, the aforementioned partial derivatives of $F$ are algebraically dependent once 
\begin{equation}\label{inequality1}
\binom{r+n}{n}-1>N\left(\binom{r+n-1}{n-1}-1\right).    
\end{equation}
Indeed, the inequality holds for $r$ large enough since the left-hand side is a polynomial of degree $n$ of $r$ while the similar degree for the right-hand side is $n-1$.  
\end{proof}

\begin{proof}[Proof of Theorem \ref{main'}]
In this case $F=F(x_1,\dots,x_n)$ is a superposition of functions of the form 
\begin{equation}\label{functions at nodes'}
\sigma^{(1)}_1\left(\langle\mathbf{w}^{(1)}_1,.\rangle\right),\dots,\sigma^{(1)}_{N_1}\left(\langle\mathbf{w}^{(1)}_{N_1},.\rangle\right);
\dots;\sigma^{(i)}_1\left(\langle\mathbf{w}^{(i)}_1,.\rangle\right),\dots,\sigma^{(i)}_{N_i}\left(\langle\mathbf{w}^{(i)}_{N_i},.\rangle\right);\dots;\sigma^{(L)}_1\left(\langle\mathbf{w}^{(L)}_1,.\rangle\right)
\end{equation}
appearing at neurons: The $j^{\rm{th}}$ neuron of the $i^{\rm{th}}$ layer above the input layer ($1\leq i\leq N, 1\leq j\leq N_i$) corresponds to the function 
$\sigma^{(i)}_j\left(\langle\mathbf{w}^{(i)}_j,.\rangle\right)$
where a univariate smooth activation function $\sigma^{(i)}_j$ is applied to the inner product of the weight vector $\mathbf{w}^{(i)}_j$ with the vector formed by the outputs of neurons in the previous layer which are connected to the aforementioned neuron of the  $i^{\rm{th}}$ layer. We proceed as in the proof of Theorem \ref{main}: The chain rule describes each partial derivative $F_{\mathbf{x}^\alpha}$ as a polynomial -- dependent only on the architecture -- of components of vectors $\mathbf{w}^{(i)}_j$ along with derivatives of functions $\sigma^{(i)}_j$ up to order at most $|\alpha|$ (each evaluated at an appropriate point). The total number of components of all weight vectors coincides with the total number of connections (edges of the underlying graph); and the number of the aforementioned derivatives of activation functions is the number of neurons times $|\alpha|$. We denote  the total numbers of connections and neurons by $C$ and $N$ respectively. There are $\binom{r+n}{n}-1$  partial derivatives $F_{\mathbf{x}^\alpha}$ of order at most $r$ (i.e. $|\alpha|\leq r$) of $F$  and, by the previous discussion, each of them may be written as a polynomial of $C+Nr$ quantities given by components of weight vectors and derivatives of activation functions. Lemma \ref{dependence} implies that these partial derivatives of $F$ are algebraically dependent provided that 
\begin{equation}\label{inequality2}
\binom{r+n}{n}-1>Nr+C;    
\end{equation}
an inequality that holds for sufficiently large $r$  as the degree of the left-hand side with respect to $r$ is $n>1$. 
\end{proof}

\begin{corollary}\label{not dense in C^r}
Let $\mathcal{N}$ be a feedforward neural network  whose inputs are labeled by the coordinate functions $x_1,\dots,x_n$ and satisfies the hypothesis of either of Theorems \ref{main} or \ref{main'}. Define the positive integer $r$ as 
\begin{itemize}
\item $r=n\left(\#\text{neurons}-1\right)$ in the case of Theorem \ref{main},
\item $r=\max\left(\lfloor{n\left(\#\text{neurons}\right)^{\frac{1}{n-1}}\rfloor},\#\text{connections}\right)+2$ in the case of Theorem \ref{main'};
\end{itemize}
where $\#\text{connections}$ and  $\#\text{neurons}$ are respectively the number of edges of the underlying graph of $\mathcal{N}$ and the number of its vertices above the input layer. 
Then the smooth functions $F=F(x_1,\dots,x_n)$ computable  by $\mathcal{N}$ satisfy non-trivial algebraic partial differential equations of order $r$. In particular, the subspace formed by these functions lies in a subset of positive codimension which is closed with respect to the $C^r$-norm.
\end{corollary}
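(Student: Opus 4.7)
The plan is to read off the bound on $r$ directly from the counting arguments inside the proofs of Theorems \ref{main} and \ref{main'}. Both proofs produce their PDE via Lemma \ref{dependence} as soon as the relevant dimension inequality --- \eqref{inequality1} for Theorem \ref{main}, or \eqref{inequality2} for Theorem \ref{main'} --- holds. Thus the content of the corollary is simply to verify these inequalities for the stated choices of $r$, after which the closedness/codimension assertion follows from a soft continuity argument.

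For the first case, write $N = \#\text{neurons}$. Using Pascal's identity together with the ratio $\binom{r+n-1}{n}/\binom{r+n-1}{n-1} = r/n$, the inequality \eqref{inequality1} is equivalent to
\[
\binom{r+n-1}{n-1}\left(\tfrac{r}{n} - (N-1)\right) > 1 - N.
\]
At $r = n(N-1)$ the left-hand side vanishes while the right-hand side equals $1 - N$, which is strictly negative whenever $N \geq 2$; the latter condition is automatic for any architecture to which Theorem \ref{main} meaningfully applies. For the second case, set $C = \#\text{connections}$. The choice $r = \max(\lfloor nN^{1/(n-1)} \rfloor, C) + 2$ guarantees simultaneously $r \geq C + 2$, whence $Nr + C \leq (N+1)r - 2$, and $r - 1 \geq nN^{1/(n-1)}$, whence $r^{n-1} \geq n^{n-1} N$. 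Combining these with the elementary bound $\binom{r+n}{n} \geq (r+1)(r+2)\cdots(r+n)/n!$ and performing a routine expansion (treating $n = 2$ separately via a quadratic estimate and $n \geq 3$ by the fact that the $r^n$-term then comfortably dominates) yields $\binom{r+n}{n} - 1 > Nr + C$, confirming \eqref{inequality2}.

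With \eqref{inequality1} or \eqref{inequality2} in hand, Lemma \ref{dependence} supplies a non-constant polynomial $\Phi$ such that $\Phi\left((F_{\mathbf{x}^\alpha})_{|\alpha|\leq r}\right) = 0$ holds pointwise for every smooth $F$ computable by $\mathcal{N}$. Each evaluation $F \mapsto F_{\mathbf{x}^\alpha}(\mathbf{p})$ with $|\alpha| \leq r$ is bounded by $\|F\|_{C^r}$ and is therefore continuous in the $C^r$-norm; since $\Phi$ is a polynomial in these, the composite functional $F \mapsto \Phi\left((F_{\mathbf{x}^\alpha})_{|\alpha|\leq r}\right)$ is itself $C^r$-continuous, and its zero set is closed. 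Because $\Phi$ is non-constant, a generic $C^r$-perturbation of any $F$ alters the value of $\Phi$, so this zero set has empty interior --- a proper closed subset, which is the sense of ``positive codimension'' intended in the statement.

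The main obstacle, such as it is, lies entirely in the numerical verification of \eqref{inequality2}: the stated $r$ is chosen for transparency rather than tightness, and confirming it requires simultaneously tracking $n$, $N$ and $C$ through an expansion of $\binom{r+n}{n}$. Conceptually nothing new is needed --- the growth of the degree-$n$ polynomial $\binom{r+n}{n}$ over the linear expression $Nr + C$ is precisely the mechanism already driving Theorem \ref{main'}, so the task is just to make the crossover point explicit.
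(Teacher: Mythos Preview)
Your approach is essentially the paper's: both proofs reduce immediately to verifying inequalities \eqref{inequality1} and \eqref{inequality2} for the stated values of $r$, and your treatment of the first case is algebraically equivalent to the paper's ratio argument $\binom{r+n}{n}/\binom{r+n-1}{n-1}=(r+n)/n\geq N$. The only notable difference is in the second case, where the paper avoids your $n=2$ versus $n\geq 3$ split by a single uniform chain: it bounds $\binom{r+n}{n}-1-Nr\geq r\bigl(\tfrac{r^{n-1}}{n!}-N\bigr)$ and then shows $\tfrac{r^{n-1}}{n!}\geq N+1$ via $(n!)^{1/(n-1)}\leq n$, which is slicker than the ``routine expansion'' you leave implicit.
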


\begin{proof}
One only needs to verify that for the values of $r$ provided by the corollary the inequalities \eqref{inequality1} and \eqref{inequality2} are valid. The former holds if 
\small
$$
\frac{\binom{r+n}{n}}{\binom{r+n-1}{n-1}}=\frac{r+n}{n}
$$
\normalsize
is not smaller than $N$, i.e. if $r\geq n(N-1)$. As for \eqref{inequality2}, notice that 
\small
$$
\binom{r+n}{n}-1-Nr\geq \frac{r^n}{n!}-Nr=r\left(\frac{r^{n-1}}{n!}-N\right);
$$
\normalsize
hence it suffices to have $r\left(\frac{r^{n-1}}{n!}-N\right)> C$. This holds if $r>C$ and $\frac{r^{n-1}}{n!}-N\geq 1$. The latter inequality is valid once 
$r\geq n.N^{\frac{1}{n-1}}+2$, since then:
\small
$$
\frac{r^{n-1}}{n!}=\left(\frac{r}{(n!)^{\frac{1}{n-1}}}\right)^{n-1}\geq\left(\frac{r}{n}\right)^{n-1}\geq\left(N^{\frac{1}{n-1}}+\frac{2}{n}\right)^{n-1}
\geq N+\frac{2(n-1)}{n}.N^{\frac{n-2}{n-1}}\geq N+1.
$$
\normalsize
\end{proof}

\begin{remark}\label{dimension}
It indeed follows from the arguments above that there is a multitude of algebraically independent PDE constraints. By a simple dimension count, this number is $\left(\binom{r+n}{n}-1\right)-N\left(\binom{r+n-1}{n-1}-1\right)$ in the first case of Corollary \ref{not dense in C^r}, and is  $\left(\binom{r+n}{n}-1\right)-Nr$ in the second case. 
\end{remark}

\begin{remark}
The approach here merely establishes the existence of non-trivial algebraic PDEs satisfied by the superpositions. These are not the simplest PDEs of this kind and hence are not the best candidates for the purpose of characterizing superpositions. For instance, for superpositions \eqref{toy1} -- that networks in Figure \ref{fig:basic-2} implement -- one has $n=3$ and $\#\text{neurons}=3$. Corollary \ref{not dense in C^r} thus guarantees that these superpositions satisfy a sixth order PDE. But in Example \ref{toy example 1} we shall characterize them via two fourth order PDEs; compare with \cite[Lemma 7]{MR541075}.
\end{remark}

\begin{remark}\label{ODE}
Prevalent smooth activation functions such as the logistic function $\frac{1}{1+{\rm{e}}^{-x}}$ or tangent hyperbolic $\frac{{\rm{e}}^{x}-{\rm{e}}^{-x}}{{\rm{e}}^{x}+{\rm{e}}^{-x}}$ satisfy certain autonomous algebraic ODEs. Corollary \ref{not dense in C^r} could be improved in such a setting: If each activation function $\sigma=\sigma(x)$ appearing in 
\eqref{functions at nodes'} satisfies a differential equation of the form 
$$
\frac{{\rm{d}}^k\sigma}{{\rm{d}}x^k}=p\left(\sigma,\frac{{\rm{d}}\sigma}{{\rm{d}}x},\dots,\frac{{\rm{d}}^{k-1}\sigma}{{\rm{d}}x^{k-1}}\right)
$$
where  $p$ is a polynomial, one can change \eqref{inequality2} to $\binom{r+n}{n}-1>Nk_{\max}+C$ where $k_{\max}$ is the maximum order of ODEs that activation functions in 
\eqref{functions at nodes'} satisfy.
\end{remark}

\section{Toy examples}\label{toy examples}
The goal of the current section is to examine several elementary examples demonstrating how one can derive a set of necessary or sufficient PDE constraints for an architecture. 
The desired PDEs should be universal, i.e. purely in terms of the derivatives of the function $F$ which is to be implemented and not dependent on any weight vector, activation function or a function of lower dimensionality that has appeared at a node. In this process, it is often necessary to express a smooth function in terms of other functions. If $k<n$ and $f(x_1,\dots,x_n)$ is written as $g(\xi_1,\dots,\xi_k)$ throughout an open neighborhood of a point $\mathbf{p}\in\Bbb{R}^n$ where each $\xi_i=\xi_i(x_1,\dots,x_n)$ is a smooth function, the gradient of $f$ must be a linear combination of those of $\xi_1,\dots,\xi_k$ due to the chain rule. Conversely, if $\nabla f\in{\text{Span}}\{\nabla\xi_1,\dots,\nabla\xi_k\}$ near $\mathbf{p}$, by the Inverse Function Theorem one can extend $(\xi_1,\dots,\xi_k)$ to a coordinate system $(\xi_1,\dots,\xi_k;\xi_{k+1},\dots,\xi_n)$ on a small enough neighborhood of $\mathbf{p}$ provided that $\nabla\xi_1(\mathbf{p}),\dots,\nabla\xi_k(\mathbf{p})$ are linearly independent; a coordinate system in which the partial derivative $f_{\xi_i}$ vanish for $k<i\leq n$; the fact that implies  $f$ can be expressed in terms of $\xi_1,\dots,\xi_k$ near $\mathbf{p}$. Subtle mathematical issues arise if one wants to write $f$ as $g(\xi_1,\dots,\xi_k)$ on a larger domain containing $\mathbf{p}$:
\begin{itemize}
\item A $k$-tuple $(\xi_1,\dots,\xi_k)$ of smooth functions defined on an open subset $U$ of $\Bbb{R}^n$ whose gradient vector fields are linearly independent at all points cannot necessarily be extended to a coordinate system $(\xi_1,\dots,\xi_k;\xi_{k+1},\dots,\xi_n)$ for the whole $U$. As an example, consider $r=\sqrt{x^2+y^2}$ whose gradient is non-zero at any point of $\Bbb{R}^2-\{(0,0)\}$; but there is no smooth function $h:\Bbb{R}^2-\{(0,0)\}\rightarrow\Bbb{R}$ with $\nabla h\not\parallel\nabla r$ throughout $\Bbb{R}^2-\{(0,0)\}$: The level set $r=1$ is compact and so the restriction of $h$ to it achieves its absolute extrema, and at such points $\nabla h=\lambda\nabla f$ ($\lambda$ is the Lagrange multiplier). 
\item Even if one has a coordinate system $(\xi_1,\dots,\xi_k;\xi_{k+1},\dots,\xi_n)$ on a connected open subset $U$ of $\Bbb{R}^n$, a smooth function $f:U\rightarrow\Bbb{R}$ with 
$f_{\xi_{k+1}},\dots,f_{\xi_n}\equiv 0$ cannot necessarily be written globally as $f=g(\xi_1,\dots,\xi_k)$. One example is the function 
$$
f(x,y):=
\begin{cases}
0                       &\text{if } x\leq 0\\
{\rm{e}}^{\frac{-1}{x}} &\text{if } x>0, y>0\\
-{\rm{e}}^{\frac{-1}{x}}&\text{if } x>0, y<0\\
\end{cases}
$$
defined on the open subset $\Bbb{R}^2-[0,\infty)\subset\Bbb{R}^2$ for which $f_y\equiv 0$. It may only locally be written as $f(x,y)=g(x)$; there is no function $g:\Bbb{R}\rightarrow\Bbb{R}$ with $f(x,y)=g(x)$ for all $(x,y)\in\Bbb{R}^2-[0,\infty)$. Defining $g(x_0)$ as the value of $f$ on the intersection of its domain with the vertical line $x=x_0$ does not work because, due to the shape of the domain, such intersections may be disconnected. Finally, notice that $f$, although smooth, is not analytic ($C^\omega$); indeed, examples of this kind do not exist in the analytic category.  
\end{itemize}
This difficulty of needing a representation $f=g(\xi_1,\dots,\xi_k)$ that remains valid not just near a point but over a larger domain comes up only in the proof of Theorem \ref{main-tree-activation} (see Remark \ref{domain}); the representations we work with in the rest of this section are all local. The assumption about the shape of the domain and the special form of functions \eqref{activation} allows us to circumvent the difficulties just mentioned in the proof of Theorem \ref{main-tree-activation}. Below we have two related lemmas that shall be used later. 

\begin{lemma}\label{technical}
Let $B$ and $\mathcal{T}$ be a box-like region in $\Bbb{R}^n$ and a rooted tree with the coordinate functions $x_1,\dots,x_n$ labeling its leaves as in Theorem \ref{main-tree-activation}. Suppose a smooth function $F=F(x_1,\dots,x_n)$ on $B$ is implemented on $\mathcal{T}$ via assigning activation functions and weights to the nodes of $\mathcal{T}$. If $F$ satisfies the non-vanishing conditions described at the end of Theorem \ref{main-tree-activation}, then the level sets of $F$ are connected; and $F$ can be extended to a coordinate system $(F,F_2,\dots,F_n)$ for $B$.
\end{lemma}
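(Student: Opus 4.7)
The plan is to argue by induction on the number of internal nodes of $\mathcal{T}$, establishing both assertions simultaneously. As a preliminary step, the non-vanishing hypotheses imply that the activation derivative $\sigma_v'$ at every internal node $v$ of $\mathcal{T}$ is nowhere zero on $B$: the chain rule writes $F_{x_\ell}$ as a product of the $\sigma'$-factors along the root-to-leaf path of the leaf $x_\ell$, and the hypotheses guarantee that at every point of $B$ some $F_{x_\ell}$ is nonzero, which forces all $\sigma'$-factors on that path to be nonzero there; ranging over all leaves covers every internal node.

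Next I would prove, by induction on the subtrees, that some leaf $x_i$ satisfies $F_{x_i}\neq 0$ throughout $B$. The base case is a tree with a single internal node, where the non-vanishing hypothesis gives the conclusion directly. For the inductive step, write $F=\sigma\bigl(\sum_j w_j G_j\bigr)$ with $G_j$ the function computed by the subtree $\mathcal{T}_j$ on its box $B_j$; the non-vanishing conditions are inherited by $\mathcal{T}_j$ because the sibling structure at the parent of any leaf is identical in $\mathcal{T}$ and in the containing subtree. Picking any $j$ with $w_j\neq 0$ (such a $j$ exists, else $F$ would be constant), the induction hypothesis produces a leaf $x_{i_j}^\ast$ of $\mathcal{T}_j$ with $(G_j)_{x_{i_j}^\ast}$ nowhere zero on $B_j$, and then $F_{x_{i_j}^\ast}=w_j\,\sigma'(L)\,(G_j)_{x_{i_j}^\ast}$ is nowhere zero on $B$. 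With this $x_i$ in hand, the map $\psi:=(F,x_1,\ldots,\widehat{x_i},\ldots,x_n)\colon B\to\mathbb{R}^n$ has Jacobian determinant $\pm F_{x_i}\neq 0$, and since $F_{x_i}$ has constant sign along every line parallel to the $x_i$-axis (as $I_i$ is a connected interval), $F$ is strictly monotone in $x_i$ along each such line; this makes $\psi$ injective and hence a diffeomorphism of $B$ onto its open image, yielding the coordinate extension $(F,F_2,\ldots,F_n)$.

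For the connectedness of the level sets of $F$, the monotonicity of $\sigma$ on the range of $L$ (a consequence of $\sigma'\neq 0$) reduces matters to connectedness of $\{L=c'\}\subset B=\prod_j B_j$ for $L=\sum_j w_j G_j$. By induction each $G_j\colon B_j\to R_j:=G_j(B_j)$ is a surjective smooth submersion onto an open interval with connected fibers, so $\Psi:=(G_1,\ldots,G_k)$ maps $B$ onto the open box $\prod_j R_j$ with fibers $\prod_j G_j^{-1}(r_j)$ that are connected as products of connected sets. Writing $\{L=c'\}=\Psi^{-1}(H_{c'})$ for the convex (hence connected) slice $H_{c'}=\{\mathbf{r}\in\prod_j R_j:\sum_j w_j r_j=c'\}$, the plan is to join any two points of $\{L=c'\}$ by first moving within a single fiber of $\Psi$ and then lifting the straight-line segment connecting their $\Psi$-images in $H_{c'}$, the lift being built in each $B_j$ via the inductive coordinate extension of $G_j$.

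The main obstacle will be exactly this path-lifting step: although each $G_j$ admits a global coordinate extension $(G_j,H_j^{(1)},\ldots)$ on $B_j$ by induction, the image of $B_j$ under these coordinates need not be a product, so naively fixing the auxiliary coordinates $H_j^{(\ell)}$ while varying $G_j$ along a prescribed path may leave the coordinate image. To circumvent this I would strengthen the inductive statement with the additional assertion that $G_j\colon B_j\to R_j$ admits a smooth global section $s_j\colon R_j\to B_j$; a product of such sections furnishes a section of $\Psi$, from which the required one-parameter family of lifts is immediate and the path-connectedness of $\{L=c'\}$ follows. The strengthened form is preserved by the inductive step: one assembles a section of $F$ from sections of the $G_j$'s by composing with $\sigma^{-1}$ on the range of $L$, and the box-like shape of $B$ combined with the linear form $\sigma(\langle\mathbf{w},\mathbf{y}\rangle)$ of the activations allows one to keep the constructed section inside $B$.
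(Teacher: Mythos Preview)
Your argument for the coordinate extension is essentially the paper's: locate a single leaf $x_i$ with $F_{x_i}\neq 0$ on all of $B$ and take $(F,x_1,\dots,\widehat{x_i},\dots,x_n)$. The paper simply asserts such a leaf exists, whereas you spell out the inductive reason; both are fine. One small wrinkle: your ``preliminary step'' that every $\sigma_v'$ is nowhere zero does not follow from the sentence you wrote (the leaf $\ell$ witnessing $F_{x_\ell}\neq 0$ at a point need not be a descendant of $v$); but the stronger fact that for each parent-of-a-leaf node some \emph{fixed} leaf child has $F_{x_i}\neq 0$ throughout $B$ gives it immediately, so this is easy to repair.

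For connectedness of level sets your route diverges from the paper. The paper uses the map $\pi=(G_1,\mathrm{id})\colon B_1\times\tilde B\to\Bbb{R}\times\tilde B$, notes that $\pi$ is \emph{open} (because $\nabla G_1\neq 0$) with connected fibers (level sets of $G_1$, handled by induction on the single subtree $\mathcal{T}_1$), and then invokes the elementary fact that an open continuous map with connected fibers pulls connected sets back to connected sets; the level set is the $\pi$-preimage of the graph of a continuous function, hence connected. No sections, no path-lifting.

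Your map $\Psi=(G_1,\dots,G_k)$ is equally good---it is open (a product of submersions) with connected fibers (a product of connected level sets)---so the very same topological lemma finishes the argument in one line. Instead you opt for explicit path-lifting, correctly flag the obstacle that the coordinate image of $B_j$ need not be a product, and propose to strengthen the induction with global sections $s_j\colon R_j\to B_j$. That strengthening \emph{can} be carried through, but your inductive step for it (``compose with $\sigma^{-1}$ \dots\ the box-like shape allows one to keep the section inside $B$'') hides a genuine difficulty: given $c'\in\sum_j w_jR_j$ you must choose $r_j(c')\in R_j$ smoothly with $\sum_j w_jr_j(c')=c'$, and the naive choice (freeze all but one $r_j$) leaves the box in general. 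A correct construction exists (e.g.\ pick increasing diffeomorphisms $\phi_j\colon\Bbb{R}\to R_j$ and invert $s\mapsto\sum_j w_j\phi_j(s)$), but you have not supplied it. So as written there is a gap in the section step; the cleanest fix is to drop sections altogether and use openness of $\Psi$ plus connected fibers, exactly as the paper does.
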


\begin{lemma}\label{integrability'}
A smooth function $F(x_1,\dots,x_n)$ of the form $\sigma(a_1x_1+\dots+a_nx_n)$ satisfies $F_{x_ix_k}F_{x_j}=F_{x_jx_k}F_{x_i}$ for any $1\leq i,j,k\leq n$. Conversely,  if $F$ has a first order partial derivative $F_{x_j}$ which is non-zero throughout an open box-like region $B$ in  its domain, each  identity $F_{x_ix_k}F_{x_j}=F_{x_jx_k}F_{x_i}$ could be written as $\left(\frac{F_{x_i}}{F_{x_j}}\right)_{x_k}=0$; that is, for any $1\leq i\leq n$ the ratio 
$\frac{F_{x_i}}{F_{x_j}}$ should be constant on $B$; and such  requirements guarantee that $F$ admits a representation of the form $\sigma(a_1x_1+\dots+a_nx_n)$ on $B$.
\end{lemma}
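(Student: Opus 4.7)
The plan is to dispatch the two directions separately. For the forward implication I would simply compute: writing $u=a_1x_1+\cdots+a_nx_n$ and applying the chain rule to $F=\sigma(u)$ gives $F_{x_i}=a_i\sigma'(u)$ and $F_{x_ix_k}=a_ia_k\sigma''(u)$, so both $F_{x_ix_k}F_{x_j}$ and $F_{x_jx_k}F_{x_i}$ collapse to $a_ia_ja_k\,\sigma'(u)\sigma''(u)$; this single calculation settles necessity.

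The equivalence of each identity $F_{x_ix_k}F_{x_j}=F_{x_jx_k}F_{x_i}$ with $\left(\frac{F_{x_i}}{F_{x_j}}\right)_{x_k}=0$ is an immediate application of the quotient rule,
\[
\left(\frac{F_{x_i}}{F_{x_j}}\right)_{x_k}=\frac{F_{x_ix_k}F_{x_j}-F_{x_i}F_{x_jx_k}}{F_{x_j}^{2}},
\]
which makes sense and is equivalent to the stated PDE precisely because $F_{x_j}$ does not vanish on $B$.

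For the converse I would argue as follows. Since all first-order partial derivatives of each ratio $F_{x_i}/F_{x_j}$ vanish and $B$ is connected (being a product of intervals), each such ratio is a constant $c_i$ on $B$; note that $c_j=1$, so $\mathbf{c}=(c_1,\dots,c_n)\neq 0$. The identities $F_{x_i}=c_iF_{x_j}$ then combine into $\nabla F=F_{x_j}\,\mathbf{c}$ throughout $B$: the gradient always points along the fixed direction $\mathbf{c}$. A linear change of coordinates to $(u,v_2,\dots,v_n)$, where $u=c_1x_1+\cdots+c_nx_n$ and $v_2,\dots,v_n$ span $\mathbf{c}^\perp$, turns this into $F_{v_k}\equiv 0$ for $k\geq 2$. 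Because $B$ is box-like, its image under this linear map is convex, so every slice $\{u=u_0\}\cap B$ is connected; $F$ is therefore constant on each such slice. Setting $\sigma(u_0)$ equal to that common value produces a representation $F(\mathbf{x})=\sigma(c_1x_1+\cdots+c_nx_n)$ on $B$, and the smoothness of $\sigma$ follows by parametrizing it via the restriction of $F$ to any line through $B$ in the direction $\mathbf{c}$.

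The only point that demands any care is the global well-definedness of $\sigma$. Without some convexity hypothesis on the domain, slices of constant $u$ in $B$ could split into several components on which $F$ might take different values, and only a local representation would survive -- exactly the pathology illustrated by the non-analytic example flagged earlier in this section. Convexity of the box-like region $B$ rules this out, so I expect the argument to close cleanly; the main obstacle is really just to invoke convexity at the right moment rather than to overcome a technical hurdle.
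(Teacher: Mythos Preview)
Your proof is correct and follows essentially the same route as the paper: compute the forward direction directly, rewrite the PDEs as constancy of the ratios $F_{x_i}/F_{x_j}$, deduce that $\nabla F$ is everywhere parallel to a fixed vector, pass to a linear coordinate system with that vector as first coordinate, and use convexity of $B$ to ensure the level hypersurfaces are connected so that $\sigma$ is globally well-defined. The only cosmetic difference is your choice of $v_2,\dots,v_n$ spanning $\mathbf{c}^\perp$, whereas the paper simply keeps $x_1,\dots,x_{j-1},x_{j+1},\dots,x_n$ alongside $u$; either works since $a_j=1$ makes the change of variables invertible.
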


In view of the discussion so far, it is important to know when a smooth vector field 
\begin{equation}\label{vector field}
\mathbf{V}(x_1,\dots,x_n)=[V_1(x_1,\dots,x_n)\,\,\dots\,\, V_n(x_1,\dots,x_n)]^{\rm{T}}
\end{equation}
on an open subset $U\subset\Bbb{R}^n$  is locally given by a gradient. Clearly, a necessary condition is to have
\begin{equation}\label{closed}
(V_i)_{x_j}=(V_j)_{x_i}\quad \forall\, i,j\in\{1,\dots,n\}.     
\end{equation}
It is well known that if $U$ is simply connected this condition is sufficient too and guarantees the existence of a smooth \textit{potential} function $\xi$ on $U$ satisfying 
$\nabla \xi=\mathbf{V}$ \cite{MR1886084}. A succinct way of writing \eqref{closed} is ${\rm{d}}\omega=0$ where $\omega$ is defined as the  \textit{differential form}
\begin{equation}\label{1-form}
\omega:=V_1\,{\rm{d}}x_1+\dots+V_n\,{\rm{d}}x_n.
\end{equation}
Here is a more subtle question also pertinent to our discussion: When $\mathbf{V}$ may be rescaled to a gradient vector field? As the reader may recall from the elementary theory of differential equations, for a planer vector field such a rescaling amounts to finding an \textit{integration factor} for the corresponding first order ODE \cite{boyce2012elementary}. 
It turns out that the answer could again be encoded in terms of differential forms: 
\begin{theorem}\label{integrability}
A smooth vector field $\mathbf{V}$ is parallel to a gradient vector field near each point only if the corresponding differential $1$-form $\omega$ satisfies $\omega\wedge{\rm{d}}\omega=0$. Conversely, if $\mathbf{V}$ is non-zero at a point $\mathbf{p}\in\Bbb{R}^n$ in vicinity of which  $\omega\wedge{\rm{d}}\omega=0$ holds, there exists a smooth function $\xi$ defined on a suitable open neighborhood of $\mathbf{p}$ that satisfies $\mathbf{V}\parallel\nabla\xi\neq\mathbf{0}$. 
In particular, in dimension two,  a nowhere vanishing vector field $\mathbf{V}$ is locally parallel to a nowhere vanishing gradient vector field while in dimension three that is the case if and only if 
$\mathbf{V}\,.\,{\rm{curl}}\mathbf{V}=\mathbf{0}$. 
\end{theorem}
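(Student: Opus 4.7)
The plan is to split the statement into three parts: necessity of $\omega\wedge{\rm{d}}\omega=0$, sufficiency (via the Frobenius integrability theorem), and the two- and three-dimensional specializations.

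For necessity, I would argue as follows. If $\mathbf{V}$ is locally parallel to a gradient near $\mathbf{p}$, then there exist smooth $\mu$ and $\xi$ with $\omega = \mu\,{\rm{d}}\xi$ on a neighborhood. Using ${\rm{d}}({\rm{d}}\xi)=0$, this gives ${\rm{d}}\omega = {\rm{d}}\mu\wedge{\rm{d}}\xi$, whence $\omega\wedge{\rm{d}}\omega = \mu\,{\rm{d}}\xi\wedge{\rm{d}}\mu\wedge{\rm{d}}\xi = 0$ because ${\rm{d}}\xi\wedge{\rm{d}}\xi$ vanishes. This is a purely algebraic manipulation of differential forms and is the easy half.

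For sufficiency, assume $\mathbf{V}(\mathbf{p})\neq\mathbf{0}$ and $\omega\wedge{\rm{d}}\omega=0$ on a neighborhood of $\mathbf{p}$. Then $\omega$ is nowhere vanishing there, and the kernel distribution $\mathcal{D}:=\ker\omega$ is a smooth field of hyperplanes of codimension one. The key observation is that $\omega\wedge{\rm{d}}\omega=0$ is equivalent to the involutivity of $\mathcal{D}$: for any two vector fields $X,Y$ tangent to $\mathcal{D}$, the Cartan formula gives ${\rm{d}}\omega(X,Y) = X(\omega(Y)) - Y(\omega(X)) - \omega([X,Y]) = -\omega([X,Y])$, and $\omega\wedge{\rm{d}}\omega=0$ forces ${\rm{d}}\omega|_{\mathcal{D}}=0$, whence $\omega([X,Y])=0$ and $[X,Y]\in\mathcal{D}$. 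By the Frobenius integrability theorem (Appendix \ref{Frobenius}), $\mathcal{D}$ is tangent to a foliation by $(n{-}1)$-dimensional leaves in a small enough neighborhood of $\mathbf{p}$. Picking a smooth defining function $\xi$ for the local leaf structure with ${\rm{d}}\xi(\mathbf{p})\neq 0$, both ${\rm{d}}\xi$ and $\omega$ are nonzero 1-forms annihilating $\mathcal{D}$; hence there is a nowhere-zero smooth scalar $\mu$ with $\omega=\mu\,{\rm{d}}\xi$, i.e.\ $\mathbf{V}=\mu\,\nabla\xi\neq\mathbf{0}$, which is the desired rescaling.

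For the low-dimensional specializations I would argue directly. When $n=2$, the form $\omega\wedge{\rm{d}}\omega$ has degree three on a two-dimensional manifold and is identically zero; thus the integrability condition is automatic, and every planar $\mathbf{V}$ with $\mathbf{V}(\mathbf{p})\neq\mathbf{0}$ is locally parallel to a non-vanishing gradient. When $n=3$, write $\omega = V_1\,{\rm{d}}x_1 + V_2\,{\rm{d}}x_2 + V_3\,{\rm{d}}x_3$, compute ${\rm{d}}\omega$, and expand: a straightforward calculation yields $\omega\wedge{\rm{d}}\omega = \bigl(\mathbf{V}\cdot{\rm{curl}}\,\mathbf{V}\bigr)\,{\rm{d}}x_1\wedge{\rm{d}}x_2\wedge{\rm{d}}x_3$, so the integrability condition becomes the scalar equation $\mathbf{V}\cdot{\rm{curl}}\,\mathbf{V}=0$. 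The substantive input throughout is Frobenius's theorem, so the main obstacle is really outsourced to the appendix; everything else is algebraic manipulation of 1- and 2-forms plus the standard passage from a codimension-one foliation to a local defining function.
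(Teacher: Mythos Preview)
Your proposal is correct and follows the same route as the paper: both reduce the sufficiency direction to the Frobenius integrability theorem applied to the hyperplane distribution $\ker\omega$, with the paper simply citing Frobenius while you additionally spell out the necessity computation, the Cartan-formula check of involutivity, and the explicit $n=2,3$ specializations. There is no substantive difference in strategy, only in level of detail.
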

\noindent
We refer the reader to Appendix \ref{Frobenius} for a proof and background on differential forms.

\subsection{Trees with four inputs}\label{toy examples-trees}
\begin{figure}
    \centering
    \includegraphics[height=5cm]{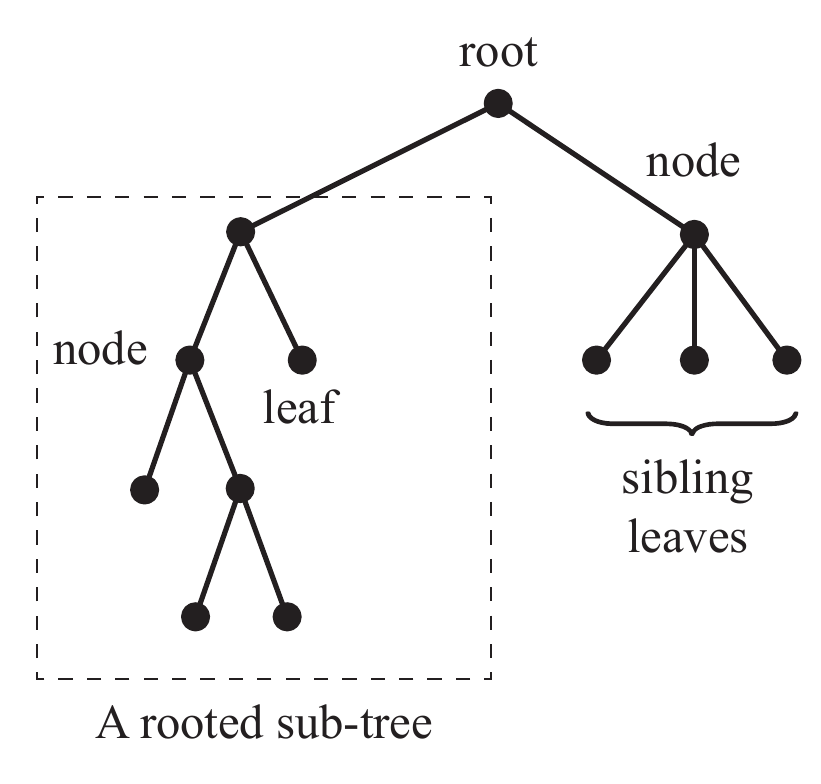}
    \caption{A tree architecture and the relevant terminology.}
    \label{fig:terminology}
\end{figure}

We begin with officially defining the terms related to tree architectures; see Figure \ref{fig:terminology}.

\begin{terminology} A \textbf{tree} is a connected acyclic graph. Singling out a vertex as its \textbf{root} turns it into a directed acyclic graph in which each vertex has a unique predecessor/parent. We take all trees to be rooted. The following notions come up frequently: \\
\begin{minipage}[t]{\linegoal}
\begin{itemize}
\item \textbf{Leaf}: a vertex with no successor/child.
\item \textbf{Node}: a vertex which is not a leaf, i.e. has children.
\item \textbf{Sibling leaves}: leaves with the same parent. 
\item \textbf{Sub-tree}: all descendants of a vertex along with the vertex itself. Hence in our convention all sub-trees are full and rooted. \vspace{2mm}
\end{itemize}
\end{minipage}
To implement a function, the leaves pass the inputs to the functions assigned to the nodes. The final output is received from the root. 
\end{terminology}

\begin{figure}
    \centering
    \includegraphics[height=2.2cm]{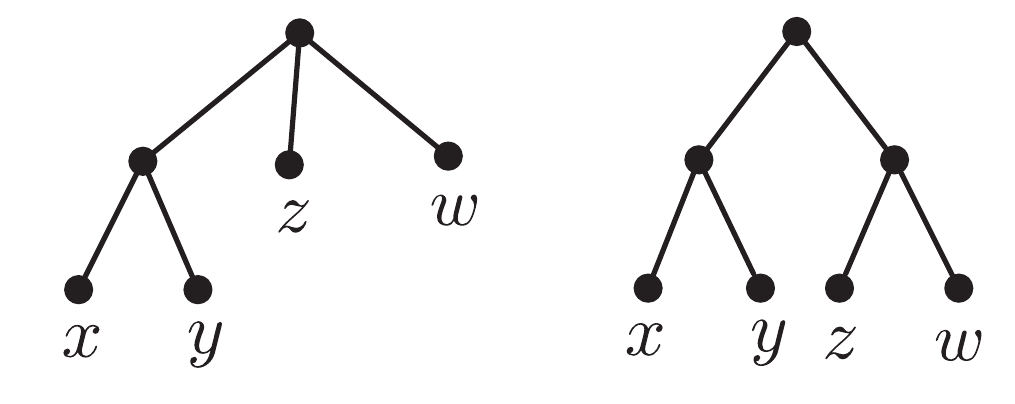}
    \caption{Two tree architectures with four distinct inputs. Examples \ref{example-four-1}, \ref{example-four-2} and \ref{example-four-3} characterize functions computable by them.}
    \label{fig:tree-four}
\end{figure}

The first example of the section elucidates Theorem \ref{main-tree}. 
\begin{example}\label{example-four-1}
Let us characterize superpositions $F(x,y,z,w)=g(f(x,y),z,w)$ of smooth functions $f,g$ which correspond to the first tree architecture in Figure \ref{fig:tree-four}. Necessary PDE constraints are more convenient to write for certain ratios. So to derive them, we assume for a moment that first order partial derivatives of $F$ are non-zero although, by a simple continuity argument, the constraints will hold regardless. Computing the numerator and the denominator of 
$\frac{F_x}{F_y}$ via the chain rule indicates that this ratio coincides with $\frac{f_x}{f_y}$ and is hence independent of $z,w$. One thus obtains 
\small
$$\left(\frac{F_y}{F_x}\right)_z=0, \quad\left(\frac{F_y}{F_x}\right)_w=0;$$ 
\normalsize
or equivalently
$$
F_{yz}F_x=F_{xz}F_y,\quad F_{yw}F_x=F_{xw}F_y.
$$
Assuming $F_x\neq 0$, the preceding constraints are sufficient: The gradient $\nabla F$ is parallel with 
\small
$$
\begin{bmatrix}
1\\
\frac{F_y}{F_x}\\[0.3em]
\frac{F_z}{F_x}\\[0.3em]
\frac{F_w}{F_x}
\end{bmatrix}
$$
\normalsize
where the second entry $\frac{F_y}{F_x}$ is dependent only on $x$ and $y$ and thus may be written as  $\frac{F_y}{F_x}=\frac{f_y}{f_x}$
for an appropriate bivariate function $f=f(x,y)$ defined throughout a small enough neighborhood of the point under consideration (at which $F_x$ is assumed to be non-zero). Such a function exists due to Theorem \ref{integrability}. Now we have
\small
$$
\nabla F\parallel
\begin{bmatrix}
1\\
\frac{f_y}{f_x}\\
0\\
0
\end{bmatrix}
+\frac{F_z}{F_x}
\begin{bmatrix}
0\\
0\\
1\\
0
\end{bmatrix}
+\frac{F_w}{F_x}
\begin{bmatrix}
0\\
0\\
0\\
1
\end{bmatrix}
\in{\text{Span}}\left\{\nabla f,\nabla z,\nabla w\right\},
$$
\normalsize
which guarantees that $F(x,y,z,w)$ may be written as a function of $f(x,y),z,w$.
\end{example}

The next two examples serve as an invitation to the proof of Theorem \ref{main-tree-activation} in \S\ref{generalization-tree1} and are concerned with  trees illustrated in Figure \ref{fig:tree-four}. 
 
\begin{example}\label{example-four-2}
Let us study the example above in the regime of activation functions: The goal is to characterize functions of the form $F(x,y,z,w)=\sigma(\tau(ax+by)+cz+dw)$. The ratios $\frac{F_y}{F_x}, \frac{F_z}{F_w}$ must be constant while $\frac{F_x}{F_z}$ and $\frac{F_x}{F_w}$ are dependent merely on  $x,y$ as they are equal to $\frac{a}{c}\tau'(ax+by)$ and $\frac{a}{d}\tau'(ax+by)$ respectively. Equating the corresponding partial derivatives with zero  we obtain the following PDEs:
\small
\begin{equation*}
\begin{matrix}
&F_{xy}F_x=F_{xx}F_{y}, &F_{yy}F_x=F_{xy}F_{y}, &F_{yz}F_x=F_{xz}F_{y}, &F_{yw}F_x=F_{xw}F_{y};\\
&F_{xz}F_{w}=F_{xw}F_z, &F_{yz}F_{w}=F_{yw}F_z, &F_{zz}F_{w}=F_{zw}F_z, &F_{zw}F_{w}=F_{ww}F_z;\\
&F_{xz}F_z=F_{zz}F_x, &F_{xw}F_z=F_{zw}F_{x}; &F_{xz}F_w=F_{zw}F_{x}, &F_{xw}F_w=F_{ww}F_{x}.
\end{matrix}
\end{equation*}
\normalsize
One can easily verify that they always hold for functions of the form above.  
We claim that under the assumptions of $F_x\neq 0$ and $F_w\neq 0$ these conditions guarantee the existence of a local representation of the form $\sigma(\tau(ax+by)+cz+dw)$ of $F$. Denoting $\frac{F_x}{F_w}$ by $\beta(x,y)$ and the constant functions $\frac{F_y}{F_x}$ and $\frac{F_z}{F_w}$ by $c_1$ and $c_2$ respectively, we have: 
\small
$$
\nabla F=
\begin{bmatrix}
F_x\\
F_y\\
F_z\\
F_w
\end{bmatrix}
\mathlarger{\parallel}
\begin{bmatrix}
\frac{F_x}{F_w}\\[0.3em]
\frac{F_y}{F_w}\\[0.3em]
\frac{F_z}{F_w}\\[0.3em]
1
\end{bmatrix}
=
\begin{bmatrix}
\beta(x,y)\\
c_1\beta(x,y)\\
c_2\\
1
\end{bmatrix}
\mathlarger{\parallel}\,
\nabla (f(x,y)+c_2z+w), $$
\normalsize
where $\nabla f=
\begin{bmatrix}
\beta(x,y)\\
c_1\beta(x,y)
\end{bmatrix}$. 
Such a potential function $f$ for 
$
\begin{bmatrix}
\beta(x,y)\\
c_1\beta(x,y)
\end{bmatrix}
=\begin{bmatrix}
\frac{F_x}{F_w}\\[0.3em]
\frac{F_y}{F_w}
\end{bmatrix}
$
exists since 
\small
$$\left(\frac{F_x}{F_w}\right)_y=\left(\frac{F_y}{F_w}\right)_x\Leftrightarrow\left(\frac{F_y}{F_x}\right)_w=0;$$
\normalsize
and it must be in the form of $\tau(ax+by)$ as $\frac{f_y}{f_x}=c_1$ is constant (Lemma \ref{integrability'}). Thus, $F$ is a function of $\tau(ax+by)+c_2z+w$ because the gradients are parallel. 
\end{example}

The next example is concerned with the symmetric tree in Figure \ref{fig:tree-four}. We shall need the following lemma:

\begin{lemma}\label{split}
Suppose a smooth function $q=q\left(y^{(1)}_1,\dots,y^{(1)}_{n_1};y^{(2)}_1,\dots,y^{(2)}_{n_2}\right)$ is written as a product 
\begin{equation}\label{product}
q_1\left(y^{(1)}_1,\dots,y^{(1)}_{n_1}\right)q_2\left(y^{(2)}_1,\dots,y^{(2)}_{n_2}\right)
\end{equation}
of smooth functions $q_1, q_2$. Then $q\,q_{y^{(1)}_{a}y^{(2)}_{b}}=q_{y^{(1)}_{a}}\,q_{y^{(2)}_{b}}$ for any $1\leq a\leq n_1$ and $1\leq b\leq n_2$. Conversely, for a smooth function $q$ defined on an open box-like region $B_1\times B_2\subseteq\Bbb{R}^{n_1}\times\Bbb{R}^{n_2}$, once $q$ is non-zero 
these identities guarantee the existence of such a product representation on $B_1\times B_2$.
\end{lemma}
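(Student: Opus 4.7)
The forward direction is a direct computation with the product rule. If $q = q_1(y^{(1)})\,q_2(y^{(2)})$, then $q_{y^{(1)}_a} = (q_1)_{y^{(1)}_a}\,q_2$, $q_{y^{(2)}_b} = q_1\,(q_2)_{y^{(2)}_b}$, and $q_{y^{(1)}_a y^{(2)}_b} = (q_1)_{y^{(1)}_a}\,(q_2)_{y^{(2)}_b}$; multiplying the last of these by $q = q_1 q_2$ immediately gives $q\,q_{y^{(1)}_a y^{(2)}_b} = q_{y^{(1)}_a}\,q_{y^{(2)}_b}$.

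For the converse the plan is to pass to a logarithm and prove an additive splitting of $\phi := \log q$. Since $q$ is smooth and nowhere zero on the connected set $B_1\times B_2$ it has constant sign, so after absorbing a sign into (say) $q_1$ we may assume $q > 0$ and set $\phi := \log q$. The hypothesis rewrites as
\[
\left(\frac{q_{y^{(1)}_a}}{q}\right)_{y^{(2)}_b} \;=\; \frac{q\,q_{y^{(1)}_a y^{(2)}_b} - q_{y^{(1)}_a}\,q_{y^{(2)}_b}}{q^{2}} \;=\; 0,
\]
so $\phi_{y^{(1)}_a}$ is independent of every $y^{(2)}_b$; by symmetry of mixed partials, $\phi_{y^{(2)}_b}$ is then independent of every $y^{(1)}_a$. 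Accordingly, write $\phi_{y^{(1)}_a} = \psi_a(y^{(1)})$ and $\phi_{y^{(2)}_b} = \eta_b(y^{(2)})$.

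To finish, I would integrate these two families. The tuple $(\psi_a)_a$ is a closed $1$-form on $B_1$ since $(\psi_a)_{y^{(1)}_{a'}} = \phi_{y^{(1)}_a y^{(1)}_{a'}}$ is symmetric in $a,a'$; because $B_1$ is a product of intervals, and hence simply connected, a smooth potential $\phi_1(y^{(1)})$ with $\nabla \phi_1 = (\psi_a)_a$ exists on all of $B_1$ (this is the classical setting invoked before Theorem \ref{integrability}). An identical argument yields $\phi_2(y^{(2)})$ on $B_2$. Then $\phi - \phi_1 - \phi_2$ has identically vanishing gradient on the connected open set $B_1\times B_2$ and so equals a constant $c$. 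Exponentiating gives the desired factorization $q = \bigl(e^{c+\phi_1(y^{(1)})}\bigr)\,e^{\phi_2(y^{(2)})}$ on the entirety of $B_1\times B_2$.

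The main obstacle, and the reason the hypothesis on the shape of the domain is imposed, is that the splitting is asserted \emph{globally} on $B_1\times B_2$ rather than merely near each point. The box-like assumption enters twice: once to make $\log q$ a single well-defined smooth function (constant sign of $q$ on the connected domain), and once to promote the closed $1$-forms $(\psi_a)$ and $(\eta_b)$ to global potentials via simple connectivity of $B_1$ and $B_2$. On a non-product or non-simply-connected domain either step may fail, which is perfectly consistent with the general cautionary discussion of local-versus-global representations that precedes the lemma.
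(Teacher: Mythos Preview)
Your proof is correct and follows essentially the same route as the paper: pass to $\log q$ after fixing a sign, show the mixed partials $\phi_{y^{(1)}_a y^{(2)}_b}$ vanish, obtain an additive splitting $\phi=\phi_1+\phi_2+c$, and exponentiate. The only cosmetic difference is in the additive-splitting step: the paper carries out an explicit induction via the Fundamental Theorem of Calculus (peeling off one $y^{(1)}$-variable at a time), whereas you package the same integration as ``closed $1$-form on a box has a global potential'' and invoke simple connectivity of $B_1$ and $B_2$; these are the same argument in slightly different dress.
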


\begin{example}\label{example-four-3}
We aim for characterizing  smooth functions of four variables which are of the form $F(x,y,z,w)=\sigma(\tau_1(ax+by)+\tau_2(cz+dw))$. Assuming for a moment that all first order partial derivatives are non-zero, the ratios $\frac{F_y}{F_x}, \frac{F_z}{F_w}$ must be constant 
while $\frac{F_x}{F_w}$ is equal to $\frac{a\tau'_1(ax+by)}{d\tau'_2(cz+dw)}$ and hence (along with its constant multiples $\frac{F_x}{F_z},  \frac{F_y}{F_z}, \frac{F_y}{F_w}$) splits into a product of bivariate functions of $x,y$ and $z,w$; a requirement which by Lemma \ref{split} is equivalent to the following identities:
\small
\begin{equation*}
\begin{split}
&\frac{F_x}{F_w}\left(\frac{F_x}{F_w}\right)_{xz}=\left(\frac{F_x}{F_w}\right)_{x}\left(\frac{F_x}{F_w}\right)_{z}, \quad \frac{F_x}{F_w}\left(\frac{F_x}{F_w}\right)_{xw}=\left(\frac{F_x}{F_w}\right)_{x}\left(\frac{F_x}{F_w}\right)_{w}, \\
&\frac{F_x}{F_w}\left(\frac{F_x}{F_w}\right)_{yz}=\left(\frac{F_x}{F_w}\right)_{y}\left(\frac{F_x}{F_w}\right)_{z}, \quad
\frac{F_x}{F_w}\left(\frac{F_x}{F_w}\right)_{yw}=\left(\frac{F_x}{F_w}\right)_{y}\left(\frac{F_x}{F_w}\right)_{w}.
\end{split}
\end{equation*}
\normalsize
After expanding and cross-multiplying, the identities above result in PDEs of the form \eqref{temp3} imposed on $F$ that hold for any smooth function of the form $F(x,y,z,w)=\sigma(\tau_1(ax+by)+\tau_2(cz+dw))$.
Conversely, we claim that if $F_x\neq 0$ and $F_w\neq 0$ then the aforementioned constraints guarantee that $F$ locally admits a representation of this form.  
Denoting the constants $\frac{F_y}{F_x}$ and $\frac{F_z}{F_w}$ by $c_1$ and $c_2$ respectively and writing $\frac{F_x}{F_w}\neq 0$ in the split form  
$\frac{\beta(x,y)}{\gamma(z,w)}$, we obtain
\small
$$
\nabla F=
\begin{bmatrix}
F_x\\
F_y\\
F_z\\
F_w
\end{bmatrix}
\mathlarger{\parallel}
\begin{bmatrix}
\frac{F_x}{F_w}\\[0.3em]
\frac{F_y}{F_w}\\[0.3em]
\frac{F_z}{F_w}\\[0.3em]
1
\end{bmatrix}
=
\begin{bmatrix}
\frac{\beta(x,y)}{\gamma(z,w)}\\[0.3em]
c_1\frac{\beta(x,y)}{\gamma(z,w)}\\
c_2\\
1
\end{bmatrix}
\mathlarger{\parallel}
\begin{bmatrix}
\beta(x,y)\\
c_1\beta(x,y)\\
c_2\gamma(z,w)\\
\gamma(z,w)
\end{bmatrix}.
$$
\normalsize
We desire functions $f=f(x,y)$ and $g=g(z,w)$ with 
$\nabla f=
\begin{bmatrix}
\beta(x,y)\\
c_1\beta(x,y)\\
\end{bmatrix}$
and
$\nabla g=
\begin{bmatrix}
c_2\gamma(z,w)\\
\gamma(z,w)\\
\end{bmatrix}$;
because then $\nabla F\parallel\nabla (f(x,y)+g(z,w))$ and hence $F=\sigma\left(f(x,y)+g(z,w)\right)$ for an appropriate $\sigma$. Notice that $f(x,y)$ and $g(z,w)$ are automatically in the forms of $\tau_1(ax+by)$ and $\tau_2(cz+dw)$ because $\frac{f_y}{f_x}=c_1$ and $\frac{f_z}{f_w}=c_2$ are constants (see Lemma \ref{integrability'}). To establish the existence of $f$ and $g$ one should verify the integrability conditions 
$\beta_y=c_1\beta_x$ and  $c_2\gamma_w=\gamma_z$.
We only verify the first one and the second one is similar. Notice that $\frac{F_y}{F_x}=c_1$ is constant, and  $\frac{F_x}{F_w}=\frac{\beta(x,y)}{\gamma(z,w)}$
implies that
$\beta_x=\beta\frac{\left(\frac{F_x}{F_w}\right)_x}{\frac{F_x}{F_w}}$ while $\beta_y=\beta\frac{\left(\frac{F_x}{F_w}\right)_y}{\frac{F_x}{F_w}}$. So the question is whether 
\small
$$
\frac{F_y}{F_x}\left(\frac{F_x}{F_w}\right)_x=\left(\frac{F_y}{F_x}\frac{F_x}{F_w}\right)_x=\left(\frac{F_y}{F_w}\right)_x
$$
\normalsize
and $\left(\frac{F_x}{F_w}\right)_y$ coincide, which is the case since $\left(\frac{F_y}{F_w}\right)_x=\left(\frac{F_x}{F_w}\right)_y$ can be rewritten as $\left(\frac{F_y}{F_x}\right)_w=0$.  
\end{example}

\begin{remark}\label{reverse-engineer}
Examples \ref{example-four-2} and \ref{example-four-3} demonstrate an interesting phenomenon: One can deduce non-trivial facts about the weights once a formula for the implemented function is available. In Example \ref{example-four-2}, for a function $F(x,y,z,w)=\sigma(\tau(ax+by)+cz+dw)$ we have $\frac{F_y}{F_x}\equiv\frac{b}{a}$ and $\frac{F_z}{F_w}\equiv\frac{c}{d}$. The same identities are valid for functions of the form  $F(x,y,z,w)=\sigma(\tau_1(ax+by)+\tau_2(cz+dw))$ appeared in Example \ref{example-four-3}. Notice that this is the best one can hope to recover because through scaling the weights and inversely scaling the inputs of activation functions, the function $F$ could also be written as $\sigma(\tilde{\tau}(\lambda ax+\lambda by)+cz+dw)$ or 
$\sigma(\tilde{\tau_1}(\lambda ax+\lambda by)+\tau_2(cz+dw))$ where $\tilde{\tau}(y):=\tau\left(\frac{y}{\lambda}\right)$ and $\tilde{\tau_1}(y):=\tau_1\left(\frac{y}{\lambda}\right)$. Thus the other ratios $\frac{a}{c}$ and $\frac{b}{d}$ are completely arbitrary.
\end{remark}

\begin{example}\label{toy example 1}
Let us go back to Example \ref{basic-1'}: In \cite[\S 7.2]{Farhoodi2019OnFC}, a PDE constraint on functions of the form \eqref{toy1}  is obtained via differentiating \eqref{auxiliary7} several times and forming a matrix equation which implies that a certain determinant of partial derivatives must vanish. The paper then raises the question of existence of PDE constraints that are both necessary and sufficient. The goal of this example is to derive such a characterization. Applying differentiation operators $\partial_y$, $\partial_z$ and $\partial_{yz}$ to \eqref{auxiliary7} results in the matrix equation below:  
\small
$$
\begin{bmatrix}
F_y    &F_z    &0     &0  \\
F_{yy} &F_{yz} &F_y   &0  \\
F_{yz} &F_{zz} &0     &F_z\\
F_{yyz}&F_{yzz}&F_{yz}&F_{yz}
\end{bmatrix}
\begin{bmatrix}
A\\
B\\
A_y\\
B_z
\end{bmatrix}
=
\begin{bmatrix}
F_x\\
F_{xy}\\
F_{xz}\\
F_{xyz}
\end{bmatrix}.
$$
\normalsize
If the matrix above is non-singular -- which is a non-vanishing condition -- Cramer's rule provides descriptions of $A,B$ in terms of partial derivatives of $F$, and then $A_z=B_y=0$ yield PDE constraints. Reversing this procedure, we show that these conditions are sufficient too. Let us assume that 
\small
\begin{equation}\label{toy1-1}
\Psi:=
\begin{vmatrix}
F_y    &F_z    &0     &0\\
F_{yy} &F_{yz} &F_y   &0\\
F_{yz} &F_{zz} &0     &F_z\\
F_{yyz}&F_{yzz}&F_{yz}&F_{yz}
\end{vmatrix}
=(F_y)^2F_zF_{yzz}-(F_y)^2F_{yz}F_{zz}-F_y(F_z)^2F_{yyz}+(F_z)^2F_{yz}F_{yy}\neq 0.
\end{equation}
\normalsize
Notice that this condition is non-vacuous for functions $F(x,y,z)$ of the form \eqref{toy1} since they include all functions of the form $g(y,z)$.
Then the linear system 
\small
\begin{equation}\label{system}
\begin{bmatrix}
F_x\\
F_{xy}\\
F_{xz}\\
F_{xyz}
\end{bmatrix}
=
\begin{bmatrix}
F_y    &F_z    &0     &0\\
F_{yy} &F_{yz} &F_y   &0\\
F_{yz} &F_{zz} &0     &F_z\\
F_{yyz}&F_{yzz}&F_{yz}&F_{yz}
\end{bmatrix}
\begin{bmatrix}
A\\
B\\
C\\
D
\end{bmatrix}
\end{equation}
\normalsize
may be solved as
\footnotesize
\begin{equation}\label{huge1}
A=\frac{\begin{vmatrix}
F_x&F_z&0&0\\
F_{xy}&F_{yz}&F_y&0\\
F_{xz}&F_{zz}&0&F_z\\
F_{xyz}&F_{yzz}&F_{yz}&F_{yz}
\end{vmatrix}
}
{\begin{vmatrix}
F_y&F_z&0&0\\
F_{yy}&F_{yz}&F_y&0\\
F_{yz}&F_{zz}&0&F_z\\
F_{yyz}&F_{yzz}&F_{yz}&F_{yz}
\end{vmatrix}
}
=\frac{1}{\Psi}\left[-F_y(F_z )^2F_{xyz}+F_yF_zF_{xz}F_{yz}+F_xF_yF_zF_{yzz}-F_xF_yF_{yz}F_{zz}+(F_z )^2F_{xy}F_{yz}-F_xF_z(F_{yz})^2\right];
\end{equation}
\normalsize
and 
\footnotesize
\begin{equation}\label{huge2}
B=\frac{\begin{vmatrix}
F_y&F_x&0&0\\
F_{yy}&F_{xy}&F_y&0\\
F_{yz}&F_{xz}&0&F_z\\
F_{yyz}&F_{xyz}&F_{yz}&F_{yz}
\end{vmatrix}
}
{\begin{vmatrix}
F_y&F_z&0&0\\
F_{yy}&F_{yz}&F_y&0\\
F_{yz}&F_{zz}&0&F_z\\
F_{yyz}&F_{yzz}&F_{yz}&F_{yz}
\end{vmatrix}
}
=\frac{1}{\Psi}\left[(F_y)^2F_zF_{xyz}-(F_y)^2F_{xz}F_{yz}-F_yF_zF_{xy}F_{yz}-F_xF_yF_zF_{yyz}+F_xF_y(F_{yz})^2+F_xF_zF_{yy}F_{yz}\right].
\end{equation}
\normalsize
Denote the numerators of \eqref{huge1} and \eqref{huge2} by $\Psi_1$ and $\Psi_2$ respectively:
\small
\begin{equation}\label{toy1-2}
\begin{split}
&\Psi_1=-F_y(F_z )^2F_{xyz}+F_yF_zF_{xz}F_{yz}+F_xF_yF_zF_{yzz}-F_xF_yF_{yz}F_{zz}+(F_z )^2F_{xy}F_{yz}-F_xF_z(F_{yz})^2,\\
&\Psi_2=(F_y)^2F_zF_{xyz}-(F_y)^2F_{xz}F_{yz}-F_yF_zF_{xy}F_{yz}-F_xF_yF_zF_{yyz}+F_xF_y(F_{yz})^2+F_xF_zF_{yy}F_{yz}.
\end{split}
\end{equation}
\normalsize
Requiring $A=\frac{\Psi_1}{\Psi}$ and $B=\frac{\Psi_2}{\Psi}$ to be independent of $z$ and $y$ respectively amounts to the following:  
\begin{equation}\label{toy1-3}
\Phi_1:=(\Psi_1)_z\Psi-\Psi_1\Psi_z=0,\quad \Phi_2:=(\Psi_2)_y\Psi-\Psi_2\Psi_y=0.
\end{equation}
A simple continuity argument demonstrates that the constraints $\Phi_1=0$ and $\Phi_2=0$ above are necessary even if the determinant \eqref{toy1-1} vanishes: If $\Psi$ is identically zero on a neighborhood of a point $\mathbf{p}\in\Bbb{R}^3$, the identities \eqref{toy1-3} obviously hold throughout that neighborhood. Another possibility is that $\Psi(\mathbf{p})=0$ but there is a sequence $\{\mathbf{p}_n\}_n$ of nearby points with $\mathbf{p}_n\to\mathbf{p}$ and $\Psi(\mathbf{p}_n)\neq 0$. Then the polynomial expressions $\Phi_1$, $\Phi_2$ of partial derivatives vanish at any $\mathbf{p}_n$ and hence at $\mathbf{p}$ by continuity.   \\  
\indent 
To finish the verification of Conjecture \ref{conjecture} for superpositions of the form \eqref{toy1}, one should establish that PDEs $\Phi_1=0$, $\Phi_2=0$ from \eqref{toy1-3} are  sufficient for the existence of such a representation provided that the non-vanishing condition $\Psi\neq 0$ from \eqref{toy1-1} holds: In that case, the functions $A$ and $B$ from \eqref{huge1} and \eqref{huge2} satisfy \eqref{auxiliary7}. According to Theorem \ref{integrability}, there exist smooth locally defined $f(x,y)$ and $h(x,z)$ with $\frac{f_x}{f_y}=A(x,y)$ and $\frac{h_x}{h_z}=B(x,z)$. We have:
\small
$$
\nabla F=
\begin{bmatrix}
A(x,y)F_y+B(x,z)F_z\\
F_y\\
F_z
\end{bmatrix}
=F_y\begin{bmatrix} A(x,y)\\1\\0   \end{bmatrix}+F_z\begin{bmatrix} B(x,z)\\0\\1    \end{bmatrix}=
\frac{F_y}{f_y}\begin{bmatrix} f_x\\f_y\\0   \end{bmatrix}+\frac{F_z}{h_z}\begin{bmatrix} h_x\\0\\h_z   \end{bmatrix}\in\text{Span}\{\nabla f,\nabla h\};
$$
\normalsize
hence $F$ can be written as a function $g(f(x,y),h(x,z))$ of $f$ and $h$ for an appropriate $g$.

\end{example}

\begin{example}\label{toy example 2}

\begin{figure}
    \centering
    \includegraphics[height=3cm]{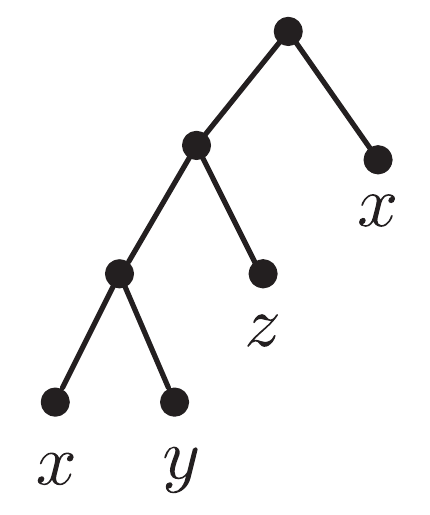}
    \caption{An asymmetric tree architecture that computes the superpositions of the form $F(x,y,z)=g(x,f(h(x,y),z))$. These are characterized in Example \ref{toy example 2}.}
    \label{fig:asymmetric-toy}
\end{figure}

We now turn into the asymmetric tree with four repeated inputs in Figure \ref{fig:asymmetric-toy} with the corresponding superpositions
\begin{equation}\label{toy2}
F(x,y,z)=g(x,f(h(x,y),z)).
\end{equation}
In our treatment here, the steps are reversible and  we hence derive PDE constraints that are simultaneously necessary and sufficient. The existence of a representation of the form \eqref{toy2} for $F(x,y,z)$ is equivalent to the existence of a locally defined coordinate system 
$$
(\xi:=x,\zeta,\eta)
$$
with respect to which $F_\eta=0$; and moreover, $\zeta=\zeta(x,y,z)$ must be in the form of  $f(h(x,y),z)$ which, according to Example \ref{basic-1}, is the case if and only if $\left(\frac{\zeta_y}{\zeta_x}\right)_z=0$. Here, we assume that $\zeta_x,\zeta_y\neq 0$ so that  $\frac{\zeta_y}{\zeta_x}$ is well defined and $\nabla\xi,\nabla\zeta$ are linearly independent. We denote the preceding ratio by $\beta=\beta(x,y)\neq 0$. Conversely, Theorem \ref{integrability} guarantees that there exists $\zeta$ with $\frac{\zeta_y}{\zeta_x}=\beta$ for any smooth $\beta(x,y)$. The function $F$ could be locally written as a function of $\xi=x$ and $\zeta$ if and only if  
\small
$$
\nabla F=
\begin{bmatrix}
F_x\\
F_y\\
F_z
\end{bmatrix}\in
\text{Span}\left\{\nabla x,\nabla \zeta=
\begin{bmatrix}
\zeta_x\\
\zeta_y=\beta(x,y)\zeta_x\\
\zeta_z
\end{bmatrix}
\right\}.
$$
\normalsize
Clearly, this occurs if and only if $\frac{F_z}{F_y}$ coincides with $\frac{\zeta_z}{\zeta_y}$. Therefore, one only needs to arrange for $\beta(x,y)$ so that the vector field
\small
$$
\frac{1}{\zeta_x}\nabla\zeta=
\begin{bmatrix}
1\\
\vspace{1mm}
\frac{\zeta_y}{\zeta_x}\\
\frac{\zeta_z}{\zeta_x}
\end{bmatrix}
=
\begin{bmatrix}
1\\
\beta(x,y)\\
\beta(x,y)\frac{F_z}{F_y}
\end{bmatrix}
$$
\normalsize
is parallel to a gradient vector field $\nabla\zeta$. That is to say, we want the vector field to be perpendicular to its curl; cf. Theorem \ref{integrability}. We have:
\small
\begin{equation*}
\begin{split}
&\left(\frac{\partial}{\partial x}+\beta(x,y)\frac{\partial}{\partial y}+\beta(x,y)\frac{F_z}{F_y}\frac{\partial}{\partial z}\right)\,.\,{\rm{curl}}\left(\frac{\partial}{\partial x}+\beta(x,y)\frac{\partial}{\partial y}+\beta(x,y)\frac{F_z}{F_y}\frac{\partial}{\partial z}\right)\\
&=\beta_y\frac{F_z}{F_y}+\beta\left(\frac{F_z}{F_y}\right)_y-\beta^2\left(\frac{F_z}{F_y}\right)_x.
\end{split}    
\end{equation*}
\normalsize
The vanishing of the expression above results in a description of $\left(\frac{F_z}{F_y}\right)_x$ as the linear combination
\small
\begin{equation}\label{auxiliary8}
\left(\frac{F_z}{F_y}\right)_x=\frac{\beta_y}{\beta^2}\frac{F_z}{F_y}+\frac{1}{\beta}\left(\frac{F_z}{F_y}\right)_y
\end{equation}
\normalsize
whose coefficients $\frac{\beta_y}{\beta^2}=-\left(\frac{1}{\beta}\right)_y$ and $\frac{1}{\beta}$ are independent of $z$. Thus, we are in a situation similar to that of Examples \ref{basic-1'} and \ref{toy example 1} where we encountered identity \eqref{auxiliary7}. The same idea used there could be applied again to obtain PDE constraints: Differentiating \eqref{auxiliary8} with respect to $z$ results in a linear system 
\small
$$
\begin{bmatrix}
\frac{F_z}{F_y} & \left(\frac{F_z}{F_y}\right)_y\\
\left(\frac{F_z}{F_y}\right)_z &\left(\frac{F_z}{F_y}\right)_{yz}
\end{bmatrix}
\begin{bmatrix}
-\left(\frac{1}{\beta}\right)_y\\
\frac{1}{\beta}
\end{bmatrix}
=\begin{bmatrix}
\left(\frac{F_z}{F_y}\right)_x\\
\left(\frac{F_z}{F_y}\right)_{xz}
\end{bmatrix}.
$$
\normalsize
Assuming the matrix above is non-singular, the Cramer's rule implies:
\small
\begin{equation}\label{huge3}
-\left(\frac{1}{\beta}\right)_y=\frac{
\begin{vmatrix}
\left(\frac{F_z}{F_y}\right)_x & \left(\frac{F_z}{F_y}\right)_y\\
\left(\frac{F_z}{F_y}\right)_{xz} &\left(\frac{F_z}{F_y}\right)_{yz}
\end{vmatrix}}
{\begin{vmatrix}
\frac{F_z}{F_y} & \left(\frac{F_z}{F_y}\right)_y\\
\left(\frac{F_z}{F_y}\right)_z &\left(\frac{F_z}{F_y}\right)_{yz}
\end{vmatrix}}, 
\quad 
\frac{1}{\beta}=\frac{
\begin{vmatrix}
\frac{F_z}{F_y} & \left(\frac{F_z}{F_y}\right)_x\\
\left(\frac{F_z}{F_y}\right)_z &\left(\frac{F_z}{F_y}\right)_{xz}
\end{vmatrix}}
{\begin{vmatrix}
\frac{F_z}{F_y} & \left(\frac{F_z}{F_y}\right)_y\\
\left(\frac{F_z}{F_y}\right)_z &\left(\frac{F_z}{F_y}\right)_{yz}
\end{vmatrix}}.  
\end{equation}
\normalsize
We now arrive at the desired PDE characterization of superpositions \eqref{toy2}: In each of the ratios of determinants appearing in \eqref{huge3}, the numerator and denominator are in the form of polynomials of partial derivatives divided by $(F_y)^4$. So we introduce the following polynomial expressions:
\small
\begin{equation}\label{toy2-1}
\Psi_1=(F_y)^4\begin{vmatrix}
\frac{F_z}{F_y} & \left(\frac{F_z}{F_y}\right)_y\\
\left(\frac{F_z}{F_y}\right)_z &\left(\frac{F_z}{F_y}\right)_{yz}
\end{vmatrix},\quad 
\Psi_2=(F_y)^4\begin{vmatrix}
\frac{F_z}{F_y} & \left(\frac{F_z}{F_y}\right)_x\\
\left(\frac{F_z}{F_y}\right)_z &\left(\frac{F_z}{F_y}\right)_{xz}
\end{vmatrix},\quad 
\Psi_3=(F_y)^4\begin{vmatrix}
\left(\frac{F_z}{F_y}\right)_x & \left(\frac{F_z}{F_y}\right)_y\\
\left(\frac{F_z}{F_y}\right)_{xz} &\left(\frac{F_z}{F_y}\right)_{yz}
\end{vmatrix}.
\end{equation}
\normalsize
Then in view of \eqref{huge3}: 
\small
\begin{equation}\label{toy2-1'}
\frac{\Psi_2}{\Psi_1}=\frac{1}{\beta}, \quad \frac{\Psi_3}{\Psi_1}=-\left(\frac{1}{\beta}\right)_y.
\end{equation}
\normalsize
Hence $\left(\frac{\Psi_2}{\Psi_1}\right)_y+\frac{\Psi_3}{\Psi_1}=0$; and furthermore, $\left(\frac{\Psi_2}{\Psi_1}\right)_z=0$ since $\beta$ is independent of $z$:
\begin{equation}\label{toy2-2}
\Phi_1:=\Psi_1(\Psi_2)_y-(\Psi_1)_y\Psi_2+\Psi_1\Psi_3=0,\quad
\Phi_2:=\Psi_1(\Psi_2)_z-(\Psi_1)_z\Psi_2=0.    
\end{equation}
Again as in Example \ref{toy example 1}, a continuity argument implies that the algebraic PDEs above are necessary even when the denominator in \eqref{huge3} (i.e. $\Psi_1$) is zero. As for the non-vanishing conditions, in view of \eqref{toy2-1} and \eqref{toy2-1'}, we require $F_y$ to be non-zero as well as  $\Psi_1$ and $\Psi_2$ (recall that $\beta\neq 0$):
\begin{equation}\label{toy2-3}
\Psi_1\neq 0, \Psi_2\neq 0, F_y\neq 0.  
\end{equation}
It is easy to see that these conditions are not vacuous for functions of the form \eqref{toy2}: If $F(x,y,z)=(xy)^2z+z^3$ neither $F_y$ nor the expressions $\Psi_1$ or $\Psi_2$ is identically zero.\\
\indent In summary, a special case of Conjecture \ref{conjecture} has been verified in this example: A function $F=F(x,y,z)$ of the form \eqref{toy2} satisfies the constraints \eqref{toy2-2}; and conversely, a smooth functions satisfying them along with the non-vanishing conditions \eqref{toy2-3} admits a local representation of that form.  
\end{example}

\subsection{Examples of functions computed by neural networks}\label{toy examples-networks}
We now switch from trees to examples of PDE constraints for neural networks. The first two examples are concerned with the network illustrated on the left of Figure \ref{fig:basic-tree}; this is a ResNet with two hidden layers that has $x,y,z,w$ as its inputs. The functions it implements are in the form of 
\begin{equation}\label{auxiliary13}
F(x,y,z,w)=g(h_1(f(x,y),z),h_2(f(x,y),w))    
\end{equation}
where $f$ and $h_1,h_2$ are the functions appearing in the hidden layers.

\begin{example}\label{network2}
On the right of Figure \ref{fig:basic-tree}, the tree architecture corresponding to the neural network discussed above is illustrated. The functions implemented by this tree are in the form of 
\begin{equation}\label{auxiliary13'}
F(x,y,z,w)=g(h_1(f_1(x,y),z),h_2(f_2(x,y),w))    
\end{equation}
which is a  form more general than the form \eqref{auxiliary13} of functions  computable by the network. As a matter of fact, there are PDEs satisfied by the latter class which functions in the former class  \eqref{auxiliary13'} do not necessarily satisfy.   To see this, observe that for a function $F(x,y,z,w)$ of the form \eqref{auxiliary13} the ratio $\frac{F_y}{F_x}$ coincides with $\frac{f_y}{f_x}$ and is thus independent of $z$ and $w$; hence the PDEs $F_{xz}F_{y}=F_{yz}F_{x}$ and $F_{xw}F_{y}=F_{yw}F_{x}$. Neither of them holds for the function $F(x,y,z,w)=xyz+(x+y)w$ which is of the form \eqref{auxiliary13'}.
We deduce that the set of PDE constraints for a network may be strictly larger than that of the  corresponding TENN. 
\begin{figure}
    \centering
    \includegraphics[height=3cm]{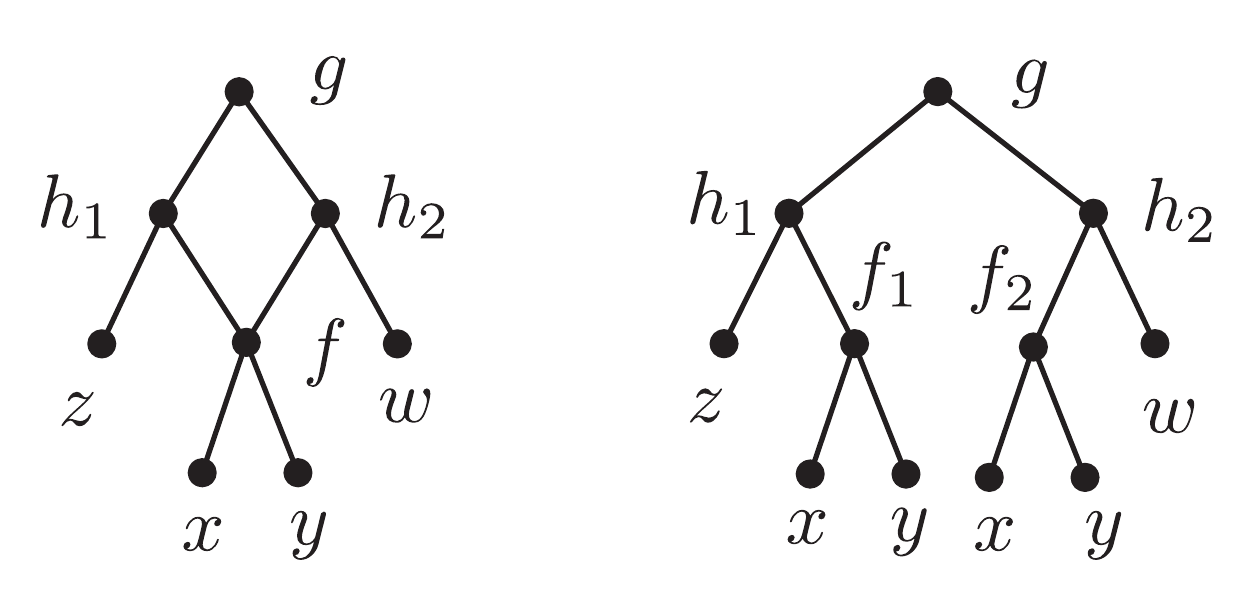}
    \caption{The space of functions computed by the neural network on the left is strictly smaller than that of its TENN on the right; see Example \ref{network2}.}
    \label{fig:basic-tree}
\end{figure}
\end{example}

\begin{example}\label{network2'}
Here we briefly argue that Conjecture \ref{conjecture} holds for the network in Figure \ref{fig:basic-tree} (which has two hidden layers). The goal is to obtain PDEs that, given suitable non-vacuous non-vanishing conditions, characterize smooth functions $F(x,y,z,w)$ of the form \eqref{auxiliary13}. We seek a description of the form $g(F_1(x,y,z),F_2(x,y,w))$ of $F(x,y,z,w)$ where the trivariate functions $F_1(x,y,z)$ and $F_2(x,y,w)$ are superpositions $h_1(f(x,y),z)$ and $h_2(f(x,y),w)$ with the same bivariate function $f$ appearing in both of them. Invoking the logic that has been used repeatedly in \S\ref{toy examples-trees}; $\nabla F$ must be a linear combination of $\nabla F_1$ and $\nabla F_2$. Following Example \ref{basic-1}, the only restriction on the latter two gradients is 
\small
$$
\nabla F_1\,\mathlarger{\parallel}
\begin{bmatrix}
1\\
\frac{(F_1)_y}{(F_1)_x}=\frac{f_y}{f_x}\\[0.3em]
\alpha(x,y,z):=\frac{(F_1)_z}{(F_1)_x}\\
0
\end{bmatrix},
\quad 
\nabla F_2\,\mathlarger{\parallel}
\begin{bmatrix}
1\\
\frac{(F_2)_y}{(F_2)_x}=\frac{f_y}{f_x}\\
0\\
\beta(x,y,w):=\frac{(F_2)_w}{(F_2)_x}
\end{bmatrix};
$$
\normalsize
and as observed in Example \ref{network2}, the ratio $\frac{f_y}{f_x}$ coincides with $\frac{F_y}{F_x}$. Thus, the existence of a representation of the form \eqref{auxiliary13} is equivalent 
to the existence of a linear relation such as  
\small
$$
\begin{bmatrix}
F_x\\
F_y\\
F_z\\
F_w
\end{bmatrix}
=
\frac{F_z}{\alpha}
\begin{bmatrix}
1\\
\frac{F_y}{F_x}\\
\alpha(x,y,z)\\
0
\end{bmatrix}
+
\frac{F_w}{\beta}
\begin{bmatrix}
1\\
\frac{F_y}{F_x}\\
0\\
\beta(x,y,w)
\end{bmatrix}.
$$
\normalsize
This amounts to the equation 
\small
$$F_z\left(\frac{1}{\alpha}\right)+F_w\left(\frac{1}{\beta}\right)=F_x.$$
\normalsize
Now the idea of Examples \ref{basic-1'} and \ref{toy example 1} applies: As $\left(\frac{1}{\alpha}\right)_w=0$ and $\left(\frac{1}{\beta}\right)_z=0$, applying the operators $\partial_z$, $\partial_w$ and $\partial_{zw}$ to the last equation results in a linear system with four equations and four unknowns $\frac{1}{\alpha}$, $\frac{1}{\beta}$,  $\left(\frac{1}{\alpha}\right)_z$ and $\left(\frac{1}{\beta}\right)_w$. If non-singular (a non-vanishing condition), the system may be solved to obtain expressions purely in terms of partial derivatives of $F$ for the aforementioned unknowns. Now $\left(\frac{1}{\alpha}\right)_w=0$ and $\left(\frac{1}{\beta}\right)_z=0$ along with the equations $F_{xz}F_{y}=F_{yz}F_{x}$, $F_{xw}F_{y}=F_{yw}F_{x}$ from Example \ref{network2} yield four algebraic PDEs characterizing superpositions \eqref{auxiliary13}. 
\end{example}

The final example of this section finishes Example \ref{basic-2} from the introduction. 
\begin{example}\label{network1}
We go back to Example \ref{basic-2} to study PDEs and PDIs satisfied by functions of the form \eqref{2var_form}. Absorbing $a'',b''$ into inner functions, we can focus on the simpler form
\begin{equation}\label{auxiliary5}
F(x,t)=\sigma(f(ax+bt)+g(a'x+b't)). 
\end{equation}
Let us for the time being  forget about the outer activation function $\sigma$: Consider functions such as 
$$G(x,t)=f(ax+bt)+g(a'x+b't).$$
Smooth functions of this form constitute solutions of a second order linear homogeneous PDE with constant coefficients 
\begin{equation}\label{auxiliary6}
UG_{xx}+VG_{xt}+WG_{tt}=0,
\end{equation}
where $(a,b)$ and $(a',b')$ satisfy 
\begin{equation}\label{P1}
UA^2+VAB+WB^2=0.
\end{equation}
The reason is that when $(a,b)$ and $(a',b')$ satisfy \eqref{P1}, the differential operator
$
U\partial_{xx}+V\partial_{xt}+W\partial_{tt}
$
can be factorized as 
$$
(b\partial_x-a\partial_t)(b'\partial_x-a'\partial_t)
$$
to a composition of operators that annihilate the linear forms $ax+bt$ and $a'x+b't$. If $(a,b)$ and $(a',b')$ are not multiples of each other,  then they constitute a new coordinate system $(ax+bt, a'x+b't)$ in which the mixed partial derivatives of $F$ all vanish; so, at least locally, $F$ must be a sum of univariate functions of $ax+bt$ and $a'x+b't$.\footnote{Compare with the proof of Lemma \ref{split} in Appendix \ref{Proofs}.} We conclude that assuming $V^2-4UW>0$, functions of the form $G(x,t)=f(ax+bt)+g(a'x+b't)$ may be identified with solutions of PDEs of the form  \eqref{auxiliary6}. As in Example \ref{basic-1}, we desire algebraic PDEs purely in terms of $F$ and without constants $U,V$ and $W$. One way to do so is to differentiate \eqref{auxiliary6} further, for instance: 
\begin{equation}\label{auxiliary6'}
UG_{xxx}+VG_{xxt}+WG_{xtt}=0.
\end{equation}
Notice that \eqref{auxiliary6} and \eqref{auxiliary6'} could be interpreted as $(U,V,W)$ being perpendicular to $(G_{xx},G_{xt},G_{tt})$ and $(G_{xxx},G_{xxt},G_{xtt})$. Thus, the cross product 
$$
(G_{xt}G_{xtt}-G_{tt}G_{xxt},G_{tt}G_{xxx}-G_{xx}G_{xtt},G_{xx}G_{xxt}-G_{xt}G_{xxx})
$$
of the latter two vectors must be parallel to a constant vector. Under the non-vanishing condition that one of the entries of the cross product, say the last one, is non-zero, the constancy may be thought of as ratios of the other two components to the last one being constants. The result is a characterization (in the vein of Conjecture \ref{conjecture}) of functions $G$ of the form   $f(ax+bt)+g(a'x+b't)$ which are subjected to 
$G_{xx}G_{xxt}-G_{xt}G_{xxx}\neq 0$ as
\small
\begin{equation}\label{characterization}
\begin{split}
&\frac{G_{xt}G_{xtt}-G_{tt}G_{xxt}}{G_{xx}G_{xxt}-G_{xt}G_{xxx}} \text{ and } \frac{G_{tt}G_{xxx}-G_{xx}G_{xtt}}{G_{xx}G_{xxt}-G_{xt}G_{xxx}} \text{ are constants,}\\
& (G_{tt}G_{xxx}-G_{xx}G_{xtt})^2>4(G_{xt}G_{xtt}-G_{tt}G_{xxt})(G_{xx}G_{xxt}-G_{xt}G_{xxx}).
\end{split}
\end{equation}
\normalsize
Notice that the PDI is not redundant here: For a solution $G=G(x,t)$ of Laplace's equation the fractions from the first line of \eqref{characterization} are constants while on the second line, the left-hand side of the inequality is zero but its right-hand side is $4(G_{xt}G_{xtt}-G_{tt}G_{xxt})^2\geq 0$.\\
\indent
Composing $G$ with $\sigma$ makes the derivation of PDEs and PDIs imposed on functions of the form \eqref{auxiliary5} even more cumbersome. We only provide a sketch. Under the assumption that the gradient of 
$F=\sigma\circ G$ is non-zero, the univariate function $\sigma$ admits a local inverse $\tau$. Applying the chain rule to  $G=\tau\circ F$ yields: 
$$
G_{xx}=\tau''(F)(F_x)^2+\tau'(F)F_{xx},\quad  G_{xt}= \tau''(F)F_xF_t+\tau'(F)F_{xt},\quad G_{tt}=\tau''(F)(F_t)^2+\tau'(F)F_{tt}.
$$
Plugging them in the PDE \eqref{auxiliary6} that $G$ satisfies results in:
$$
\tau''(F)\left(U(F_x)^2+VF_xF_t+W(F_t)^2\right)+\tau'(F)\left(UF_{xx}+VF_{xt}+WF_{tt}\right)=0;
$$
or equivalently:
\small
\begin{equation}\label{complicated}
\frac{UF_{xx}+VF_{xt}+WF_{tt}}{U(F_x)^2+VF_xF_t+W(F_t)^2}=-\frac{\tau''(F)}{\tau'(F)}=-\left(\frac{\tau''}{\tau'}\right)(F).    
\end{equation}
\normalsize
It suffices for the ratio $\frac{UF_{xx}+VF_{xt}+WF_{tt}}{U(F_x)^2+VF_xF_t+W(F_t)^2}$ to be a function of $F$ such as $\nu(F)$ since then $\tau$ may be recovered as  $\tau=\bigintsss{\rm{e}}^{-\int\nu}$.
Following the discussion in the beginning of \S\ref{toy examples}, this is equivalent to 
\small
$$
\nabla\left(\frac{UF_{xx}+VF_{xt}+WF_{tt}}{U(F_x)^2+VF_xF_t+W(F_t)^2}\right)\mathlarger{\parallel}\,\nabla F.
$$
\normalsize
This amounts to an identity of the form 
$$
\Phi_1(F)U^2+\Phi_2(F)V^2+\Phi_3(F)W^2+\Phi_4(F)UV+\Phi_5(F)VW+\Phi_6(F)UW=0.
$$
where $\Phi_i(F)$'s are complicated non-constant polynomial expressions of partial derivatives of $F$. In the same way that the parameters  $U,V$ and $W$ in PDE $\eqref{auxiliary6}$ were eliminated to arrive at \eqref{characterization}, one may solve the homogeneous linear system consisting of the identity above and its derivatives in order to derive a six-dimensional vector 
\begin{equation}\label{vector1}
\big(\Xi_1(F),\Xi_2(F),\Xi_3(F),\Xi_4(F),\Xi_5(F),\Xi_6(F)\big)
\end{equation}
of rational expressions of partial derivatives of $F$ parallel to the constant vector
\begin{equation}\label{vector2}  
(U^2,V^2,W^2,UV,VW,UW).
\end{equation}
The parallelism amounts to a number of PDEs, e.g. $\Xi_1(F)\,\Xi_2(F)=\Xi_4(F)^2$ and the ratios $\frac{\Xi_i(F)}{\Xi_j(F)}$ must be constant because they coincide with the ratios of components of \eqref{vector2}. Moreover, $V^2-4UW>0$ implies $\left(\frac{V^2}{UW}-4\right)\frac{V^2}{UW}\geq 0$. Replacing with the corresponding ratios of components of \eqref{vector1}, we obtain the PDI  
$$
\big(\Xi_2(F)-4\,\Xi_6(F)\big)\,\Xi_2(F)\,\Xi_6(F)\geq 0,
$$
which must be satisfied by any function of the form \eqref{auxiliary5}.
\end{example}

\section{General results}\label{generalization}
Building on the examples of the previous section, we establish Conjecture \ref{conjecture} for a number of cases. 

\subsection{Characterizing tree functions with distinct inputs}\label{generalization-tree1}

The goal of this subsection is to prove Theorems \ref{main-tree} and \ref{main-tree-activation}.

\begin{proof}[Proof of Theorem \ref{main-tree}]

\begin{figure}
    \centering
    \includegraphics[height=4cm]{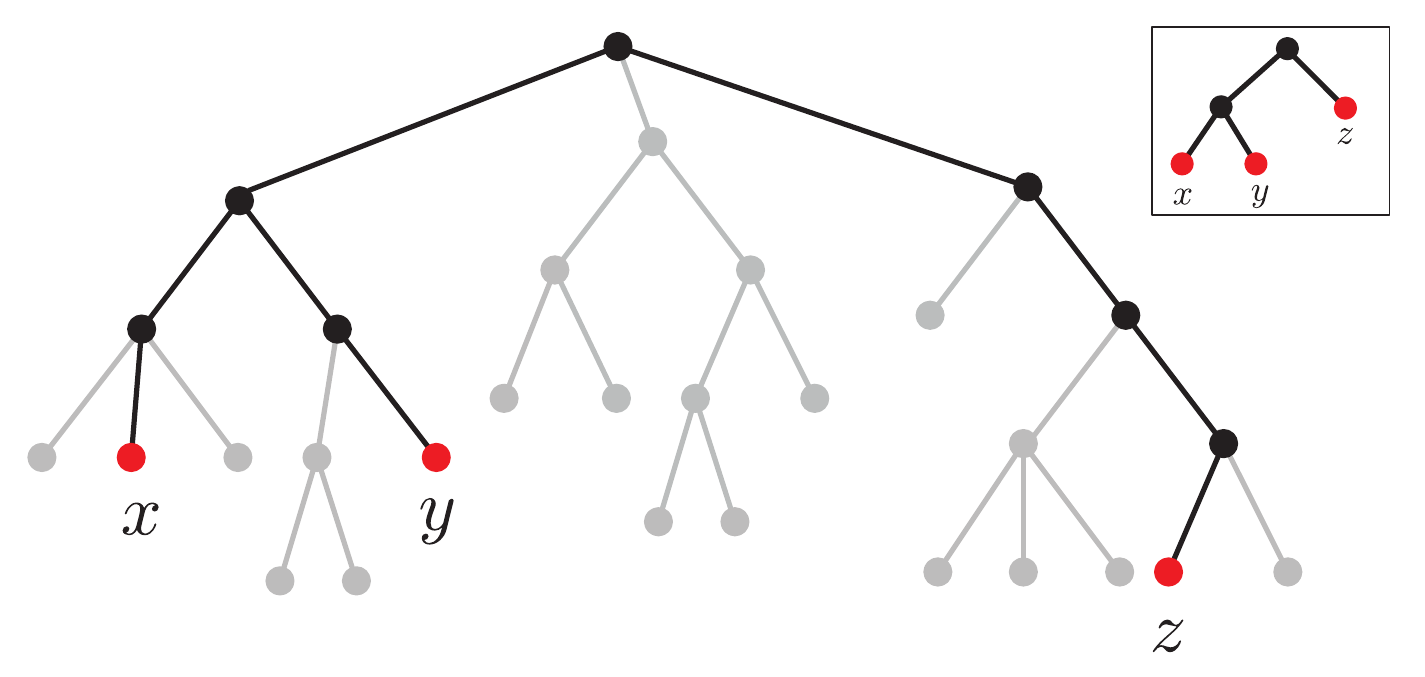}
    \caption{The necessity of constraint \eqref{temp1} in Theorem \ref{main-tree} follows from the case of trivariate tree functions discussed in Example \ref{basic-1}: Choosing three of the variables (red leaves) and fixing the rest (gray leaves) results in a superposition of the form $g(f(x,y),z)$  that must obey constraint \eqref{auxiliary-basic-constraint'}.}
    \label{fig:necessity}
\end{figure}

The necessity of constraint \eqref{temp1} follows from Example \ref{basic-1}: As demonstrated in Figure \ref{fig:necessity}, picking three of variables $x_i=x$, $x_j=y$ and $x_k=z$ where the former two are separated from the latter by a sub-tree and taking the rest of variables to be constant, we obtain a superposition of the form $F(x,y,z)=g(f(x,y),z)$ studied in Example \ref{basic-1}; it should  satisfy $F_{xz}F_{y}=F_{yz}F_{x}$ or equivalently \eqref{temp1}.\\
\indent
We induct on the number of variables -- which coincides with the number of leaves -- to prove the sufficiency of constraint \eqref{temp1} and the non-vanishing conditions in Theorem \ref{main-tree} for the existence of a local implementation -- in the form of a superposition of functions of lower arity -- on the tree architecture in hand. Consider a rooted tree $\mathcal{T}$ with $n$ leaves which are labeled by the coordinate functions $x_1,\dots,x_n$. The inductive step is illustrated in Figure \ref{fig:general_tree_many_subtrees}: Removing the root results in a number of smaller trees $\mathcal{T}_1,\dots,\mathcal{T}_l$ and a number of single vertices\footnote{A single vertex is not considered to be a rooted tree in our convention.} corresponding to the leaves adjacent to the root of $\mathcal{T}$. By renumbering $x_1,\dots,x_n$ one may write the leaves as 
\begin{equation}\label{list}
x_1,\dots,x_{m_1};\, x_{m_1+1},\dots,x_{m_1+m_2};\,\dots;\,x_{m_1+\dots+m_{l-1}+1},\dots,x_{m_1+\dots+m_l};\,x_{m_1+\dots+m_l+1};\,\dots;\,x_n
\end{equation}
where $x_{m_1+\dots+m_{s-1}+1},\dots,x_{m_1+\dots+m_{s-1}+m_s}$ $(1\leq s\leq l)$ are the leaves of the sub-tree $\mathcal{T}_s$ while $x_{m_1+\dots+m_l+1}$  through $x_n$ are the leaves adjacent to the root of $\mathcal{T}$. The goal is to write $F(x_1,\dots,x_n)$ as 
\begin{equation}\label{auxiliary9'}
g(G_1(x_1,\dots,x_{m_1}),\dots,G_l(x_{m_1+\dots+m_{l-1}+1},\dots,x_{m_1+\dots+m_l}),x_{m_1+\dots+m_l+1},\dots,x_n)
\end{equation}
where each smooth function 
$$G_s(x_{m_1+\dots+m_{s-1}+1},\dots,x_{m_1+\dots+m_{s-1}+m_s})$$ 
satisfies the constraints coming from $\mathcal{T}_s$ and thus, by invoking the induction hypothesis, is computable by the tree $\mathcal{T}_s$. Following the discussion before Theorem  \ref{integrability}, it suffices to express $\nabla F$ as a linear combination of the gradients 
$\nabla G_1,\dots,\nabla G_l,\nabla x_{m_1+\dots+m_l+1},\dots,\nabla x_n$. The non-vanishing conditions in Theorem \ref{main-tree} require the first order partial derivative with respect to at least one of the leaves of each $\mathcal{T}_s$ to be non-zero; we may assume $F_{x_{m_1+\dots+m_{s-1}+1}}\neq 0$ without any loss of generality. We should have:   
\small
\begin{equation*}
\begin{split}
\nabla F&=\left[F_{x_1}\,\,\dots\,\, F_{x_{m_1}}\,\,\dots\,\,F_{x_{m_1+\dots+m_{l-1}+1}}\,\,\dots\,\, F_{x_{m_1+\dots+m_l}}\,\,F_{x_{m_1+\dots+m_l+1}}\,\,\dots\,\,F_{x_n}\right]^{\rm{T}}\\
&=\mathlarger{\sum}_{s=1}^l F_{x_{m_1+\dots+m_{s-1}+1}}\left[\overbrace{0\,\cdots\, 0}^{m_1+\dots+m_{s-1}}\,\,1\,\,\frac{F_{x_{m_1+\dots+m_{s-1}+2}}}{F_{x_{m_1+\dots+m_{s-1}+1}}}\,\,\cdots \,\, \frac{F_{x_{m_1+\dots+m_{s-1}+m_s}}}{F_{x_{m_1+\dots+m_{s-1}+1}}}\,\,\overbrace{0\,\cdots\, 0}^{n-(m_1+\dots+m_s)}\right]^{\rm{T}}\\
&+F_{x_{m_1+\dots+m_l+1}}\frac{\partial}{\partial x_{m_1+\dots+m_l+1}}+\dots+F_{x_n}\frac{\partial}{\partial x_n}\\
&\in{\text{Span}}\left\{\nabla G_1(x_1,\dots,x_{m_1}),\dots,\nabla G_l(x_{m_1+\dots+m_{l-1}+1},\dots,x_{m_1+\dots+m_l}),\nabla x_{m_1+\dots+m_l+1},\dots,\nabla x_n\right\}.
\end{split}
\end{equation*}
\normalsize
In expressions above, the vector 
$\left[1\,\,\frac{F_{x_{m_1+\dots+m_{s-1}+2}}}{F_{x_{m_1+\dots+m_{s-1}+1}}}\,\,\cdots \,\, \frac{F_{x_{m_1+\dots+m_{s-1}+m_s}}}{F_{x_{m_1+\dots+m_{s-1}+1}}}\right]^{\rm{T}}$
(which is of size $m_s$) is dependent only on the variables $x_{m_1+\dots+m_{s-1}+1},\dots,x_{m_1+\dots+m_s}$ which are the leaves of $\mathcal{T}_s$: Any other leaf $x_k$ is separated from them by the sub-tree $\mathcal{T}_s$ of $\mathcal{T}$ and hence for any leaf $x_i$ with $m_1+\dots+m_{s-1}<i\leq m_1+\dots+m_s$ we have  $\left(\frac{F_{x_i}}{F_{x_{m_1+\dots+m_{s-1}+1}}}\right)_{x_k}=0$
due to the simplified form \eqref{constraint-simplified-1} of \eqref{temp1}. To finish the proof, one should  establish the existence of functions $G_s(x_{m_1+\dots+m_{s-1}+1},\dots,x_{m_1+\dots+m_s})$ appearing in \eqref{auxiliary9'}; that is,
$\left[1\,\,\frac{F_{x_{m_1+\dots+m_{s-1}+2}}}{F_{x_{m_1+\dots+m_{s-1}+1}}}\,\,\cdots \,\, \frac{F_{x_{m_1+\dots+m_{s-1}+m_s}}}{F_{x_{m_1+\dots+m_{s-1}+1}}}\right]^{\rm{T}}$
should be shown to be parallel to a gradient vector field $\nabla G_s$. Notice that the induction hypothesis would be applicable to $G_s$ since any ratio $\frac{(G_s)_{x_i}}{(G_s)_{x_j}}$ of partial derivatives is the same as the corresponding ratio of partial derivatives of $F$.  
Invoking Theorem \ref{integrability}, to prove the existence of $G_s$ we should verify that the $1$-form 
\small
$$
\omega_s:=\mathlarger{\sum}_{i=m_1+\dots+m_{s-1}+1}^{m_1+\dots+m_{s-1}+m_s}\frac{F_{x_i}}{F_{x_{m_1+\dots+m_{s-1}+1}}}\,{\rm{d}}x_i\quad (1\leq s\leq l)
$$
\normalsize
satisfies $\omega_s\wedge{\rm{d}\omega_s}=0$. We finish the proof by showing this in the case of $s=1$; other cases are completely similar. We have: 
\small
\begin{equation*}
\begin{split}
\omega_1\wedge{\rm{d}}\omega_1&=\left(\sum_{i=1}^{m_1}\frac{F_{x_i}}{F_{x_1}}\,{\rm{d}}x_i\right)\wedge \left(\sum_{j=1}^{m_1}{\rm{d}}\left(\frac{F_{x_j}}{F_{x_1}}\right)\wedge{\rm{d}}x_j\right)
=\left(\sum_{i=1}^{m_1}\frac{F_{x_i}}{F_{x_1}}\,{\rm{d}}x_i\right)\wedge \left(\sum_{j=1}^{m_1}\left(\sum_{k=1}^{m_1}\left(\frac{F_{x_j}}{F_{x_1}}\right)_{x_k}{\rm{d}}x_k\right)\wedge{\rm{d}}x_j\right)\\
&=\sum_{i,j,k\in\{1,\dots,m_1\}}\left[\frac{F_{x_i}F_{x_jx_k}}{(F_{x_1})^2}-\frac{F_{x_i}F_{x_j}F_{x_1x_k}}{(F_{x_1})^3}\right]{\rm{d}}x_i\wedge{\rm{d}}x_k\wedge{\rm{d}}x_j\\
&=\left(\sum_{i=1}^{m_1}\frac{F_{x_i}}{(F_{x_1})^2}\,{\rm{d}}x_i\right)\wedge\left(\sum_{j,k\in\{1,\dots,m_1\}}F_{x_jx_k}\,{\rm{d}}x_k\wedge{\rm{d}}x_j\right)
+\left(\sum_{i,j\in\{1,\dots,m_1\}}F_{x_i}F_{x_j}\,{\rm{d}}x_i\wedge{\rm{d}}x_j\right)\wedge\left(\sum_{k=1}^{m_1}\frac{F_{x_1x_k}}{(F_{x_1})^3}\,{\rm{d}}x_k\right).
\end{split}
\end{equation*}
\normalsize
The last line is zero because, in the parentheses, the $2$-forms  
\small
$$\sum_{j,k\in\{1,\dots,m_1\}}F_{x_jx_k}\,{\rm{d}}x_j\wedge{\rm{d}}x_k, \quad  \sum_{i,j\in\{1,\dots,m_1\}}F_{x_i}F_{x_j}\,{\rm{d}}x_i\wedge{\rm{d}}x_j,$$
\normalsize
are zero since interchanging $j$ and $k$, or $i$ and $j$ in the summations results in the opposite of the original differential form.

\begin{figure}
    \centering
    \includegraphics[height=7cm, width=10cm]{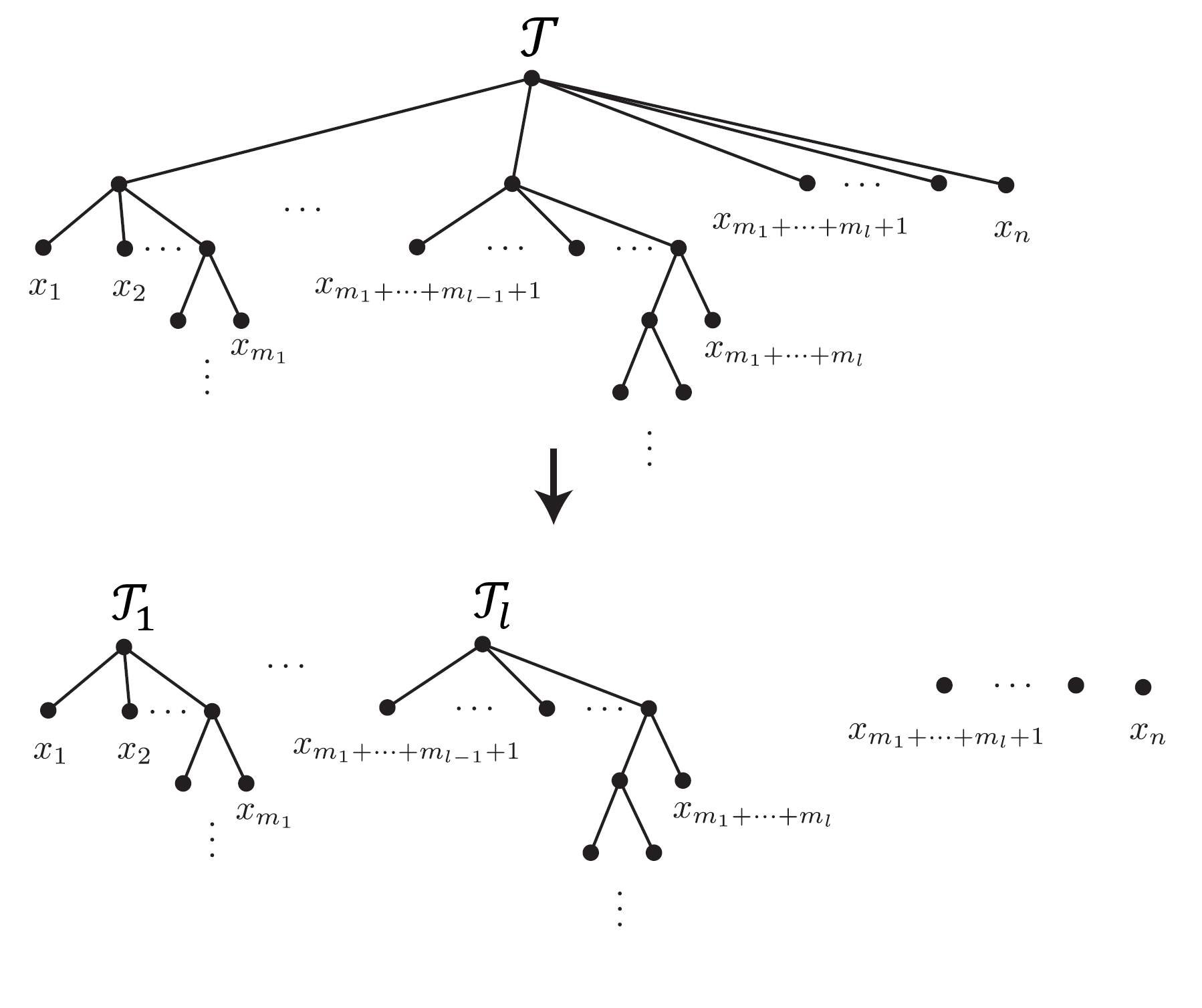}
    \caption{The inductive step in the proof of Theorem \ref{main-tree}: The removal of the root of $\mathcal{T}$ results in a number of smaller rooted trees along with single vertices which were the leaves adjacent to the root of $\mathcal{T}$ (if any).}
    \label{fig:general_tree_many_subtrees}
\end{figure}

\end{proof}

\begin{remark}
The formulation of Theorem \ref{main-tree} in \cite{Farhoodi2019OnFC} is concerned with analytic functions and binary trees. The proof presented above follows the same inductive procedure  but utilizes Theorem \ref{integrability} instead of Taylor expansions. Of course, Theorem \ref{integrability} remains valid in the analytic category; so the tree representation of $F$ constructed in the proof here consists of analytic functions if $F$ is analytic. An advantage of working with analytic functions is that in certain cases the non-vanishing conditions may be relaxed. For instance, if in Example \ref{basic-1}  the  function $F(x,y,z)$ satisfying \eqref{auxiliary-basic-constraint'} is analytic, it admits a local representation of the form \eqref{3var_form} while if $F$ is only smooth, at least one of the conditions $F_x\neq 0$ of $F_y\neq 0$ is required. 
See \cite[\S\S 5.1,5.3]{Farhoodi2019OnFC} for the details.
\end{remark}

\begin{proof}[Proof of Theorem \ref{main-tree-activation}]
Establishing the necessity of constraints \eqref{temp2} and \eqref{temp3} is straightforward. An implementation of a smooth function $F=F(x_1,\dots,x_n)$ on the tree $\mathcal{T}$
is in a form such as 
\small
\begin{equation}\label{huge5}
\begin{split}
&\sigma\Bigg(\cdots\Bigg(\tilde{w}.\tilde{\sigma}\bigg(
w_1.\tau_1\Big(\cdots\Big(\tilde{w}_1.\tilde{\tau}_1(cx_i+\cdots)+\dbtilde{w}_1.\dbtilde{\tau}_1(c'x_{i'}+\cdots)+\cdots\Big)\cdots\Big)\\
&+w_2.\tau_2\Big(\cdots\Big(\tilde{w}_2.\tilde{\tau}_2(dx_j+\cdots)+\dbtilde{w}_2.\dbtilde{\tau}_2(d'x_{j'}+\cdots)+\cdots\Big)\cdots\Big)
+w_3.\tau_3\Big(\cdots\Big)+\cdots\bigg)
+\cdots\Bigg)\cdots\Bigg)
\end{split}
\end{equation}
\normalsize
for appropriate activation functions and weights. In the expression above, variables $x_s$ appearing in
\footnotesize
$$
\tilde{\sigma}\bigg(
w_1.\tau_1\Big(\cdots\Big(\tilde{w}_1.\tilde{\tau}_1(cx_i+\cdots)+\dbtilde{w}_1.\dbtilde{\tau}_1(c'x_{i'}+\cdots)+\cdots\Big)\cdots\Big)
+w_2.\tau_2\Big(\cdots\Big(\tilde{w}_2.\tilde{\tau}_2(dx_j+\cdots)+\dbtilde{w}_2.\dbtilde{\tau}_2(d'x_{j'}+\cdots)+\cdots\Big)\cdots\Big)
+w_3.\tau_3\Big(\cdots\Big)+\cdots\bigg)
$$
\normalsize
are the leaves of the smallest (full) sub-tree of $\mathcal{T}$ in which both $x_i$ and $x_j$ appear as leaves. Denoting this sub-tree by $\widetilde{\mathcal{T}}$, the activation function applied at the root of $\widetilde{\mathcal{T}}$ is $\tilde{\sigma}$, and the sub-trees emanating from the root of $\widetilde{\mathcal{T}}$ -- which we write as $\widetilde{\mathcal{T}_1},\widetilde{\mathcal{T}_2},\widetilde{\mathcal{T}_3},\dots$ -- have 
$\tau_1,\tau_2,\tau_3,\dots$ assigned to their roots. Here, $\widetilde{\mathcal{T}_1}$ and $\widetilde{\mathcal{T}_2}$ contain $x_i$ and $x_j$ respectively, and are the largest (full) sub-trees that have exactly one of $x_i$ and $x_j$. To verify \eqref{temp2}, notice that $\frac{F_{x_i}}{F_{x_j}}$ is proportional to 
\small
\begin{equation}\label{auxiliary14}
\frac{\tau'_1\Big(\cdots\Big(\tilde{w}_1.\tilde{\tau}_1(cx_i+\cdots)+\dbtilde{w}_1.\dbtilde{\tau}_1(c'x_{i'}+\cdots)+\cdots\Big)\cdots\Big)\dots\tilde{\tau}'_1(cx_i+\cdots)}
{\tau'_2\Big(\cdots\Big(\tilde{w}_2.\tilde{\tau}_2(dx_j+\cdots)+\dbtilde{w}_2.\dbtilde{\tau}_2(d'x_{j'}+\cdots)+\cdots\Big)\cdots\Big)\dots\tilde{\tau}'_2(dx_j+\cdots)}
\end{equation}
\normalsize
with the constant of proportionality being a quotient of two products of certain weights of the network. 
The ratio \eqref{auxiliary14} is dependent only on those variables that appear as leaves of $\widetilde{\mathcal{T}_1}$ and $\widetilde{\mathcal{T}_2}$; so 
\small
$$
\left(\frac{F_{x_i}}{F_{x_j}}\right)_{x_k}=0\Leftrightarrow F_{x_ix_k}F_{x_j}=F_{x_jx_k}F_{x_i}
$$
\normalsize
unless there is a sub-tree of $\mathcal{T}$ containing the leaf $x_k$ and exactly one of $x_i$ or $x_j$ (which forcibly will be a sub-tree of $\widetilde{\mathcal{T}_1}$ or $\widetilde{\mathcal{T}_2}$). Before switching to constraint \eqref{temp3}, we point out that the description \eqref{huge5} of $F$ assumes that the leaves $x_i$ and $x_j$ are not siblings. If they are, $F$ may be written as 
$$
\sigma\Big(\cdots\Big(\tilde{w}.\tilde{\sigma}\big(w.\tau(cx_i+dx_j+\cdots)+\cdots\big)+\cdots\Big)\cdots\Big)
$$
in which case $\frac{F_{x_i}}{F_{x_j}}=\frac{c}{d}$ is a constant and hence \eqref{constraint-simplified-1} holds for all $1\leq k\leq n$.
To finish the proof of necessity of the constraints introduced in Theorem \ref{main-tree-activation}, consider the fraction \eqref{auxiliary14} which is a multiple of $\frac{F_{x_i}}{F_{x_j}}$. This has a description as a product of a function of $x_i,x_{i'},\dots$ (leaves of $\widetilde{\mathcal{T}_1}$) by a function of $x_j,x_{j'},\dots$ (leaves of $\widetilde{\mathcal{T}_2}$). Lemma \ref{split} now implies that for any leaf $x_{i'}$ of $\widetilde{\mathcal{T}_1}$ and any leaf $x_{j'}$ of $\widetilde{\mathcal{T}_2}$:
$$
\left(\frac{\left(\frac{F_{x_i}}{F_{x_j}}\right)_{x_{i'}}}{\frac{F_{x_i}}{F_{x_j}}}\right)_{x_{j'}}=0;
$$
hence the simplified form \eqref{constraint-simplified-2} of \eqref{temp3}.\\
\indent
We induct on the number of leaves to prove the sufficiency of constraints \eqref{temp2} and \eqref{temp3} (accompanied by suitable non-vanishing conditions) for the existence of a tree implementation of a smooth function $F=F(x_1,\dots,x_n)$ as a composition of functions of the form \eqref{activation}. Given a rooted tree $\mathcal{T}$ with $n$ leaves labeled by $x_1,\dots,x_n$, the inductive step has two cases demonstrated in Figures \ref{fig:inductive1} and \ref{fig:inductive2}:
\begin{itemize}
\item There are leaves, say $x_{m+1},\dots,x_n$, directly adjacent to the root of $\mathcal{T}$; their removal results in a smaller tree $\mathcal{T'}$ with leaves $x_1,\dots,x_m$ ; see Figure \ref{fig:inductive1}. The goal is to write $F(x_1,\dots,x_n)$ as 
\begin{equation}\label{auxiliary9}
\sigma(G(x_1,\dots,x_m)+c_{m+1}x_{m+1}+\dots+c_nx_n)    
\end{equation}
with $G$ satisfying appropriate constraints that, invoking the induction hypothesis, guarantee that $G$ is computable by $\mathcal{T'}$. 
\item There is no leaf adjacent to the root of $\mathcal{T}$, but there are smaller sub-trees. Denote one of them with $\mathcal{T}_2$ and show its leaves by $x_{m+1},\dots,x_{n}$. Removing this sub-tree results in a smaller tree $\mathcal{T}_1$ with leaves $x_1,\dots,x_m$; see Figure \ref{fig:inductive2}. The goal is to write $F(x_1,\dots,x_n)$ as
\begin{equation}\label{auxiliary10}
\sigma(G_1(x_1,\dots,x_m)+G_2(x_{m+1},\dots,x_n))    
\end{equation}
with $G_1$ and $G_2$ satisfying constraints corresponding to $\mathcal{T}_1$ and $\mathcal{T}_2$,  and hence may be implemented on these trees by invoking the induction hypothesis.
\end{itemize}
Following the discussion in the beginning of \S\ref{toy examples},  $F$ may be locally written as a function of another function with non-zero gradient if the gradients are parallel. 
This idea has been frequently used so far, but there is a twist here: We want such a description of $F$ to persist on the box-like region $B$ which is the domain of $F$. 
Lemma \ref{technical} resolves this issue. The tree function in the argument of $\sigma$ in either \eqref{auxiliary9} or \eqref{auxiliary10} -- which here we denote by $\tilde{F}$  -- shall be constructed below by invoking the induction hypothesis, so $\tilde{F}$ is defined at every point of $B$. Besides, our description of $\nabla\tilde{F}$ below (cf. \eqref{auxiliary11},\eqref{auxiliary12'}) readily indicates that, just like $F$, it satisfies the non-vanishing conditions of Theorem \ref{main-tree-activation}. Applying  Lemma \ref{technical} to $\tilde{F}$, any level set $\left\{\mathbf{x}\in B\,|\, \tilde{F}(\mathbf{x})=c \right\}$ is connected; and  $\tilde{F}$ can be extended to a coordinate system $(\tilde{F},F_2,\dots,F_n)$ for $B$. Thus, $F$ -- whose partial derivatives with respect to other coordinate functions vanish -- realizes precisely one value on any coordinate hypersurface 
$\left\{\mathbf{x}\in B\,|\, \tilde{F}(\mathbf{x})=c \right\}$. Setting $\sigma(c)$ to be the aforementioned value of $F$ defines a function $\sigma$ with  $F=\sigma(\tilde{F})$. 
After this discussion on the domain of definition of the desired representation of $F$, we proceed with constructing $\tilde{F}=\tilde{F}(x_1,\dots,x_n)$ as either $G(x_1,\dots,x_m)+c_{m+1}x_{m+1}+\dots+c_nx_n$ 
in the case of \eqref{auxiliary9} or as $G(x_1,\dots,x_m)+G_2(x_{m+1},\dots,x_n)$ in the case of \eqref{auxiliary10}.\\
\indent
In the case of \eqref{auxiliary9}, assuming that -- as Theorem \ref{main-tree-activation} requires -- one of the partial derivatives $F_{x_{m+1}},\dots,F_{x_n}$, e.g. $F_{x_n}$, is non-zero, we should have:    
\small 
\begin{equation}\label{auxiliary11}
\begin{split}
\nabla F&=\left[F_{x_1}\,\,\dots \,\, F_{x_m}\,\,F_{x_{m+1}}\,\,\dots\,\, F_{x_{n-1}}\,\,F_{x_n}\right]^{\rm{T}}\mathlarger{\parallel}
\left[\frac{F_{x_1}}{F_{x_n}}\,\,\cdots \,\, \frac{F_{x_m}}{F_{x_n}}\,\,\frac{F_{x_{m+1}}}{F_{x_n}}\,\,\cdots \,\, \frac{F_{x_{n-1}}}{F_{x_n}}\,\,1\right]^{\rm{T}}\\
&=\left[G_{x_1}\,\,\cdots \,\,G_{x_m}\,\,c_{m+1}\,\,\cdots c_{n-1}\,\,1\right]^{\rm{T}}= \nabla(G(x_1,\dots,x_m)+c_{m+1}x_{m+1}+\dots+c_{n-1}x_{n-1}+x_n).
\end{split}
\end{equation}
\normalsize
Here, each ratio $\frac{F_{x_j}}{F_{x_n}}$ where $m<j\leq n$ must be a constant -- which we show by $c_j$ -- due to the simplified form \eqref{constraint-simplified-1} of \eqref{temp2}: The only (full) sub-tree of $\mathcal{T}$ containing either $x_j$ or $x_n$ is the whole tree since these leaves are adjacent to the root of $\mathcal{T}$. On the other hand,  
$\left[\frac{F_{x_1}}{F_{x_n}}\,\,\cdots \,\, \frac{F_{x_m}}{F_{x_n}}\right]^{\rm{T}}$ appearing in \eqref{auxiliary11} is a gradient vector field of the form $\nabla G(x_1,\dots,x_m)$ again as a byproduct of \eqref{temp2} and \eqref{constraint-simplified-1}: each ratio  $\frac{F_{x_i}}{F_{x_n}}$ where $1\leq i\leq m$ is independent of $x_{m+1},\dots,x_n$ by the same reasoning as above; and this vector function of $(x_1,\dots,x_m)$ is integrable because for any $1\leq i,i'\leq m$
\small
$$
\left(\frac{F_{x_i}}{F_{x_n}}\right)_{x_{i'}}=\left(\frac{F_{x_{i'}}}{F_{x_n}}\right)_{x_i}\Leftrightarrow F_{x_ix_n}F_{x_{i'}}=F_{x_{i'}x_n}F_{x_{i}}.
$$
\normalsize
Hence, such a $G(x_1,\dots,x_m)$ exists; and moreover, it satisfies constraints from the inductions hypothesis since any ratio $\frac{G_{x_j}}{G_{x_{j'}}}$ coincides with the corresponding ratio of partial derivatives of $F$, a function which is assumed to satisfy \eqref{constraint-simplified-1} and \eqref{constraint-simplified-2}.\\
\indent
Next, in the second case of the inductive step, let us turn to \eqref{auxiliary10}. The non-vanishing conditions of Theorem \ref{main-tree-activation} require a partial derivative among $F_{x_1},\dots,F_{x_m}$ and also a partial derivative among $F_{x_{m+1}},\dots,F_{x_n}$ to be non-zero. Without any loss of generality, we assume $F_{x_1}\neq 0$ and  $F_{x_n}\neq 0$. We want to apply Lemma \ref{split} 
to split the ratio $\frac{F_{x_1}}{F_{x_n}}\neq 0$ as 
\small
\begin{equation}\label{auxiliary12}
\frac{F_{x_1}}{F_{x_n}}=\beta(x_1,\dots,x_m)\,\frac{1}{\gamma}(x_{m+1},\dots,x_n)=\frac{\beta(x_1,\dots,x_m)}{\gamma(x_{m+1},\dots,x_n)}.
\end{equation}
\normalsize
To do so, it needs to be checked that 
\small
$$\left(\frac{\left(\frac{F_{x_1}}{F_{x_n}}\right)_{x_i}}{\frac{F_{x_1}}{F_{x_n}}}\right)_{x_j}=0$$
\normalsize
for any two indices $1\leq i\leq m$ and $m<j\leq n$. This is the content of \eqref{temp3}, or its simplified form \eqref{constraint-simplified-2}, when $x_i$ belongs to the same maximal sub-tree of $\mathcal{T}$ adjacent to the root that has $x_1$; and holds for other choices of $x_i\in\{x_1,\dots,x_m\}$ too since in that situation, by the  simplified form \eqref{constraint-simplified-1} of \eqref{temp2},  the derivative  $\left(\frac{F_{x_1}}{F_{x_n}}\right)_{x_i}$ must be zero because $x_1$, $x_i$ and $x_n$
belong to different maximal sub-trees of $\mathcal{T}$.  Next, the gradient of $F$ could be written as
\small
\begin{equation*}
\begin{split}
\nabla F&=\left[F_{x_1}\,\, F_{x_2}\,\,\dots \,\, F_{x_m}\,\,F_{x_{m+1}}\,\,\dots\,\, F_{x_{n-1}}\,\,F_{x_n}\right]^{\rm{T}}\mathlarger{\parallel}
\left[\frac{F_{x_1}}{F_{x_n}}\,\, \frac{F_{x_2}}{F_{x_n}}\,\,\cdots \,\, \frac{F_{x_m}}{F_{x_n}}\,\,\frac{F_{x_{m+1}}}{F_{x_n}}\,\,\cdots \,\, \frac{F_{x_{n-1}}}{F_{x_n}}\,\,1\right]^{\rm{T}}\\
&=\left[\frac{F_{x_1}}{F_{x_n}}\,\, \frac{F_{x_2}}{F_{x_1}}.\frac{F_{x_1}}{F_{x_n}}\,\,\cdots \,\, \frac{F_{x_m}}{F_{x_1}}.\frac{F_{x_1}}{F_{x_n}}\,\,\frac{F_{x_{m+1}}}{F_{x_n}}\,\,\cdots \,\, \frac{F_{x_{n-1}}}{F_{x_n}}\,\,1\right]^{\rm{T}}\\
&=\frac{F_{x_1}}{F_{x_n}}\left[1\,\, \frac{F_{x_2}}{F_{x_1}}\,\,\cdots \,\, \frac{F_{x_m}}{F_{x_1}}\,\,\overbrace{0\,\cdots\, 0}^{n-m}\right]^{\rm{T}}
+\left[\overbrace{0\,\cdots\, 0}^{m}\,\,\frac{F_{x_{m+1}}}{F_{x_n}}\,\,\cdots \,\, \frac{F_{x_{n-1}}}{F_{x_n}}\,\,1\right]^{\rm{T}}.
\end{split}
\end{equation*}
\normalsize
Combining with \eqref{auxiliary12}: 
\small
\begin{equation}\label{auxiliary12'}
\begin{split}
\nabla F \,\mathlarger{\parallel}& \left[\beta(x_1,\dots,x_m)\,\, \beta(x_1,\dots,x_m).\frac{F_{x_2}}{F_{x_1}}\,\,\cdots \,\, \beta(x_1,\dots,x_m).\frac{F_{x_m}}{F_{x_1}}\,\,\overbrace{0\,\cdots\, 0}^{n-m}\right]^{\rm{T}}\\
&+\left[\overbrace{0\,\cdots\, 0}^{m}\,\,\gamma(x_{m+1},\dots,x_n).\frac{F_{x_{m+1}}}{F_{x_n}}\,\,\cdots \,\, \gamma(x_{m+1},\dots,x_n).\frac{F_{x_{n-1}}}{F_{x_n}}\,\,\gamma(x_{m+1},\dots,x_n)\right]^{\rm{T}}
\end{split}
\end{equation}
\normalsize
To establish \eqref{auxiliary10}, it suffices to argue that the vectors on the right-hand side are in the form of $\nabla G_1$ and $\nabla G_2$ for suitable functions $G_1(x_1,\dots,x_m)$ and $G_2(x_{m+1}, \dots,x_n)$ -- to which then the induction hypothesis can be applied by the same logic as before. Notice that the first one is dependent only on $x_1,\dots,x_m$ while the second one is dependent only on $x_{m+1},\dots,x_n$ again by \eqref{temp2} and \eqref{constraint-simplified-1}: for any 
$1\leq i\leq m$   and $m<j\leq n$ we have $\left(\frac{F_{x_i}}{F_{x_1}}\right)_{x_j}=0$ (respectively $\left(\frac{F_{x_j}}{F_{x_n}}\right)_{x_i}=0$) since there is 
no sub-tree of $\mathcal{T}$ that has only one of $x_1$ and $x_i$ (resp. only one of $x_n$ and $x_j$) and also $x_j$ (resp. also $x_i$). We finish the proof by verifying the corresponding integrability conditions 
\small
$$
\left(\beta\frac{F_{x_i}}{F_{x_1}}\right)_{x_{i'}}=\left(\beta\frac{F_{x_{i'}}}{F_{x_1}}\right)_{x_i},\quad  \left(\gamma\frac{F_{x_j}}{F_{x_n}}\right)_{x_{j'}}=\left(\gamma\frac{F_{x_{j'}}}{F_{x_n}}\right)_{x_j},
$$
\normalsize
for any $1\leq i,i'\leq m$ and $m<j,j'\leq n$. In view of \eqref{auxiliary12}, one can change $\beta$ and $\gamma$ above to $\frac{F_{x_1}}{F_{x_n}}$ or $\frac{F_{x_n}}{F_{x_1}}$ respectively and write the desired identities as the new ones
\small
$$
\left(\frac{\cancel{F_{x_1}}}{F_{x_n}}\frac{F_{x_i}}{\cancel{F_{x_1}}}\right)_{x_{i'}}=\left(\frac{\cancel{F_{x_1}}}{F_{x_n}}\frac{F_{x_{i'}}}{\cancel{F_{x_1}}}\right)_{x_i},\quad \left(\frac{\cancel{F_{x_n}}}{F_{x_1}}\frac{F_{x_j}}{\cancel{F_{x_n}}}\right)_{x_{j'}}=\left(\frac{\cancel{F_{x_n}}}{F_{x_1}}\frac{F_{x_{j'}}}{\cancel{F_{x_n}}}\right)_{x_j},
$$
\normalsize
which hold due to \eqref{constraint-simplified-1}.
\end{proof}

\begin{remark}
As mentioned in Remark \ref{domain}, working with functions of the form \eqref{activation} in Theorem \ref{main-tree-activation} rather than general smooth functions has the advantage of enabling us to determine a domain on which a superposition representation exists. In contrast, the sufficiency part of Theorem \ref{main-tree} is a local statement since it relies on the Implicit Function Theorem. It is possible to say something non-trivial about the domains when functions are furthermore analytic. This is because the Implicit Function Theorem holds in the analytic category as well (\cite[\S6.1]{MR1894435}) where lower bounds on the domain of validity of the theorem exist in the literature \cite{MR1965992}. 
\end{remark}

\begin{figure}
    \centering
    \includegraphics[height=3cm]{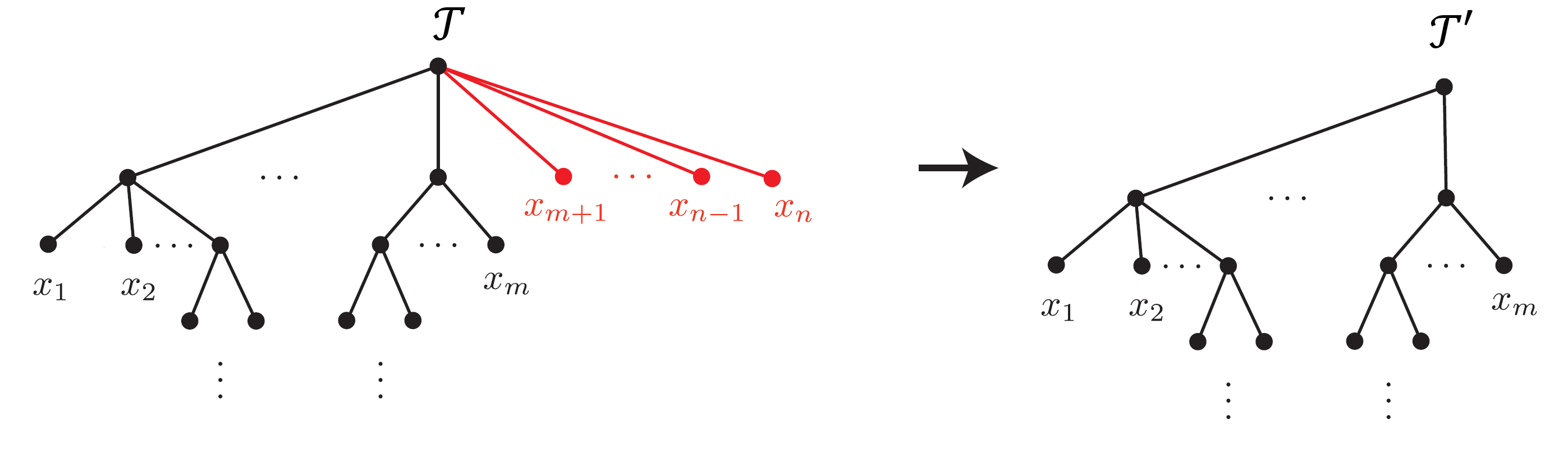}
    \caption{The first case of the inductive step in the proof of Theorem \ref{main-tree-activation}: The removal of the leaves directly connected to the root of $\mathcal{T}$  results in a smaller rooted tree.}
    \label{fig:inductive1}
\end{figure}

\begin{figure}
    \centering
    \includegraphics[height=3cm]{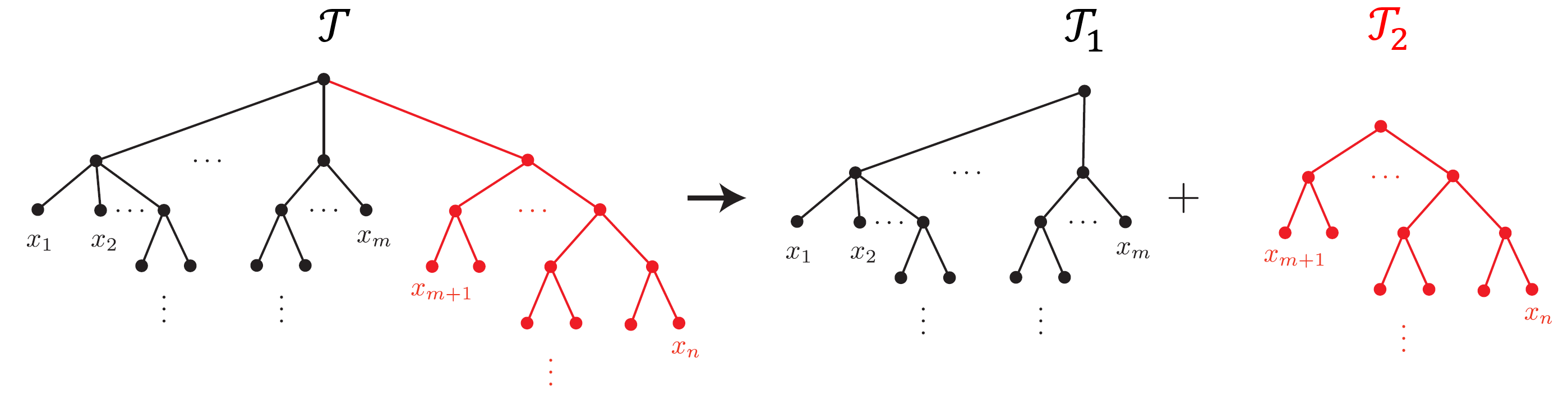}
    \caption{The second case of the inductive step in the proof of Theorem \ref{main-tree-activation}: There is  no leaf directly connected to the root of $\mathcal{T}$. Separating one of the rooted sub-trees adjacent to the root  results in two smaller rooted trees.}
    \label{fig:inductive2}
\end{figure}

\subsection{A family of symmetric tree architectures}\label{generalization-tree2}
\begin{figure}
    \centering
    \includegraphics[height=3cm]{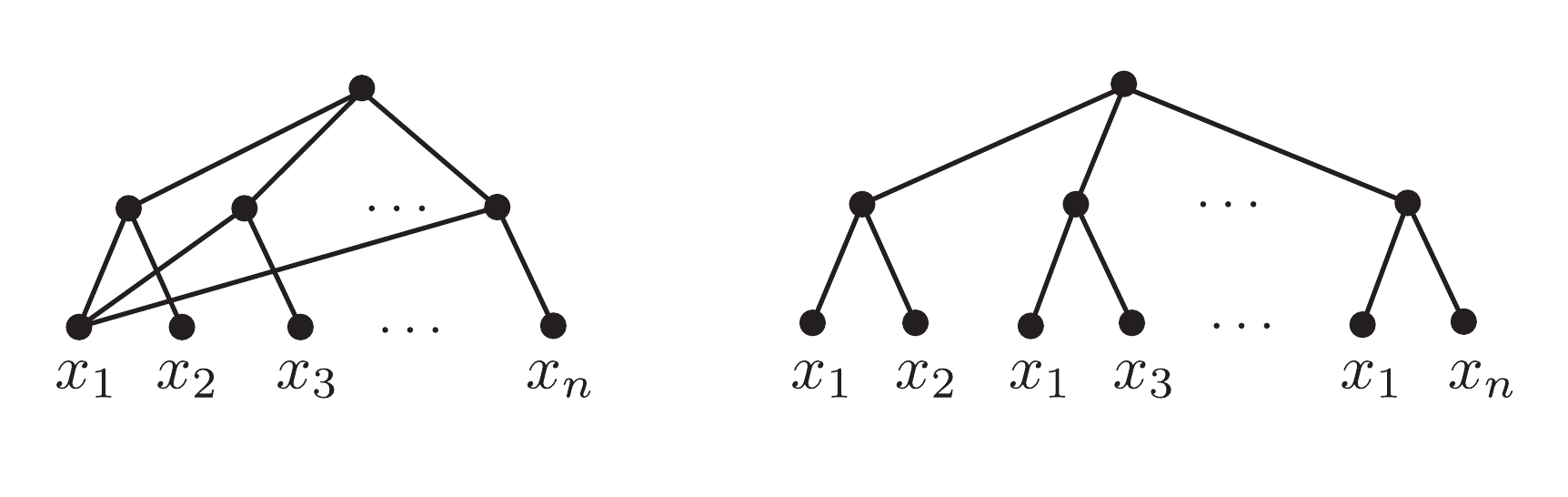}
    \caption{Proposition \ref{form1} describes functions computable by the networks on the left, or equivalently, by the symmetric trees on the right (the corresponding tree expansions).}
    \label{fig:symmetric}
\end{figure}
This subsection, generalizing Examples \ref{basic-1'} and \ref{toy example 1}, studies the family of networks with one hidden layer illustrated in Figure \ref{fig:symmetric} that could be expanded to a family of symmetric trees.  The corresponding superpositions, generalizing \eqref{toy1}, are in the form of  
\begin{equation}\label{superposition1}
F(x_1,x_2,\dots,x_n)=g(f_2(x_1,x_2),\dots,f_n(x_1,x_n)).
\end{equation}
Differentiation with respect to $x_1$ yields:
\begin{equation}\label{auxiliary16}
F_{x_1}=A_2(x_1,x_2)F_{x_2}+\dots+A_n(x_1,x_n)F_{x_n};    
\end{equation}
where $A_i:=\frac{(f_i)_{x_1}}{(f_i)_{x_i}}$.
As before, the idea is to differentiate the identity above enough times with respect to the variables $x_2,\dots,x_n$ to form a balanced or overdetermined  linear system 
with partial derivatives $\left(A_i\right)_{x_i\dots x_i}$ as its unknowns. Such a system, provided suitable non-vanishing conditions hold, can be solved to obtain each $A_i$ in terms of partial derivatives of $F$. The equalities $(A_i)_{x_j}=0$ where $j\neq 1,i$ then result in non-trivial PDE constraints on $F$. There is no canonical way of forming such a system. For proposition below, a balanced system of dimension $2(n-1)$ is considered which is a generalization of \eqref{system}.

\begin{proposition}\label{form1}
Let $F(x_1,\dots,x_n)$ be a smooth function. Consider the matrix equation
\scriptsize
\begin{equation}\label{huge4}
\left[\begin{array}{@{}l|l@{}}
  \begin{matrix}
  F_{x_2}\quad & \hspace{1cm} F_{x_3}\quad & \hspace{2mm}\cdots & \quad\, F_{x_n}
  \end{matrix} &
  \hspace{2.5cm}\bigzero \\ 
\hline
  \begin{matrix}
  F_{x_2x_2} & \hspace{0.9cm} F_{x_2x_3} & \quad\cdots & \quad F_{x_2x_n}\\
  \vdots     & \hspace{0.9cm} \vdots     & \quad\ddots & \quad \vdots    \\
  F_{x_nx_2} & \hspace{0.9cm} F_{x_nx_3} & \quad \cdots & \quad F_{x_nx_n}
  \end{matrix} &
  \begin{matrix}
 \,\,F_{x_2}   &  \quad  0     & \quad\; 0      & \quad \cdots  & \quad 0      \\
  \,\,0        &\quad  F_{x_3} & \quad\;  0     & \quad  \cdots & \quad 0      \\
  \,\,\vdots   &\quad  \vdots  &\quad\; \vdots  & \quad \vdots  & \quad \vdots \\
  \,\,0        &\quad 0        &  \quad\;  0    &  \quad\cdots  & \quad F_{x_n}
  \end{matrix}\\
\hline
  \begin{matrix}
  F_{x_2x_3x_2}\quad & F_{x_2x_3x_3}     & \cdots & F_{x_2x_3x_n}\\
  \vdots  \quad\quad & \vdots            & \ddots & \vdots        \\
  F_{x_{n-1}x_nx_2}  & F_{x_{n-1}x_nx_3} & \cdots & F_{x_{n-1}x_nx_n}
  \end{matrix} &
  \begin{matrix}
  F_{x_2x_3}  &    F_{x_2x_3}  & 0                &  \cdots & 0          \\
  0           & F_{x_3x_4}     &    F_{x_3x_4}    &  \cdots & 0          \\
  \vdots      & \vdots         & \vdots           &  \ddots & \vdots     \\
  0           & 0              &    0             &  \cdots & F_{x_{n-1}x_n}
  \end{matrix}
\end{array}\right]^{-1}
\left[\begin{array}{c}
 F_{x_1}\\
\hline 
 \begin{matrix}
 F_{x_1x_2}\\
 F_{x_1x_3}\\
 \vdots\\
 F_{x_1x_n}
 \end{matrix}\\
\hline
 \begin{matrix}
 F_{x_1x_2x_3}\\
 F_{x_1x_3x_4}\\
 \vdots\\
 F_{x_1x_{n-1}x_n}
 \end{matrix}
\end{array}
\right]
=\frac{1}{\Psi}\left[\begin{array}{c}
 \begin{matrix}
 \Psi_{2}\\
 \Psi_{3}\\
 \vdots\\
 \Psi_{n}
 \end{matrix}\\
\hline
 \begin{matrix}
 *\\
 *\\
 \vdots\\
 *
 \end{matrix}
\end{array}\right]
\end{equation}
\normalsize
where $\Psi$ is the determinant of the matrix whose inverse appears on the left. Thus, $\Psi$ and $\Psi_2,\dots,\Psi_{n}$ are polynomial expressions of partial derivatives of $F$. If $F$ is a smooth superposition of the form \eqref{superposition1}, for any two distinct indices $i,j\in\{2,\dots,n\}$ the following PDE holds
\begin{equation}\label{form1-1}
\Phi_{ij}:=(\Psi_i)_{x_j}\Psi-\Psi_i(\Psi)_{x_j}=0.    
\end{equation}
Conversely, a smooth function $F(x_1,\dots,x_n)$ satisfying the algebraic PDEs \eqref{form1-1} can be locally written in the form of  \eqref{superposition1} if $\Psi\neq 0$. 
\end{proposition}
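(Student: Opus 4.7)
The plan is to combine the chain-rule bookkeeping from Example \ref{toy example 1} with the two-dimensional integrability criterion of Theorem \ref{integrability}, systematically extended from the case $n=3$. For the necessity direction, I start from a representation $F=g(f_2(x_1,x_2),\ldots,f_n(x_1,x_n))$ and differentiate to obtain \eqref{auxiliary16}, where $A_i := (f_i)_{x_1}/(f_i)_{x_i}$ is a smooth function of only $(x_1,x_i)$. Differentiating \eqref{auxiliary16} once with respect to each $x_j$ for $j\in\{2,\ldots,n\}$ and then with respect to the consecutive pairs $(x_j,x_{j+1})$ for $j=2,\ldots,n-1$, and exploiting that $(A_i)_{x_j}$ vanishes unless $j\in\{1,i\}$ while $(A_i)_{x_jx_k}$ vanishes unless $\{j,k\}\subseteq\{1,i\}$, produces precisely the $2(n-1)$ linear equations encoded in \eqref{huge4} in the unknowns $A_2,\ldots,A_n,(A_2)_{x_2},\ldots,(A_n)_{x_n}$. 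Wherever $\Psi\neq 0$, Cramer's rule gives $A_i=\Psi_i/\Psi$, and the independence of $A_i$ from $x_j$ for $j\in\{2,\ldots,n\}\setminus\{i\}$ reads exactly as the algebraic PDE $\Phi_{ij} = (\Psi_i)_{x_j}\Psi - \Psi_i \Psi_{x_j} = 0$. Points where $\Psi$ vanishes are handled via the continuity argument of Example \ref{toy example 1}, since $\Phi_{ij}$ is a polynomial in partial derivatives of $F$ and thus extends from $\{\Psi\neq 0\}$ to its closure.

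For the sufficiency direction, I would assume $\Psi(\mathbf{p})\neq 0$ and that all $\Phi_{ij}$ vanish near $\mathbf{p}$. Defining $A_i := \Psi_i/\Psi$, the hypothesis $\Phi_{ij}=0$ translates to $(A_i)_{x_j}=0$ for $j\in\{2,\ldots,n\}\setminus\{i\}$, so each $A_i$ is a smooth function of $(x_1,x_i)$ alone. Moreover, because $\mathbf{v}:=(A_2,\ldots,A_n,(A_2)_{x_2},\ldots,(A_n)_{x_n})^{\rm T}$ solves the linear system, the first row of \eqref{huge4} reads $F_{x_1}=A_2F_{x_2}+\cdots+A_nF_{x_n}$, recovering \eqref{auxiliary16}. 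For each $i\in\{2,\ldots,n\}$ I apply the two-dimensional case of Theorem \ref{integrability} to the nowhere-vanishing planar vector field $(A_i,1)$ in the $(x_1,x_i)$-plane; this produces a smooth $f_i(x_1,x_i)$, defined near $\mathbf{p}$, with $\nabla f_i$ parallel to $(A_i,1)$ and $(f_i)_{x_i}\neq 0$. Then
$$\nabla F \;=\; F_{x_1}\mathbf{e}_1 + \sum_{i=2}^{n} F_{x_i}\mathbf{e}_i \;=\; \sum_{i=2}^{n} F_{x_i}\bigl(A_i\,\mathbf{e}_1+\mathbf{e}_i\bigr) \;=\; \sum_{i=2}^{n}\frac{F_{x_i}}{(f_i)_{x_i}}\,\nabla f_i,$$
the $\mathbf{e}_1$-component matching by virtue of \eqref{auxiliary16}. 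Each $\nabla f_i$ has its only nontrivial $\mathbf{e}_j$-component ($j\geq 2$) at $j=i$, so $\nabla f_2,\ldots,\nabla f_n$ are linearly independent, and by the Inverse/Implicit Function Theorem procedure outlined in the beginning of \S\ref{toy examples}, $F$ is locally a smooth function $g(f_2,\ldots,f_n)$ of the required form.

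The principal obstacle is the combinatorial bookkeeping showing that differentiation of \eqref{auxiliary16} produces precisely the sparse coefficient structure displayed in \eqref{huge4} with no spurious entries --- in particular, the purely diagonal form of the upper-right block, the two-term pattern in the lower-right block, and the absence of higher mixed-derivative unknowns such as $(A_i)_{x_ix_j}$ from the bottom block all rely on the fact that each $A_i$ is bivariate in precisely the variables $x_1$ and $x_i$. A secondary subtlety is the non-vacuity of the condition $\Psi\neq 0$ on the class \eqref{superposition1}: as in Example \ref{toy example 1}, this must be verified by exhibiting at least one superposition of the given form (for instance a generic polynomial example) for which $\Psi$ is not identically zero, so that the characterization describes an honest Zariski-open subset of the functional variety rather than an empty set.
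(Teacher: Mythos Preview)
Your proposal is correct and follows essentially the same approach as the paper's proof: differentiate \eqref{auxiliary16} by $\partial_{x_j}$ and $\partial_{x_jx_{j+1}}$ to build the $2(n-1)\times 2(n-1)$ system, solve via Cramer's rule where $\Psi\neq 0$, extend by continuity, and for the converse recover the bivariate $A_i$'s, integrate each via the two-dimensional case of Theorem \ref{integrability}, and conclude $\nabla F\in\mathrm{Span}\{\nabla f_2,\ldots,\nabla f_n\}$. The paper's treatment of non-vacuity is slightly more concrete than yours---it observes that the class \eqref{superposition1} contains all functions $g(x_2,\ldots,x_n)$, among which it is easy to exhibit one with $\Psi\not\equiv 0$---but otherwise the arguments coincide.
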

The proof will be presented in Appendix \ref{Proofs}.

\subsection{A family of asymmetric tree architectures}\label{generalization-tree3} 
\begin{figure}
    \centering
    \includegraphics[height=4cm]{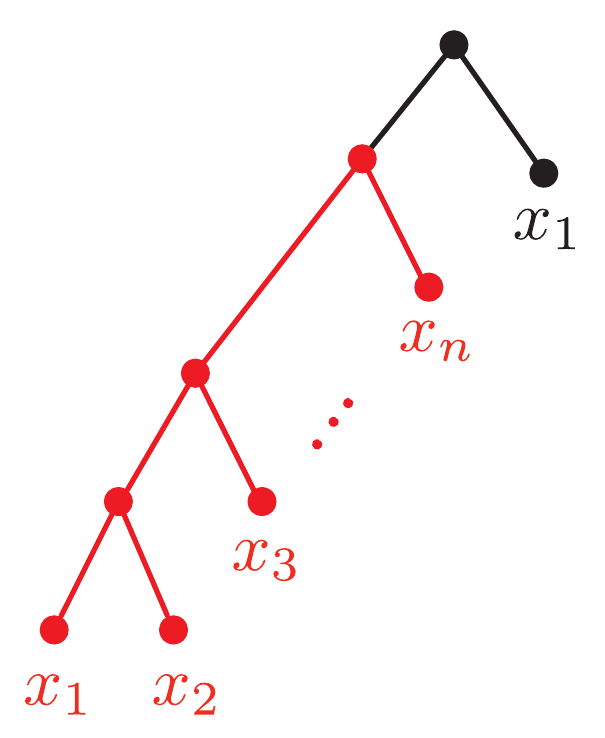}
    \caption{Proposition \ref{form2} describes functions computable by the asymmetric trees illustrated here. These trees are obtained by adding a leaf to a smaller tree (colored in red) whose inputs are distinct.}
    \label{fig:asymmetric}
\end{figure}
This subsection, generalizing Example \ref{toy example 2}, studies the family of asymmetric tree architectures illustrated in Figure \ref{fig:asymmetric} which expand in both depth and width. The corresponding superpositions, generalizing  \eqref{toy2}, are in the form of 
\begin{equation}\label{superposition2}
F(x_1,\dots,x_n)=g(x_1,f_n(...f_3(f_2(x_1,x_2),x_3)...,x_n)),
\end{equation}
or equivalently 
\begin{equation}\label{superposition2'}
F(x_1,\dots,x_n)=g(x_1,G(x_1,\dots,x_n))    
\end{equation}
where $G$ is a function implemented on an asymmetric tree with distinct inputs $x_1,\dots,x_n$ (colored in red in Figure \ref{fig:asymmetric}). Theorem \ref{main-tree} characterizes such functions in terms of PDE constraints of the form \eqref{temp1}.
We present a conceptual way of deriving necessary and sufficient PDE constraints for \eqref{superposition2'}.  
 Writing $g$ as $g(x_1,u)$, for any $2\leq i<j\leq n$ we get:
\small
\begin{equation}\label{auxiliary}
\frac{F_{x_i}}{F_{x_j}}=\frac{g_uG_{x_i}}{g_uG_{x_j}}=\frac{G_{x_i}}{G_{x_j}}.
\end{equation}
\normalsize
Any $x_k$ with $k>i,j$ is separated from $x_i,x_j$ with a sub-tree. Thus, $\left(\frac{G_{x_i}}{G_{x_j}}\right)_{x_k}=0$ which amounts to 
\small
\begin{equation}\label{extraconstraint1}
\forall\, 2\leq i<j<k\leq n: \left(\frac{F_{x_i}}{F_{x_j}}\right)_{x_k}=0.  
\end{equation}
\normalsize
We now observe that when $n>3$ there are new constraints  imposed on $F$ that did not appear in Example \ref{toy example 2}. Assuming  $3\leq i<j$ in  \eqref{auxiliary}, differentiation with respect to $x_1$ and $x_2$ yields:
\small
$$
\frac{(\frac{F_{x_i}}{F_{x_j}})_{x_1}}{(\frac{F_{x_i}}{F_{x_j}})_{x_2}}=\frac{(\frac{G_{x_i}}{G_{x_j}})_{x_1}}{(\frac{G_{x_i}}{G_{x_j}})_{x_2}}
=\frac{G_{x_1x_i}G_{x_j}-G_{x_1x_j}G_{x_i}}{G_{x_2x_i}G_{x_j}-G_{x_2x_j}G_{x_i}}.
$$
\normalsize
But in the asymmetric, tree any of $x_i$ and $x_j$ is separated from the adjacent leaves $x_1,x_2$. Hence 
$\frac{G_{x_1x_i}}{G_{x_2x_i}}=\frac{G_{x_1x_j}}{G_{x_2x_j}}=\frac{G_{x_1}}{G_{x_2}}$, and the fraction above may be simplified as $\frac{G_{x_1}}{G_{x_2}}$, a function which is independent of $x_3,\dots,x_n$. We thus get the following new constraints: 
\small
\begin{equation}\label{extraconstraint2} 
\text{The ratio } \frac{(\frac{F_{x_i}}{F_{x_j}})_{x_1}}{(\frac{F_{x_i}}{F_{x_j}})_{x_2}}=\frac{F_{x_1x_i}F_{x_j}-F_{x_1x_j}F_{x_i}}{F_{x_2x_i}F_{x_j}-F_{x_2x_j}F_{x_i}}
\text{ is independent of } 3\leq i<j\leq n, \text{ and is a function of }x_1,x_2.  
\end{equation}
\normalsize
Next, proceeding as in Example \ref{toy example 2}, in order to write  $F(x_1,\dots,x_n)$  as $g(x_1,G(x_1,\dots,x_n))$, we seek a coordinate system 
$$
(\xi:=x_1,\zeta,\eta_3,\dots,\eta_{n})
$$
in which $F$ is dependent only on the first two coordinates; and $\zeta=\zeta(x_1,\dots,x_n)$ is a tree function without repetitions as described before (e.g. $G$). So in the original coordinate system $(x_1,\dots,x_n)$, $\nabla F$ must be a linear combination of $\nabla x_1$ and $\nabla\zeta$. 
Therefore, $\frac{\zeta_{x_i}}{\zeta_{x_2}}$ should coincide with $\frac{F_{x_i}}{F_{x_2}}$. So
\small
\begin{equation}\label{auxiliary1}
\nabla\zeta\,\mathlarger{\parallel}\left[1\,\,\frac{\zeta_{x_2}}{\zeta_{x_1}}\,\,\frac{\zeta_{x_3}}{\zeta_{x_1}}\,\cdots\,\frac{\zeta_{x_n}}{\zeta_{x_1}}\right]^{\rm{T}}
=\left[1\,\,\frac{\zeta_{x_2}}{\zeta_{x_1}}\,\,\frac{\zeta_{x_2}}{\zeta_{x_1}}.\frac{F_{x_3}}{F_{x_2}}\,\cdots\,\frac{\zeta_{x_2}}{\zeta_{x_1}}.\frac{F_{x_n}}{F_{x_2}}\right]^{\rm{T}}.
\end{equation}
\normalsize
The ratio $\frac{\zeta_{x_2}}{\zeta_{x_1}}$ should be independent of $x_3,\dots,x_n$ since $\zeta=\zeta(x_1,\dots,x_n)$ is computable by the red tree in Figure \ref{fig:asymmetric}. Denoting it by $\beta=\beta(x_1,x_2)$, \eqref{auxiliary1} translates to 
\small
\begin{equation}\label{auxiliary1'}
\nabla\zeta\parallel\frac{\partial}{\partial x_1}+\frac{\beta(x_1,x_2)}{F_{x_2}}\left(\nabla F-F_{x_1}\frac{\partial}{\partial x_1}\right).
\end{equation}
\normalsize
Switching to differential forms, one should aim for a function $\zeta$ with the property that ${\rm{d}}\zeta$ is a multiple of 
\small
\begin{equation}\label{auxiliary2}
\omega:={\rm{d}}x_1+\frac{\beta(x_1,x_2)}{F_{x_2}}\left({\rm{d}}F-F_{x_1}\,{\rm{d}}x_1\right).    
\end{equation}
\normalsize
Just like Example \ref{toy example 2}, this may be encoded by the integrability condition $\omega\wedge{\rm{d}}\omega=0$. After a lengthy calculation (which is postponed to Appendix \ref{Proofs}), 
$\omega\wedge{\rm{d}}\omega=0$ results in identities
\small
\begin{equation}\label{constraint}
\left(\frac{F_{x_i}}{F_{x_2}}\right)_{x_1}=\frac{1}{\beta}\left(\frac{F_{x_i}}{F_{x_2}}\right)_{x_2}-\left(\frac{1}{\beta}\right)_{x_2}\frac{F_{x_i}}{F_{x_2}}
\end{equation}
\normalsize
which are similar to the identity \eqref{auxiliary8} in Example \ref{toy2}. The difference is that when $n>3$, it is not necessary to differentiate  \eqref{auxiliary8}  to form a linear system which can be solved to obtain $\beta=\beta(x_1,x_2)$ in terms of $F$ as in \eqref{huge3}:  $\frac{1}{\beta}$ turns out to be the ratio $\frac{F_{x_1x_i}F_{x_j}-F_{x_1x_j}F_{x_i}}{F_{x_2x_i}F_{x_j}-F_{x_2x_j}F_{x_i}}$ appeared in \eqref{extraconstraint2}. Proposition \ref{form2} below claims that constraints \eqref{extraconstraint1}, \eqref{extraconstraint2} and \eqref{constraint} provide the PDE characterization of superpositions \eqref{superposition2}. To state it, as usual, we write the constraints as algebraic PDEs. Writing the fraction 
$\frac{F_{x_1x_i}F_{x_j}-F_{x_1x_j}F_{x_i}}{F_{x_2x_i}F_{x_j}-F_{x_2x_j}F_{x_i}}$ as  $\frac{\Psi^{(1)}_{ij}}{\Psi^{(2)}_{ij}}$ with 
\begin{equation}\label{form2-1}
\Psi^{(1)}_{ij}:=F_{x_1x_i}F_{x_j}-F_{x_1x_j}F_{x_i}\quad    \Psi^{(2)}_{ij}:=F_{x_2x_i}F_{x_j}-F_{x_2x_j}F_{x_i}\quad (3\leq i<j\leq n);
\end{equation}
the condition \eqref{extraconstraint2} states that any two fractions  $\frac{\Psi^{(1)}_{ij}}{\Psi^{(2)}_{ij}}$,
$\frac{\Psi^{(1)}_{i'j'}}{\Psi^{(2)}_{i'j'}}$ are the same and independent of any $x_k$ with $k>3$. Furthermore, plugging $\beta=\frac{\Psi^{(2)}_{i'j'}}{\Psi^{(1)}_{i'j'}}$
in \eqref{constraint} results in another set of algebraic PDEs after cross-multiplication.

\begin{proposition}\label{form2}
 Assuming $n>3$, a smooth superposition $F(x_1,\dots,x_n)$ of the form \eqref{superposition2} satisfies  
\begin{equation}\label{form2-2}
F_{x_ix_k}F_{x_j}=F_{x_jx_k}F_{x_i}
\end{equation}
for any three indices $i<j<k$ form $\{2,3,\dots,n\}$ along with the constraints below for any index $l$ and any two pairs $i<j$ and $i'<j'$ of indices all from $\{3,\dots,n\}$:
\small
\begin{equation}\label{form2-3}
\begin{split}
&\Psi^{(1)}_{ij}\Psi^{(2)}_{i'j'}=\Psi^{(1)}_{i'j'}\Psi^{(2)}_{ij}, \\
&\left(\Psi^{(1)}_{ij}\right)_{x_l}\Psi^{(2)}_{ij}-\Psi^{(1)}_{ij}\left(\Psi^{(2)}_{ij}\right)_{x_l}=0,\\
&\left(\Psi^{(2)}_{i'j'}\right)^2\left(F_{x_1x_i}F_{x_2}-F_{x_1x_2}F_{x_i}\right)=\Psi^{(1)}_{i'j'}\Psi^{(2)}_{i'j'}\left(F_{x_2x_i}F_{x_2}-F_{x_2x_2}F_{x_i}\right)
-\left(\left(\Psi^{(1)}_{i'j'}\right)_{x_2}\Psi^{(2)}_{i'j'}-\Psi^{(1)}_{i'j'}\left(\Psi^{(2)}_{i'j'}\right)_{x_2}\right)F_{x_2}F_{x_i};
\end{split}
\end{equation}
\normalsize
where $\Psi^{(1)}_{ij}$ and $\Psi^{(2)}_{ij}$ are defined as in \eqref{form2-1}.
Conversely, a smooth function $F(x_1,\dots,x_n)$ satisfying the algebraic PDEs \eqref{form2-2} and \eqref{form2-3} admits a local representation of the form
\eqref{superposition2} if $F_{x_2}\neq 0$ and there are indices $i_0<j_0$ from  $\{3,\dots,n\}$ for which $\Psi^{(1)}_{i_0j_0}$ and  $\Psi^{(2)}_{i_0j_0}$ are non-zero. 
\end{proposition}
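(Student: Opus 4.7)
The argument proceeds in two stages: a necessity direction (the PDEs follow from the assumed representation) and a sufficiency direction (the PDEs, together with non-vanishing, produce a local representation). Both stages closely parallel Example \ref{toy example 2}, but replace the direct computations there with invocations of Theorem \ref{main-tree} wherever possible.

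\emph{Necessity.} Write $F(x_1,\dots,x_n)=g(x_1,G(x_1,\dots,x_n))$ as in \eqref{superposition2'}, so that $g_u G_{x_i}=F_{x_i}$ for $i\ge 2$ and in particular $\frac{F_{x_i}}{F_{x_j}}=\frac{G_{x_i}}{G_{x_j}}$ for all $2\le i<j\le n$, as in \eqref{auxiliary}. Since $G$ is a tree function with distinct inputs on the red asymmetric tree of Figure \ref{fig:asymmetric}, Theorem \ref{main-tree} applies to $G$: for any triple $(x_i,x_j,x_k)$ with $2\le i<j<k\le n$ the leaf $x_k$ is an outsider, so $\left(\frac{G_{x_i}}{G_{x_j}}\right)_{x_k}=0$, which via \eqref{auxiliary} yields \eqref{form2-2}. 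For $3\le i<j\le n$, differentiating the same identity with respect to $x_1$ and $x_2$ and using $\frac{G_{x_1x_i}}{G_{x_2x_i}}=\frac{G_{x_1x_j}}{G_{x_2x_j}}=\frac{G_{x_1}}{G_{x_2}}$ (another consequence of Theorem \ref{main-tree}, since $x_1,x_2$ are siblings separated from $x_i,x_j$ by a subtree) gives
\[
\frac{\Psi^{(1)}_{ij}}{\Psi^{(2)}_{ij}}=\frac{F_{x_1x_i}F_{x_j}-F_{x_1x_j}F_{x_i}}{F_{x_2x_i}F_{x_j}-F_{x_2x_j}F_{x_i}}=\frac{G_{x_1}}{G_{x_2}},
\]
which is independent of the indices $i,j$ and is a function of $x_1,x_2$ only (again by Theorem \ref{main-tree} applied to the sub-tree whose leaves are $x_1,x_2$). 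The first two equations of \eqref{form2-3} are exactly the cross-multiplied forms of these assertions. Finally, the discussion preceding the proposition has already identified the integrability requirement $\omega\wedge\mathrm{d}\omega=0$ for the form $\omega$ in \eqref{auxiliary2} with the identity \eqref{constraint}; substituting $\frac{1}{\beta}=\frac{\Psi^{(1)}_{i'j'}}{\Psi^{(2)}_{i'j'}}$ and clearing denominators by multiplying by $(F_{x_2})^2(\Psi^{(2)}_{i'j'})^2$ produces the third equation of \eqref{form2-3}.

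\emph{Sufficiency.} Assume the PDEs \eqref{form2-2} and \eqref{form2-3} hold, and that at the given point $F_{x_2}\ne 0$ while $\Psi^{(1)}_{i_0j_0},\Psi^{(2)}_{i_0j_0}\ne 0$. Define $\frac{1}{\beta}(x_1,\dots,x_n):=\frac{\Psi^{(1)}_{i_0j_0}}{\Psi^{(2)}_{i_0j_0}}$ in a neighborhood where these expressions do not vanish. The first PDE in \eqref{form2-3} shows this value is independent of the choice of $(i_0,j_0)$, and the second PDE (for indices $l\ge 3$) shows that it depends only on $x_1$ and $x_2$; so $\beta=\beta(x_1,x_2)$. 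Form the 1-form $\omega$ in \eqref{auxiliary2} and verify $\omega\wedge\mathrm{d}\omega=0$ as follows: decompose $\omega\wedge\mathrm{d}\omega$ into pieces according to which pair of differentials $\mathrm{d}x_r\wedge\mathrm{d}x_s$ is involved. Pieces containing $\mathrm{d}x_k$ with $k\ge 3$ paired with another $\mathrm{d}x_\ell$ ($\ell\ge 3$) are killed by \eqref{extraconstraint1} (equivalently \eqref{form2-2}), pieces mixing $\mathrm{d}x_k$ ($k\ge 3$) with $\mathrm{d}x_1$ or $\mathrm{d}x_2$ are killed by \eqref{form2-2} combined with the independence of $\beta$ from $x_k$, and the single remaining $\mathrm{d}x_1\wedge\mathrm{d}x_2$ piece is precisely the third equation of \eqref{form2-3} up to the non-vanishing factor $(F_{x_2})^2(\Psi^{(2)}_{i_0j_0})^2$. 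Theorem \ref{integrability} then produces a locally defined smooth $\zeta$ with $\nabla\zeta\parallel\omega$.

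It remains to verify that $\zeta$ is a tree function on the red asymmetric tree of Figure \ref{fig:asymmetric} and that $F=g(x_1,\zeta)$ locally. The first follows from Theorem \ref{main-tree}: the ratios $\frac{\zeta_{x_i}}{\zeta_{x_j}}$ for $i,j\ge 2$ agree with $\frac{F_{x_i}}{F_{x_j}}$ (read off from \eqref{auxiliary1}), which by \eqref{form2-2} satisfy the triple constraint \eqref{temp1} for every outsider in the red tree; and $\frac{\zeta_{x_2}}{\zeta_{x_1}}=\beta(x_1,x_2)$ is independent of $x_3,\dots,x_n$, which is the remaining outsider condition involving $x_1$. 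Hence $\zeta$ admits a local representation $f_n(\dots f_3(f_2(x_1,x_2),x_3)\dots,x_n)$. The second follows because the defining parallelism $\nabla\zeta\parallel\omega$ places $\nabla F$ inside $\mathrm{Span}\{\nabla x_1,\nabla\zeta\}$ (by inspection of \eqref{auxiliary2}, $\omega$ is a combination of $\mathrm{d}x_1$ and $\mathrm{d}F$); the Implicit Function Theorem then gives a local bivariate $g$ with $F=g(x_1,\zeta)$, which is \eqref{superposition2}.

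\emph{Main obstacle.} The principal technical step is the verification of $\omega\wedge\mathrm{d}\omega=0$ from the PDEs \eqref{form2-3}: one must carefully translate the single scalar identity (third equation of \eqref{form2-3}) into the $\mathrm{d}x_1\wedge\mathrm{d}x_2$-component of a 3-form on $\mathbb{R}^n$, while using \eqref{form2-2} and the independence of $\beta$ from $x_3,\dots,x_n$ to annihilate every other component. This bookkeeping, along with the full $\omega\wedge\mathrm{d}\omega$ expansion, is best deferred to Appendix \ref{Proofs}.
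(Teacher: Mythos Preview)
Your overall strategy matches the paper's: derive necessity from Theorem \ref{main-tree} applied to the inner tree function $G$, and establish sufficiency by defining $\beta$ from the ratio $\Psi^{(2)}_{i_0j_0}/\Psi^{(1)}_{i_0j_0}$, verifying $\omega\wedge{\rm{d}}\omega=0$ for the form \eqref{auxiliary2}, invoking Theorem \ref{integrability} to produce $\zeta$, and then checking that $\zeta$ satisfies the hypotheses of Theorem \ref{main-tree} for the red tree. That skeleton is correct and is exactly what the paper does.

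However, your bookkeeping in the $\omega\wedge{\rm{d}}\omega$ verification is garbled, and if you carried it out as written you would not arrive at \eqref{form2-3}. The computation in the paper (equation \eqref{auxiliary4}) shows that every nonzero component of the $3$-form $\omega\wedge{\rm{d}}\omega$ already contains ${\rm{d}}x_1$; there is no ``${\rm{d}}x_1\wedge{\rm{d}}x_2$ piece'' of a $3$-form. The two families of components that must be killed are (i) ${\rm{d}}x_1\wedge{\rm{d}}x_i\wedge{\rm{d}}x_j$ with $i,j\geq 3$, whose vanishing is equivalent to $\tfrac{1}{\beta}=\Psi^{(1)}_{ij}/\Psi^{(2)}_{ij}$ for all such pairs---hence requires the \emph{first} line of \eqref{form2-3} together with the definition of $\beta$, not \eqref{form2-2}; and (ii) ${\rm{d}}x_1\wedge{\rm{d}}x_2\wedge{\rm{d}}x_i$ with $i\geq 3$, one component for each $i$, whose vanishing is exactly \eqref{constraint} and hence the \emph{third} line of \eqref{form2-3}. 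The constraint \eqref{form2-2} plays no role in annihilating $\omega\wedge{\rm{d}}\omega$; it enters only later, when you verify that the resulting $\zeta$ is a tree function. Once you redo the component-by-component attribution with this in mind, the argument goes through as in the paper.
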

\noindent
The proof will be completed in Appendix \ref{Proofs}.

\section{Superpositions of polynomial functions}\label{Polynomial}
The superpositions we study in this section are constructed out of polynomials. Again, there are two different regimes to discuss: composing general polynomial functions of low dimensionality or composing polynomials of arbitrary dimensionality but in the simpler form of $\mathbf{y}\mapsto\sigma\left(\langle\mathbf{w},\mathbf{y}\rangle\right)$ where the activation function $\sigma$ is a polynomial of a single variable. The latter regime deals with polynomial neural networks. Different aspects of such networks have been studied in the literature   \cite{du2018power,soltanolkotabi2018theoretical,venturi2018spurious,kileel2019expressive}. In the spirit of this paper, we are interested in the spaces formed by such polynomial superpositions. Bounding the total degree of polynomials from the above, these functional spaces are subsets of an ambient polynomial space, say the space $\mathbf{Poly}_{d,n}$ of real polynomials $P(x_1,\dots,x_n)$ of total degree at most $d$ which is an affine space of dimension $\binom{d+n}{n}$. For any degree $d$, there are several subsets of the ambient space $\mathbf{Poly}_{d,n}$ associated with a neural network $\mathcal{N}$ that receives $x_1,\dots,x_n$ as its inputs. We shall use the language of algebraic geometry to discuss them; see Appendix \ref{Background} for a brief introduction.

\begin{definition}\label{variety}
Let $\mathcal{N}$ be a feedforward neural network whose inputs are labeled by the coordinate functions $x_1,\dots,x_n$.
We associate two \textit{polynomial functional spaces} with $\mathcal{N}$:
\begin{enumerate}
\item The subset $\mathbf{F}_{d}(\mathcal{N})$ of $\mathbf{Poly}_{d,n}$ consisting of polynomials $P(x_1,\dots,x_n)$ of total degree at most $d$ that can be computed by $\mathcal{N}$ via assigning real polynomial functions to its neurons;
\item the smaller subset $\mathbf{F}^{\text{act}}_{d}(\mathcal{N})$ of $\mathbf{Poly}_{d,n}$ consisting of polynomials $P(x_1,\dots,x_n)$ of total degree at most $d$ that can be computed by $\mathcal{N}$ via assigning real polynomials of the form $\mathbf{y}\mapsto\sigma\left(\langle\mathbf{w},\mathbf{y}\rangle\right)$ to the neurons where $\sigma$ is a polynomial activation function. 
\end{enumerate} 
The corresponding \textit{functional varieties} $\mathbf{V}_{d}(\mathcal{N})$ and $\mathbf{V}^{\text{act}}_{d}(\mathcal{N})$ are defined as the Zariski closures of 
$\mathbf{F}_{d}(\mathcal{N})$ and $\mathbf{F}^{\text{act}}_{d}(\mathcal{N})$ respectively. 
\end{definition}

Hence $\mathbf{V}_{d}(\mathcal{N})$ and $\mathbf{V}^{\text{act}}_{d}(\mathcal{N})$ are the closures of $\mathbf{F}_{d}(\mathcal{N})$ and $\mathbf{F}^{\text{act}}_{d}(\mathcal{N})$ in the \textit{Zariski topology} on $\mathbf{Poly}_{d,n}$; that is, the smallest subsets defined as zero loci of polynomial equations that contain them.  Notice that by writing a polynomial $P(x_1,\dots,x_n)$ of degree $d$ as 
\begin{equation}\label{general polynomial}
P(x_1,x_2,\dots,x_n)=\sum_{\substack{a_1,a_2,\dots,a_n\geq 0 \\a_1+a_2+\dots+a_n\leq d}}c_{a_1,a_2,\dots,a_n}\,x_1^{a_1}x_2^{a_2}\dots x_n^{a_n},
\end{equation}
the coefficients $c_{a_1,a_2,\dots,a_n}$ provide a natural coordinate system on $\mathbf{Poly}_{d,n}$. Each of the subsets $\mathbf{V}_{d}(\mathcal{N})$ and $\mathbf{V}^{\text{act}}_{d}(\mathcal{N})$ of $\mathbf{Poly}_{d,n}$ could be described with finitely many polynomial equations in terms of $c_{a_1,a_2,\dots,a_n}$'s. The PDE constraints  from \S\ref{necessity} provide non-trivial examples of equations satisfied on the functional varieties: In any degree $d$, substituting \eqref{general polynomial} in an algebraic PDE that  smooth functions computed by $\mathcal{N}$ must obey results in equations in terms of the coefficients that are satisfied at any point of  $\mathbf{F}_{d}(\mathcal{N})$ or $\mathbf{F}^{\text{act}}_{d}(\mathcal{N})$ and hence at the points of $\mathbf{V}_{d}(\mathcal{N})$ or $\mathbf{V}^{\text{act}}_{d}(\mathcal{N})$.
\begin{corollary}\label{equations}
Let $\mathcal{N}$ be a neural network whose inputs are labeled by the coordinate functions $x_1,\dots,x_n$. Then there exist non-trivial polynomials on affine spaces $\mathbf{Poly}_{d,n}$ that are dependent only on the topology of $\mathcal{N}$ and become zero on functional varieties $\mathbf{V}^{\text{act}}_{d}(\mathcal{N})\subset\mathbf{Poly}_{d,n}$. The same holds for functional varieties $\mathbf{V}_{d}(\mathcal{N})$ provided that the number of inputs to each neuron of $\mathcal{N}$ is less than $n$.
\end{corollary}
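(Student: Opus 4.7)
The plan is to translate the algebraic PDE constraints produced by Theorems \ref{main} and \ref{main'} into polynomial equations on the coefficient coordinates of $\mathbf{Poly}_{d,n}$, and then verify non-triviality by exhibiting a single polynomial of degree $\le d$ violating the PDE.

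First, by Theorem \ref{main'} (respectively Theorem \ref{main}, under the extra arity hypothesis), there is a non-trivial algebraic PDE
\[
\Phi\!\left(\left(F_{\mathbf{x}^{\boldsymbol{\alpha}}}\right)_{|\alpha|\leq r}\right)=0,
\]
depending only on the topology of $\mathcal{N}$, which is satisfied by every smooth function computable by $\mathcal{N}$ in the corresponding regime. View $\Phi$ as a non-zero polynomial in the formal indeterminates $\{\xi_{\boldsymbol{\alpha}}\}_{|\alpha|\le r}$ that stand for the partial derivatives of $F$.

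Next, take the natural coordinates $\{c_{a_1,\dots,a_n}\}$ on $\mathbf{Poly}_{d,n}$ coming from \eqref{general polynomial}. For a polynomial $P\in\mathbf{Poly}_{d,n}$, each partial derivative $P_{\mathbf{x}^{\boldsymbol{\alpha}}}$ is itself a polynomial in $x_1,\dots,x_n$ whose coefficients are explicit \emph{linear} functions of the $c_{a_1,\dots,a_n}$. Substituting these expressions into $\Phi$ produces a polynomial
\[
\Phi_{\mathcal{N},d}(c;x)\;\in\;\mathbb{R}[c_{a_1,\dots,a_n}][x_1,\dots,x_n]
\]
which, for every $P\in\mathbf{F}^{\text{act}}_{d}(\mathcal{N})$ (resp.\ $\mathbf{F}_d(\mathcal{N})$), vanishes identically in $x$. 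Expanding $\Phi_{\mathcal{N},d}(c;x)=\sum_{\boldsymbol{\beta}}Q_{\boldsymbol{\beta}}(c)\,x_1^{\beta_1}\cdots x_n^{\beta_n}$, each coefficient $Q_{\boldsymbol{\beta}}(c)$ is a polynomial in the $c_{a_1,\dots,a_n}$ that vanishes on $\mathbf{F}^{\text{act}}_{d}(\mathcal{N})$ (resp.\ $\mathbf{F}_d(\mathcal{N})$). Since the vanishing locus of a polynomial is Zariski closed, each $Q_{\boldsymbol{\beta}}$ automatically vanishes on the Zariski closure $\mathbf{V}^{\text{act}}_{d}(\mathcal{N})$ (resp.\ $\mathbf{V}_d(\mathcal{N})$). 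By construction the polynomials $Q_{\boldsymbol{\beta}}$ depend only on $\Phi$, i.e., only on the topology of $\mathcal{N}$.

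The step I expect to be the main obstacle is non-triviality: we must rule out the possibility that every $Q_{\boldsymbol{\beta}}$ is identically zero as a polynomial in the $c$'s. To handle this, fix any point $\mathbf{p}\in\mathbb{R}^n$ (say the origin). Since $\Phi$ is a non-zero polynomial in the $\xi_{\boldsymbol{\alpha}}$, there exist real values $\{\xi^{*}_{\boldsymbol{\alpha}}\}_{|\alpha|\le r}$ with $\Phi(\xi^{*})\neq 0$. By Taylor expansion, provided $d\ge r$, one can construct a polynomial $P\in\mathbf{Poly}_{d,n}$ whose partial derivatives at $\mathbf{p}$ up to order $r$ match $\xi^{*}_{\boldsymbol{\alpha}}$ exactly (this is just prescribing the Taylor polynomial of degree $r$, which has degree $\le d$). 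Then $\Phi_{\mathcal{N},d}(c(P);\mathbf{p})=\Phi(\xi^{*})\neq 0$, so at least one coefficient $Q_{\boldsymbol{\beta}}$ does not vanish at the point $c(P)\in\mathbf{Poly}_{d,n}$; that $Q_{\boldsymbol{\beta}}$ is therefore the desired non-trivial polynomial equation cutting out $\mathbf{V}^{\text{act}}_{d}(\mathcal{N})$ (resp.\ $\mathbf{V}_d(\mathcal{N})$). For small $d<r$ one may either enlarge $r$ by replacing $\Phi$ with an order-compatible PDE consequence, or observe that the statement is only interesting (and only needed) once $d$ is large enough to accommodate the order of the PDE; the argument above then applies verbatim.
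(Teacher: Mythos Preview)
Your proof is correct and follows essentially the same approach as the paper: invoke Theorems \ref{main} and \ref{main'} to obtain an algebraic PDE, substitute the generic polynomial \eqref{general polynomial}, and read off the coefficients of the resulting $x$-monomials as polynomial equations on $\mathbf{Poly}_{d,n}$ that vanish on the functional space and hence on its Zariski closure. The paper's own proof is a two-sentence sketch that stops there; you go further by explicitly verifying non-triviality via a Taylor-jet argument (prescribing derivatives up to order $r$ at a point, which requires $d\ge r$). This is a genuine addition---the paper is silent on why the resulting coefficient polynomials $Q_{\boldsymbol\beta}$ are not all identically zero---and your observation that the claim only makes sense for $d$ large enough is on point: for small $d$ the functional variety can fill $\mathbf{Poly}_{d,n}$ (e.g.\ every affine function is trivially of the form $\sigma(\langle\mathbf w,\mathbf x\rangle)$), so no non-trivial equation exists there.
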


\begin{proof}
Immediately follows from Theorem \ref{main'} (in the case of $\mathbf{V}^{\text{act}}_{d}(\mathcal{N})$) and from Theorem \ref{main} (in the case of $\mathbf{V}_{d}(\mathcal{N})$). Plugging a polynomial $P(x_1,\dots,x_n)$
in a PDE constraint 
$$\Phi\left(P_{x_1},\dots,P_{x_n},P_{x_1^2}, P_{x_1x_2},\dots,P_{\mathbf{x}^{\mathbf{\alpha}}},\dots\right)=0$$
that these theorems suggest for $\mathcal{N}$, equating the coefficient of a monomial $x_1^{a_1}x_2^{a_2}\dots x_n^{a_n}$ with zero results in a polynomial equation in ambient polynomial spaces that must be satisfied on the associated functional varieties. 
\end{proof}

\begin{remark}
Paper \cite{kileel2019expressive} investigates functional varieties in the case of a multi-layer perceptron $\mathcal{N}$ with a power map $x\mapsto x^r$ as activation functions. In our notations, the functional varieties therein are subvarieties of $\mathbf{V}^{\text{act}}_{d}(\mathcal{N})$ for $d$ large enough. The paper also discusses \textit{filling} architectures, the architectures for which the functional variety is the whole ambient polynomial space  under consideration \cite[Theorem 10]{kileel2019expressive}. The existence of filling architectures may seem to contradict  Corollary \ref{equations} which suggests polynomial equations for $\mathbf{V}^{\text{act}}_{d}(\mathcal{N})$. In such a situation, the PDE constraints hold for all polynomials in the ambient space which  of course could happen only if $r$ is relatively small. Indeed, as $r$ grows the dimensions of functional varieties stabilize while the dimensions of ambient spaces tend to infinity \cite[Theorem 14]{kileel2019expressive}.
\end{remark}

\begin{example}\label{tree polynomial example}
Let $\mathcal{N}$ be a rooted tree $\mathcal{T}$ with distinct inputs $x_1,\dots,x_n$. Constraints of the form $F_{x_ix_k}F_{x_j}=F_{x_jx_k}F_{x_i}$ are not only necessary conditions for a smooth function $F=F(x_1,\dots,x_n)$ to be computable by $\mathcal{T}$; but by the virtue of Theorem \ref{main-tree}, they are also sufficient for the existence of a local representation of $F$ on $\mathcal{T}$ if suitable non-vanishing conditions are satisfied. An interesting feature of this setting is that when $F$ is a polynomial $P=P(x_1,\dots,x_n)$, one can relax the non-vanishing conditions; and $P$ actually admits a global representation as a composition of polynomials if it satisfies the characteristic PDEs \cite[Proposition 4]{Farhoodi2019OnFC}.
The basic idea is that if $P$ is locally written as a superposition of smooth functions according to the hierarchy provided by $\mathcal{T}$, then comparing the Taylor series shows that the constituent parts of the superposition could be chosen to be polynomials as well. Now $P$ and such a polynomial superposition must be the same since they agree on a non-empty open set.  Consequently, each $\mathbf{F}_{d}(\mathcal{N})$ coincides with its closure $\mathbf{V}_{d}(\mathcal{N})$ and can be described by equations of the form $P_{x_ix_k}P_{x_j}=P_{x_jx_k}P_{x_i}$ in the polynomial space.
Substituting an expression of the form 
$$P(x_1,\dots,x_n)=\sum_{a_1,\dots,a_n\geq 0}c_{a_1,\dots,a_n}\,x_1^{a_1}\dots x_n^{a_n}$$
in $P_{x_ix_k}P_{x_j}-P_{x_jx_k}P_{x_i}=0$ and equating the coefficient of a monomial $x_1^{a_1}\dots x_n^{a_n}$ with zero yields: 
\begin{equation}\label{huge6}
\mathlarger{\sum}_{\substack{a'_i+a''_i=a_i+1,\, a'_j+a''_j=a_j+1,\, a'_k+a''_k=a_k+1\\a'_s+a''_s=a_s\, \forall s\in\{1,\dots,n\}-\{i,j,k\}}} a'_k\big(a'_ia''_j-a'_ja''_i\big)\,c_{a'_1,\dots,a'_n}c_{a''_1,\dots,a''_n}=0. 
\end{equation}
We deduce that equations \eqref{huge6} written for $a_1,\dots,a_n\geq 0$ and for triples $(i,j,k)$ with the property that $x_k$ is separated from $x_i$ and $x_j$ by a sub-tree of $\mathcal{T}$ (as in Theorem \ref{main-tree}) describe the functional varieties associated with $\mathcal{T}$. In a given degree $d$, to obtain equations describing $\mathcal{\mathcal{T}}$ in $\mathbf{Poly}_{d,n}$ one should set any $c_{b_1,\dots,b_n}$  with $b_1+\dots+b_n>d$ to be zero in \eqref{huge6}. No such a coefficient occurs if  $d\geq a_1+\dots+a_n+3$, and thus for $d$ large enough \eqref{huge6} defines an equation in $\mathbf{Poly}_{d,n}$ as is.\\
\indent Similarly, Theorem \ref{main-tree-activation} can be used to write equations for $\mathbf{F}^{\text{act}}_{d}(\mathcal{N})=\mathbf{V}^{\text{act}}_{d}(\mathcal{N})$. In that situation, a new family of equations corresponding to \eqref{temp3} emerge that are expected to be extremely complicated. 
\end{example}

\begin{example}\label{network polynomial example}
Let $\mathcal{N}$ be the neural network appearing in Figure \ref{fig:basic-2s}. The functional space $\mathbf{F}^{\text{act}}_{d}(\mathcal{N})$ is formed by polynomials $P(x,t)$ of total degree at most $d$ that are in the form of  $\sigma(f(ax+bt)+g(a'x+b't))$. By examining the Taylor expansions, it is not hard to see that if $P(x,t)$ is written in this form for univariate smooth functions $\sigma$, $f$ and $g$, then these functions could be chosen to be polynomials. Therefore, in any degree $d$, our characterization of superpositions of this form in Example \ref{network1} in terms of PDEs and PDIs results in polynomial equations and inequalities that describe  a Zariski open subset of $\mathbf{F}^{\text{act}}_{d}(\mathcal{N})$  which is the complement of  the locus where the non-vanishing conditions fail. The inequalities disappear after taking the closure, so   $\mathbf{V}^{\text{act}}_{d}(\mathcal{N})$ is strictly larger than $\mathbf{F}^{\text{act}}_{d}(\mathcal{N})$ here.
\end{example}

The emergence of inequalities in describing the functional spaces is not surprising due to the \textit{Tarski–Seidenberg Theorem} (see \cite{coste2000introduction}) which implies that the image of a \textit{polynomial map} between real varieties (i.e. a map whose components are polynomials) is semi-algebraic; that is, could be described as a union of finitely many sets defined by polynomial equations and inequalities. To elaborate, fix a neural network architecture $\mathcal{N}$. Composing polynomials of bounded degrees according to the hierarchy provided by $\mathcal{N}$ yields polynomial superpositions lying in $\mathbf{F}_D(\mathcal{N})$ for $D$ sufficiently large. The composition thus amounts to a map
$$
\mathbf{Poly}_{d_1,n_1}\times\dots\times \mathbf{Poly}_{d_N,n_N}\rightarrow \mathbf{Poly}_{D,n}
$$
where on the left-hand side the polynomials assigned to the neurons of $\mathcal{N}$ appear, and $D\gg d_1,\dots,d_n$. The image, which is a subset of $\mathbf{F}_D(\mathcal{N})$, is semi-algebraic and thus admits a description in terms of finitely many polynomial equations and inequalities. The same logic applies to the regime of activation functions too; the map just mentioned must be replaced with
$$
\Bbb{R}^C\times\mathbf{Poly}_{d_1,1}\times\dots\times \mathbf{Poly}_{d_N,1}\rightarrow \mathbf{Poly}_{D,n}
$$
whose image lies in $\mathbf{F}^{\text{act}}_{D}(\mathcal{N})$; and its domain is the Cartesian product of spaces of polynomial activation functions assigned to the neurons by the space $\Bbb{R}^C$ of weights assigned to the connections of the network.\\
\indent The novelty of our approach is to derive the equations and inequalities discussed above from PDEs and PDIs that are imposed generally and  depend only on the architecture. We have demonstrated how this could be done in the case of equations in Corollary \ref{equations} and Example \ref{tree polynomial example}. As for the PDIs, one needs to argue that an algebraic PDI such as \eqref{alg PDI} defines a semi-algebraic set in a polynomial space. This is a result of the lemma below whose proof will appear in Appendix \ref{Background}.
\begin{lemma}\label{semi-algebraic}
Real polynomials $P(x_1,\dots,x_n)$ of total degree at most $d$ that are positive at every point of $\Bbb{R}^n$ form a semi-algebraic subset of  $\mathbf{Poly}_{d,n}$.
\end{lemma}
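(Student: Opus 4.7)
The plan is to appeal directly to the Tarski--Seidenberg theorem already invoked in the discussion preceding the lemma: the image of a semi-algebraic set under a polynomial projection is semi-algebraic, and the class of semi-algebraic sets is closed under finite Boolean operations (in particular, complementation). First, I would coordinatize $\mathbf{Poly}_{d,n}$ using the coefficients $c_{a_1,\dots,a_n}$ of the expansion \eqref{general polynomial}. With this identification, the evaluation map $(P,\mathbf{x})\mapsto P(\mathbf{x})$ becomes a single real polynomial in the combined variables $\left(c_{a_1,\dots,a_n}\right)_{a_1+\dots+a_n\leq d}$ and $x_1,\dots,x_n$ on $\mathbf{Poly}_{d,n}\times\Bbb{R}^n$.

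Next, I would introduce the auxiliary set
$$
S := \left\{(P,\mathbf{x}) \in \mathbf{Poly}_{d,n}\times\Bbb{R}^n \,:\, P(\mathbf{x})\leq 0\right\},
$$
which is semi-algebraic because it is cut out of $\mathbf{Poly}_{d,n}\times\Bbb{R}^n$ by a single polynomial inequality. Projecting $S$ onto the first factor via the coordinate projection $\pi:\mathbf{Poly}_{d,n}\times\Bbb{R}^n\to\mathbf{Poly}_{d,n}$ yields, by Tarski--Seidenberg, the semi-algebraic subset
$$
T := \pi(S) = \left\{P \in \mathbf{Poly}_{d,n} \,:\, \exists\,\mathbf{x}\in\Bbb{R}^n \text{ with } P(\mathbf{x})\leq 0\right\}.
$$
The set whose semi-algebraicity the lemma asserts is precisely the complement $\mathbf{Poly}_{d,n}\setminus T$, consisting of those $P$ that are strictly positive on all of $\Bbb{R}^n$. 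Closure of the semi-algebraic class under complementation then finishes the argument.

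There is no genuine obstacle here; this is essentially a textbook consequence of the Tarski--Seidenberg principle. The one caveat worth flagging is that the argument is entirely non-constructive: it guarantees a description of the desired subset by finitely many polynomial equalities and inequalities in the coordinates $c_{a_1,\dots,a_n}$, but does not exhibit them. For small $d$ and $n$ one could in principle apply a quantifier-elimination procedure (e.g.\ cylindrical algebraic decomposition) to produce such an explicit semi-algebraic description, which would complement the existence proof sketched above.
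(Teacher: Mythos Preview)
Your proposal is correct and follows essentially the same route as the paper's own proof: both pass to the complement, realize it as the projection of the semi-algebraic set $\{(P,\mathbf{x}):P(\mathbf{x})\le 0\}$ onto $\mathbf{Poly}_{d,n}$, and invoke Tarski--Seidenberg together with closure under complementation. Your added remark about non-constructiveness and quantifier elimination is a nice gloss but does not alter the underlying argument.
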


We conclude the section by formulating the algebraic counterpart of Conjecture \ref{conjecture}. First, notice that one could only hope for a characterization of a Zariski open subset of polynomial functional spaces because the non-vanishing conditions are necessary in general. For instance, the function $xyz+x+y+z$ from \eqref{non-example} satisfies the PDE constraints \eqref{toy1-3} derived in Example \ref{toy example 1} but cannot be represented as $g(f(x,y),h(x,z))$ even in the  continuous category \cite[\S 7.2]{Farhoodi2019OnFC}. The reason is that the non-vanishing condition does not hold; the expression $\Psi$ defined in \eqref{toy1-1} is identically zero for the polynomial $xyz+x+y+z$. A more subtle issue is that, unlike Examples \ref{tree polynomial example} and \ref{network polynomial example}, in general there may be polynomials that satisfy the characteristic equations and the non-vanishing conditions but lack a representation in terms of polynomials; they only admit a local representation of the desired form in terms of smooth (analytic) functions. 
Going back to superpositions $g(f(x,y),h(x,z))$ implemented by a tree architecture with repeated inputs (Figure \ref{fig:basic-2}), the polynomial $P(x,y,z)=(x+y)z+y^3z^3$ is an example of this type: It may be written as   
\begin{equation*}
\begin{split}
&(x+y)z+y^3z^3=\left(1+\frac{y}{x}\right)xz+\left(\frac{y}{x}\right)^3(xz)^3=g(f(x,y),h(x,z)),\\
&\text{ where } f(x,y)=\frac{y}{x},\, h(x,z)=xz \text{ and } g(u,v)=(u+1)v+u^3v^3;
\end{split}
\end{equation*}
\normalsize
so it satisfies the characteristic PDEs \eqref{toy1-3}; it even satisfies the non-vanishing condition \eqref{toy1-1}. But the proposition below (whose proof will be presented in Appendix \ref{Proofs}) states that it cannot be expressed as a polynomial superposition of this form.   
\begin{proposition}\label{rational functions}
The polynomial $P(x,y,z)=(x+y)z+y^3z^3$ cannot be written as a composition $g(f(x,y),h(x,z))$ of polynomial functions.
\end{proposition}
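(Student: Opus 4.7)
The plan is to argue by contradiction: suppose there exist polynomials $f\in\Bbb{R}[x,y]$, $h\in\Bbb{R}[x,z]$ and $g\in\Bbb{R}[u,v]$ with $P=g(f(x,y),h(x,z))$, and extract structural information from the chain rule strong enough to force $f$ and $h$ to be essentially affine in their non-shared arguments. First I would apply the chain rule to write $P_y=\bigl(g_u\circ(f,h)\bigr)\cdot f_y$ and $P_z=\bigl(g_v\circ(f,h)\bigr)\cdot h_z$. Since compositions of polynomials with polynomials are polynomials, the cofactors $g_u\circ(f,h)$ and $g_v\circ(f,h)$ lie in $\Bbb{R}[x,y,z]$, so $f_y\in\Bbb{R}[x,y]$ divides $P_y=z+3y^2z^3$ and $h_z\in\Bbb{R}[x,z]$ divides $P_z=x+y+3y^3z^2$ in $\Bbb{R}[x,y,z]$.

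Next, viewing $P_y$ as a polynomial in $z$ over $\Bbb{R}[x,y]$, its non-zero coefficients are $1$ and $3y^2$, whose gcd in $\Bbb{R}[x,y]$ is $1$; hence $f_y$ must be a non-zero constant $\alpha$ (non-zero because $P$ depends on $y$). Viewing $P_z$ as a polynomial in $y$ over $\Bbb{R}[x,z]$, its non-zero coefficients are $x$, $1$, and $3z^2$, whose gcd is $1$; the same reasoning forces $h_z$ to be a non-zero constant $\beta$. Therefore $f(x,y)=\alpha y+\phi(x)$ and $h(x,z)=\beta z+\psi(x)$ for some $\phi,\psi\in\Bbb{R}[x]$. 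Absorbing $\alpha$ and $\beta$ into $g$ by replacing $g(u,v)$ with $g(\alpha u,\beta v)$, I may assume $f(x,y)=y+\phi(x)$ and $h(x,z)=z+\psi(x)$; in particular $f_y=h_z=1$, so $g_u\circ(f,h)=P_y$ and $g_v\circ(f,h)=P_z$.

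Finally I would exploit the remaining chain-rule identity
\begin{equation*}
P_x=\bigl(g_u\circ(f,h)\bigr)f_x+\bigl(g_v\circ(f,h)\bigr)h_x=\phi'(x)P_y+\psi'(x)P_z,
\end{equation*}
which upon substituting the explicit expressions yields the polynomial identity
\begin{equation*}
z=\phi'(x)\bigl(z+3y^2z^3\bigr)+\psi'(x)\bigl(x+y+3y^3z^2\bigr).
\end{equation*}
Comparing the coefficient of $y^2z^3$ forces $\phi'(x)=0$, and comparing the coefficient of $y^3z^2$ forces $\psi'(x)=0$; hence $\phi$ and $\psi$ are both constants, so $f$ and $h$ lose all $x$-dependence, and the identity above collapses to $z=0$, the desired contradiction. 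The main obstacle is the first step: one must be careful to set up the divisibility in the correct coefficient ring and compute the gcds cleanly so as to pin down $f_y$ and $h_z$ as constants. Once this reduction to the affine case is in hand, the coefficient comparison in the $P_x$-identity closes the argument immediately.
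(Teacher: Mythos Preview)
Your argument is correct. The key observation---that $f_y\in\Bbb{R}[x,y]$ must divide every $z$-coefficient of $P_y$, and likewise for $h_z$---is clean and immediately pins down $f$ and $h$ as affine in their non-shared variable; the contradiction from the $P_x$-identity then follows by a straightforward coefficient comparison.

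The paper proceeds differently. Rather than invoking the chain rule and divisibility, it substitutes $z=0$ and uses degree considerations to show first that $z\mid h(x,z)$, then that $v\mid g(u,v)$, then that $h(x,z)/z$ must be a unit; this reduces the problem to representing $x+y+y^3z^2$ in the simpler form $\tilde g(f(x,y),z)$, which is ruled out by the PDE constraint $F_{xz}F_y=F_{yz}F_x$ from Example~\ref{basic-1}. Your route is more self-contained and arguably more systematic, exploiting the general chain-rule structure directly; the paper's route is more ad hoc but ties the obstruction back to the PDE characterization that is the paper's main theme. Both reach the same conclusion with comparable effort.
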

\noindent
We finally arrive at Conjecture \ref{conjecture'}. It is not an immediate corollary of Conjecture \ref{conjecture} due to the emergence of non-polynomial functions in the superpositions which we just encountered.

\begin{conjecture}\label{conjecture'}
Let $\mathcal{N}$ be a feedforward neural network whose inputs are labeled by the coordinate functions $x_1,\dots,x_n$. Then there exist 
\begin{itemize}
\item finitely many algebraic PDEs $\left\{\Phi_j\left(\left(F_{\mathbf{x}^{\mathbf{\alpha}}}\right)_{|\alpha|\leq r}\right)=0\right\}_{j}$, 
\item finitely many algebraic PDIs $\left\{\Theta_k\left(\left(F_{\mathbf{x}^{\mathbf{\alpha}}}\right)_{|\alpha|\leq r}\right)>0\right\}_{k}$;
\end{itemize}
with the property that for $d$ large enough the polynomial functional space $\mathbf{F}^{\rm{act}}_d(\mathcal{N})$ and the semi-algebraic set defined by $\{\Phi_j=0, \Theta_k> 0\}_{j,k}$ in $\mathbf{Poly}_{d,n}$ share a dense Zariski open  subset. The same holds for the other polynomial functional spaces $\mathbf{F}_{d}(\mathcal{N})$ provided that the number of inputs to each neuron of $\mathcal{N}$ is less than $n$.
\end{conjecture}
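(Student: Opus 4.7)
The plan is to derive Conjecture \ref{conjecture'} as the polynomial shadow of the smooth characterisation in Conjecture \ref{conjecture}, using Corollary \ref{equations} and Lemma \ref{semi-algebraic} as the translation dictionary. Given the finite lists $\{\Psi_i\}_i$, $\{\Phi_j\}_j$, $\{\Theta_k\}_k$ supplied by Conjecture \ref{conjecture} (in the regime relevant to $\mathbf{F}^{\rm{act}}_d(\mathcal{N})$ or $\mathbf{F}_d(\mathcal{N})$), I would substitute the generic polynomial \eqref{general polynomial} into $\Phi_j$, $\Theta_k$, $\Psi_i$: the vanishing conditions $\Phi_j=0$ become polynomial equations in the coefficients $c_{a_1,\dots,a_n}$, the pointwise inequalities $\Theta_k>0$ become semi-algebraic conditions via Lemma \ref{semi-algebraic}, and $\Psi_i\neq 0$ cuts out a Zariski open set $U\subset\mathbf{Poly}_{d,n}$. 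Let $S_d\subset\mathbf{Poly}_{d,n}$ denote the resulting semi-algebraic set; the target is to show $\mathbf{F}^{\rm{act}}_d(\mathcal{N})\cap U=S_d\cap U$ and that this common subset is dense in both.

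For the inclusion $\mathbf{F}^{\rm{act}}_d(\mathcal{N})\subseteq S_d$, the PDE part is essentially Corollary \ref{equations}, and the PDI part follows because any polynomial in $\mathbf{F}^{\rm{act}}_d(\mathcal{N})$ is a smooth superposition to which the necessary direction of Conjecture \ref{conjecture} applies, yielding $\Theta_k\geq 0$ pointwise; strict positivity then holds on a Zariski dense open subset. Density of $\mathbf{F}^{\rm{act}}_d(\mathcal{N})\cap U$ inside $\mathbf{F}^{\rm{act}}_d(\mathcal{N})$ follows from Remark \ref{generic}, after passing to the irreducible components of $\mathbf{V}^{\rm{act}}_d(\mathcal{N})$ if necessary.

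The substantial work lies in the reverse inclusion on $U$: a polynomial $P\in U\cap S_d$ satisfies all the hypotheses of Conjecture \ref{conjecture} and therefore locally admits a smooth representation as a superposition of the required form; the challenge is to promote this to a global \emph{polynomial} representation. Here I would first upgrade smooth to analytic (leveraging polynomiality of $P$ and the analytic-category variant of Conjecture \ref{conjecture}, which is known in the tree setting of Theorems \ref{main-tree} and \ref{main-tree-activation}), and then compare Taylor expansions to argue that, up to a lower-dimensional locus, the analytic constituents of the superposition must themselves be polynomials of bounded degree, as in Example \ref{tree polynomial example}. Tarski--Seidenberg ensures that the image of the composition map $\mathbf{Poly}_{d_1,n_1}\times\cdots\times\mathbf{Poly}_{d_N,n_N}\to\mathbf{Poly}_{D,n}$ is semi-algebraic, so comparing real dimensions of this image with $S_d\cap U$ should allow one to conclude that they coincide off a proper semi-algebraic subset.

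The main obstacle is exactly the obstruction exposed by Proposition \ref{rational functions}: there exist polynomials satisfying every PDE, PDI, and non-vanishing condition whose only superposition representation uses rational or transcendental inner functions. The plan is to show that such polynomials are confined to a proper (in fact, expected to be lower-dimensional) semi-algebraic subset of $S_d$, which is compatible with the conjecture asking only for a shared \emph{dense Zariski open} subset rather than equality. Making this precise requires an algebraic criterion on the coefficients of $P$ that detects whether the analytic lifts of constituent functions can be chosen to extend to polynomial maps on the whole ambient space -- a question that does not seem to have a purely formal answer and will most likely have to be analysed architecture by architecture, with the tree cases of \S\ref{generalization-tree1}--\S\ref{generalization-tree3} serving as the testing ground for any uniform approach.
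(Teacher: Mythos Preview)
This statement is a \emph{conjecture} in the paper, not a theorem: the paper offers no proof, and in fact explicitly remarks (immediately before stating it) that ``it is not an immediate corollary of Conjecture~\ref{conjecture} due to the emergence of non-polynomial functions in the superpositions,'' pointing to Proposition~\ref{rational functions} as the obstruction. So there is no ``paper's own proof'' to compare against.

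Your proposal is a reasonable research outline rather than a proof, and you appear to know this: the final paragraph concedes that resolving the Proposition~\ref{rational functions} obstruction ``does not seem to have a purely formal answer.'' Two genuine gaps are worth naming. First, the entire argument is conditional on Conjecture~\ref{conjecture}, which is itself open; you cannot derive one conjecture from another and call it a proof. Second, even granting Conjecture~\ref{conjecture}, the step ``compare Taylor expansions to argue that the analytic constituents must themselves be polynomials'' is exactly what fails in Proposition~\ref{rational functions}: the polynomial $(x+y)z+y^3z^3$ satisfies the characteristic PDEs and the non-vanishing condition, yet admits no polynomial superposition of the required form. Your hope that such exceptions are confined to a proper semi-algebraic subset is plausible and is precisely what the conjecture asserts, but you have not supplied any mechanism for proving it --- the dimension-counting comparison with the Tarski--Seidenberg image does not by itself show that the complement of the polynomial-representable locus inside $S_d\cap U$ has strictly smaller dimension.
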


\begin{remark}
By working with polynomials with complex coefficients, one can define the complex versions $\mathbf{F}_{d,\Bbb{C}}(\mathcal{N})$ and $\mathbf{F}^{\text{act}}_{d,\Bbb{C}}(\mathcal{N})$ of the spaces appeared in Definition \ref{variety}. These are complex polynomial functional spaces that lie in the ambient space $\mathbf{Poly}_{d,n,\Bbb{C}}$ of complex polynomials of $n$ variables and of total degree not greater than $d$ where $n$ is the number of distinct inputs that $\mathcal{N}$ receives. An important difference with the real case is that inequalities (and hence PDIs) are not necessary  for characterizing functional spaces anymore. Over complex numbers, the Tarski–Seidenberg Theorem is replaced with \textit{Chevalley's Theorem} (see \cite{MR1416564,milneAG}) that implies the image of a \textit{morphism} of complex varieties is \textit{constructible}; that is, could be described with finitely many polynomial equations combined using ``and'', ``or'' and ``not''. The fact that there is no need for PDIs in characterizing complex-valued functions is tacitly mentioned in Example \ref{network1} where we argued that a solution to a $2^{\rm{nd}}$ order homogeneous elliptic PDE with constant coefficients (e.g. Laplace's equation) cannot be represented as $f(ax+bt)+g(a'x+b't)$ with $a,a',b,b'$ real because it fails to satisfy a certain PDI. But of course, there exists such a representation over complex numbers.  
\end{remark}



\section{Conclusion}\label{conclusion}
In this article, we proposed a systematic method for studying smooth real-valued functions constructed as compositions of other smooth functions which are either 1) of lower arity $\,$ or $\,$  2) in the form of a univariate activation function applied to a linear combination of inputs.  We established that any such smooth superposition must satisfy non-trivial constraints in the form of algebraic PDEs which are dependent only on the hierarchy of composition or equivalently, only on the topology of the neural network that produces superpositions of this type. We conjectured that there always exist characteristic PDEs that also provide sufficient conditions for a generic smooth function to be expressible by the feedforward neural network in question. The genericity is to avoid singular cases and is captured by non-vanishing conditions which require certain polynomial functions of partial derivatives to be non-zero. We observed that there are also situations where non-trivial algebraic inequalities involving partial derivatives (PDIs) are imposed on the hierarchical functions. In summary, the conjecture aims to describe generic smooth functions computable by a neural network with finitely many universal conditions of the form $\Phi\neq 0$, $\Psi=0$ and $\Theta>0$ where $\Phi$, $\Psi$ and $\Theta$ are polynomial expressions of the partial derivatives and are dependent only on the architecture of the network, not on any tunable parameter or on any activation function used in the network. This is reminiscent of the notion of a semi-algebraic set from real algebraic geometry. Indeed, in the case of compositions of polynomial functions or functions computed by polynomial neural networks, the PDE constraints yield equations for the corresponding functional variety in an ambient space of polynomials of a prescribed degree. 

The conjecture was verified in several cases, most importantly, for tree architectures with distinct inputs where, in each regime, we explicitly exhibited a PDE characterization of functions computable by a tree network. Two families of tree architectures with repeated inputs were addressed as well. The proofs were mathematical in nature and relied on classical results of multi-variable analysis. 

The article moreover highlights the differences between the two regimes mentioned at the beginning; namely, the hierarchical functions constructed out of composing functions of lower dimensionality and the hierarchical functions which are compositions of functions of the form  $\mathbf{y}\mapsto\sigma\left(\langle\mathbf{w},\mathbf{y}\rangle\right)$. The former functions appear more often in the mathematical literature on the Kolmogorov-Arnold Representation Theorem while the latter are ubiquitous in deep learning. The special form of functions $\mathbf{y}\mapsto\sigma\left(\langle\mathbf{w},\mathbf{y}\rangle\right)$ requires more PDE constraints to be imposed on their compositions whereas their mild non-linearity is beneficial in terms of ascertaining the domain on which a claimed compositional representation exists.

Our approach for describing the functional spaces associated with feedforward neural networks is of natural interest in the study of expressivity of neural networks, and could lead to new complexity measures. We believe that the point of view adapted here is novel and might shed light on a number of practical problems such as comparison of architectures and reverse-engineering deep networks.

\section{Future directions}\label{future}
We finish with several possible related research directions. They are categorized and the statements of questions appear in italic. 

\begin{itemize}
\item \textbf{Conjectures} \ref{conjecture} and \ref{conjecture'} 
\vspace{1mm} 
\begin{enumerate}
\item  \textit{How Conjecture \ref{conjecture} could be established?} The conjecture is central to this article. As the cases of it verified in this paper suggest, a proof of Conjecture \ref{conjecture} should involve constructing functions whose gradients are in a prescribed direction. This should be based on an integrability argument exploiting the PDE constraints. 
\item The role of PDIs in Conjecture \ref{conjecture} remains mysterious. As discussed in \S\ref{Polynomial}, it is reasonable to include them in the characterization in analogy with the inequalities appearing in descriptions of semi-algebraic sets. On the other hand, we have witnessed inequalities only in Examples \ref{basic-2} and \ref{network1} which were concerned with bivariate functions obtained from composing bivariate functions of the form \eqref{activation}. \textit{Could the usage of PDIs be avoided in the characterization of those superpositions whose constituent functions are of lower arity?}
\item As discussed in \S\ref{Polynomial}, Conjecture \ref{conjecture'} is more subtle than Conjecture \ref{conjecture} since it asks for superpositions in the realm of polynomial functions; and there are polynomials satisfying the relevant PDE constraints that nevertheless cannot be written as superpositions of polynomial functions in the desired form; see Proposition \ref{rational functions}. \textit{How non-generic/pathological are such examples of polynomial functions? How large the subspace they form in the space of polynomial solutions to the characteristic PDEs could be?} 
\end{enumerate}
\vspace{1mm}
\item \textbf{Mathematical Aspects}
\vspace{1mm}
\begin{enumerate}
\setcounter{enumi}{3}
\item \textit{Is there a conceptual way of deriving PDE constraints on  functions computable by an architecture?} 
Our techniques, although may seem ad hoc,  have all been based on algebraic or linear dependence of expressions obtained from differentiating enough times. Another approach suggested in \cite{buck1976approximate,MR541075,MR606252} is the use of \textit{characteristic automorphisms} which alter the constituent functions of superpositions while preserving their general form. For instance, the family of functions $F(x,y,z)=g(f(x,y),z)$ studied in Example \ref{basic-1} is the smallest family that contains the function $F(x,y,z)=x$ and  is invariant under the operations
$$
F(x,y,z)\mapsto F(h(x,y),y,z),\quad F(x,y,z)\mapsto h(F(x,y,z),z);
$$
for any bivariate function $h=h(u,v)$. 
It is not hard to verify that the PDE $F_{xz}F_y=F_{yz}F_x$, which characterizes the family, is preserved under the operations above: If it holds for a function $F$, it remains valid after applying the operations. \textit{Could the characteristic PDEs be obtained as the invariants of certain characteristic automorphisms?}
\item The following question is raised in \cite{buck1976approximate,MR541075}: \textit{Does a smooth function representable as a superposition of continuous functions satisfy 
the PDE constraints that the same class of superpositions obey once their constituent functions are smooth?} When the characteristic PDEs exist (i.e. Conjecture \ref{conjecture} holds), the positive answer to the preceding question implies that the function in question can indeed be written as a superposition of smooth functions. Answering this question requires a notion of \textit{weak solution} for the non-linear PDEs we have encountered in this paper.  
\item  \textit{How the dimensions of the functional varieties introduced in Definition \ref{variety} could be computed/estimated?} Aside from being interesting mathematically, the question is also important since the dimension of the functional varieties associated with an architecture could be interpreted as a measure of its expressive power \cite{Farhoodi2019OnFC,kileel2019expressive}. Unfortunately, the codimension of the functional space is not the same as the number of PDEs characterizing it due to the following two reasons:
\begin{enumerate}
\item Imposing a PDE constraint may increase the codimension by more than one since it is an equality of functions. For instance, asking for a polynomial function to be identically zero requires all coefficients to vanish.
\item Although all PDEs characterizing a class of superpositions are needed for describing the whole functional space, some of them could be deduced from the others away from a subspace of positive codimension. For instance, in Theorem \ref{main-tree} if the first order partial derivatives of $F$ are not identically zero, the PDEs $F_{x_ix_k}F_{x_j}=F_{x_jx_k}F_{x_i}$ and $F_{x_lx_k}F_{x_j}=F_{x_jx_k}F_{x_l}$ imply $F_{x_ix_k}F_{x_l}=F_{x_lx_k}F_{x_i}$; cf. \cite[\S5.4]{Farhoodi2019OnFC}.
\end{enumerate}
\item After discussing their dimensions, we point out that more generally, the functional varieties  are worth studying from algebro-geometric perspective. Here are two questions of this sort: \textit{Is there an easy description of their tangent spaces? Which polynomial compositions correspond to singular points of these varieties?} 
\item \textit{How the PDE characterization changes once some tunable parameters of the network are specified?} 
To elaborate, recall that the PDE constraints we have derived throughout the paper have been dependent only on the architecture. But upon restricting to a particular class of activation functions or setting some of the weights to be zero, stronger PDE constraints are anticipated. For instance, an activation function such as softmax is a solution to a polynomial ODE and this facilitates the derivation of PDE constraints (cf. Remark \ref{ODE}); or a nomographic function $\sigma\left(\sum_{i=1}^nf_i(x_i)\right)$ satisfies PDE constraints that do not hold for an arbitrary function of the form 
$\sigma\left(\sum_{i=1}^nf_i\left(\sum_{j=1}^nw_{ij}x_j\right)\right)$. 
\end{enumerate}
\vspace{1mm}
\item \textbf{Learning Theory Questions}
\vspace{1mm}
\begin{enumerate}
\setcounter{enumi}{8}
\item \textit{How our characterizations of functional spaces associated with neural networks could be used to practically compare two different architectures in terms of their expressive power?}
\item   There are papers discussing how a neural network may be  ``reverse-engineered''  in the sense that the architecture of the network is determined from the knowledge of its outputs,  or the weights and biases are recovered without the ordinary training process involving gradient descent algorithms \cite{fefferman1994recovering,dehmamy2019direct,2019arXiv191000744R}. In our approach, the weights appearing in a composition of functions of the form $\mathbf{y}\mapsto\sigma\left(\langle\mathbf{w},\mathbf{y}\rangle\right)$ could be described (up to scaling) in terms of partial derivatives of the resulting superposition; see Remark \ref{reverse-engineer}. \textit{In approximating a given function with functions expressible by an architecture, how the weights resulted from the gradient descent are compared with the formulas available for the weights of expressible functions? Could this approach be used to augment/forgo the use of gradient descent in training?}
\item In view of our proposed description of functional spaces associated with a feedforward neural network, the following question is worth asking: \textit{In the training process, how the distance of the cost function from the space of functions expressible by the network is related to the training error?} The distance of a function $C$ from a functional space defined by PDEs $\left\{\Phi_j\left(\left(F_{\mathbf{x}^{\mathbf{\alpha}}}\right)_{|\alpha|\leq r}\right)=0\right\}_{j}$ (as in Conjecture \ref{conjecture}) is expected to  be correlated with the magnitudes of numbers  $\Phi_j\left(\left(C_{\mathbf{x}^{\mathbf{\alpha}}}\right)_{|\alpha|\leq r}\right)$ \cite{buck1976approximate,MR541075}.  
\end{enumerate}
\end{itemize}

\appendix     

\section{Technical proofs}\label{Proofs}

\begin{proof}[Proof of Lemma \ref{technical}] 
We first prove that $F$ can be extended to a coordinate system on the entirety of the box-like region $B$ which we shall write as $I_1\times\dots\times I_n$. As in the proof of Theorem \ref{main-tree}, we group the variables $x_1,\dots,x_n$ according to the maximal sub-trees  of $\mathcal{T}$ in which they appear: 
$$x_1,\dots,x_{m_1};\, x_{m_1+1},\dots,x_{m_1+m_2};\,\dots;\,x_{m_1+\dots+m_{l-1}+1},\dots,x_{m_1+\dots+m_l};\,x_{m_1+\dots+m_l+1};\,\dots;\,x_n$$
where, denoting the sub-trees emanating from the root of $\mathcal{T}$ by $\mathcal{T}_1,\dots,\mathcal{T}_l$, for any $1\leq s\leq l$  the leaves of $\mathcal{T}_s$ are labeled by  $x_{m_1+\dots+m_{s-1}+1},\dots,x_{m_1+\dots+m_{s-1}+m_s}$; and $x_{m_1+\dots+m_l+1},\dots,x_n$ represent the leaves   which are directly connected to the root (if any); see  Figure \ref{fig:general_tree_many_subtrees}. 
Among the variables labeling the leaves of $\mathcal{T}_1$, there should exist one with respect to which the first order partial derivative of $F$ is nowhere zero. Without any loss of generality, we may assume that $F_{x_1}\neq 0$ at any point of $B$. Hence, the Jacobian of the map $(F,x_2,\dots,x_n):B\rightarrow\Bbb{R}^n$ is always invertible. To prove that the map provides a coordinate system, we just need to show that it is injective. Keeping $x_2,\dots,x_n$ constant and varying $x_1$, we obtain a univariate function of $x_1$ on the interval $I_1$  whose derivative is always non-zero and is hence injective. \\
\indent 
Next, to prove that the level sets of $F:B\rightarrow\Bbb{R}$ are connected, notice that $F$ admits a representation 
\begin{equation}\label{auxiliary18}
F(x_1,\dots,x_n)=\sigma\left(w_1G_1+\dots+w_lG_l+w'_{m_1+\dots+m_l+1}x_{m_1+\dots+m_l+1}+\dots+w'_nx_n\right)   
\end{equation}
where $G_s=G_s(x_{m_1+\dots+m_{s-1}+1},\dots,x_{m_1+\dots+m_{s-1}+m_s})$ is the tree function that $\mathcal{T}_s$ computes by receiving $x_{m_1+\dots+m_{s-1}+1},\dots,x_{m_1+\dots+m_{s-1}+m_s}$ from its leaves; $\sigma$ is the activation function assigned to the root of $\mathcal{T}$; and $w_1,\dots,w_l, w'_{m_1+\dots+m_l+1},\dots,w'_n$ are the weights appearing at the root. 
A simple application of the chain rule implies that  $G_1$ -- which is a function implemented on the tree $\mathcal{T}_1$ -- satisfies the non-vanishing hypotheses of Theorem \ref{main-tree-activation} on the box-like region $I_1\times\dots\times I_{m_1}$; and moreover, the derivative of $\sigma$ is non-zero at any point of its domain\footnote{As the vector $(x_1,\dots,x_n)$ of inputs varies in the box-like region $B$, the inputs to each node form an interval on which the corresponding activation function is defined.} because otherwise there exists a point of $\mathbf{p}\in B$ at which $F_{x_i}(\mathbf{p})=0$ for any leaf $x_i$. By the same logic, the weight $w_1$ must be non-zero because otherwise all first order partial derivatives with respect to the variables appearing in  $\mathcal{T}_1$ are identically zero. We now show that an arbitrary level set $L_c:=\{\mathbf{x}\in B\,|\,F(\mathbf{x})=c\}$ is connected. Given the representation \eqref{auxiliary18} of $F$, the level set is empty if $\sigma$ does not attain the value $c$. Otherwise, $\sigma$ attains $c$ at a unique point $\sigma^{-1}(c)$ of its domain. So one may rewrite the equation $F(x_1,\dots,x_n)=c$ as
\begin{equation}\label{auxiliary19}
G_1=-\frac{w_2}{w_1}\,G_2-\dots-\frac{w_l}{w_1}\,G_l-\frac{w'_{m_1+\dots+m_l+1}}{w_1}\,x_{m_1+\dots+m_l+1}-\dots-\frac{w'_n}{w_1}\,x_n+\frac{1}{w_1}\,\sigma^{-1}(c).
\end{equation}
The left-hand side of the last equation is a function of $x_1,\dots,x_{m_1}$ while its right-hand side, which we denote by $\tilde{G}$, is a function of $x_{m_1+1},\dots,x_n$. Therefore, the level set $L_c$ is the preimage of 
\begin{equation}\label{auxiliary20}
\left\{(y,\mathbf{\tilde{x}})\in\Bbb{R}\times\left(I_{m_1+1}\times\dots\times I_n\right)\,|\,y=\tilde{G}(\mathbf{\tilde{x}})\right\}
\end{equation}
under the map 
\begin{equation}\label{auxiliary21}
\begin{cases}
\pi:B=(I_1\times\dots\times I_{m_s})\times(I_{m_1+1}\times\dots\times I_n)\rightarrow\Bbb{R}\times\left(I_{m_1+1}\times\dots\times I_n\right)\\
(x_1,\dots,x_{m_1};\mathbf{\tilde{x}})\mapsto\left(G_1(x_1,\dots,x_{m_1}),\mathbf{\tilde{x}}\right).
\end{cases}    
\end{equation}
The following simple fact can now be invoked: \textit{Let $\pi:X\rightarrow Y$ be a continuous map of topological spaces that takes open sets to open sets and has connected level sets. Then the preimage of any connected subset of $Y$ under $\pi$ is connected.}
Here, $L_c$ is the preimage of \eqref{auxiliary20} --  which is connected since it is the graph of a continuous function -- under the map $\pi$ defined in \eqref{auxiliary21} which is open because the scalar-valued function $G_1$ is: its gradient never vanishes. Therefore, the connectedness of the level sets of $F$ is implied by the connectedness of the level sets of $\pi$. A level set of the map \eqref{auxiliary21} could be identified with a level set of its first component $G_1$. Consequently, we have reduced to the similar problem for the function $G_1$ which is implemented on the smaller tree $\mathcal{T}_1$.  Therefore, an inductive argument yields the connectedness of the level sets of $F$. It only remains to check the basic case of a tree whose leaves are directly connected to the root. In that setting, $F(x_1,\dots,x_n)$ is in the form of $\sigma(a_1x_1+\dots+a_nx_n)$ (the family of functions that Lemma \ref{integrability'} is concerned with). By repeating the argument used before, the activation function $\sigma$ is injective. Hence, a level set $F(\mathbf{x})=c$ is the intersection of the hyperplane $a_1x_1+\dots+a_nx_n=\sigma^{-1}(c)$ with the box-like region $B$. Such an intersection is convex and thus connected. 
\end{proof}

\begin{proof}[Proof of Lemma \ref{integrability'}]
The necessity of conditions $F_{x_ix_k}F_{x_j}=F_{x_jx_k}F_{x_i}$ follows from a simple computation. For the other direction, suppose $F_{x_j}\neq 0$ throughout an open box-like region $B\subseteq\Bbb{R}^n$, and any ratio $\frac{F_{x_i}}{F_{x_j}}$ is constant on $B$. Denoting it by $a_i$, we obtain numbers $a_1,\dots,a_n$ with $a_j=1$. They form a vector 
$[a_1\,\,\cdots\,\,a_n]^{\rm{T}}$ parallel to $\nabla F$. Thus, $F$ could have non-zero first order partial derivative only with respect to the first member of the coordinate system 
$$(a_1x_1+\dots+a_nx_n,x_1,\dots,x_{j-1},x_{j+1},\dots,x_n)$$
for $B$. The coordinate hypersurfaces are connected since they are intersections of  hyperplanes in $\Bbb{R}^n$ with the convex region $B$. This fact enables us to deduce that $F$ can be  written as a function of $a_1x_1+\dots+a_nx_n$ globally.   
\end{proof}

\begin{proof}[Proof of Lemma \ref{split}]
For a function $q=q_1\,q_2$ such as \eqref{product} equalities of the form $q\,q_{y^{(1)}_{a}y^{(2)}_{b}}=q_{y^{(1)}_{a}}\,q_{y^{(2)}_{b}}$ hold since both sides coincide with 
$q_1\,q_2\,(q_1)_{y^{(1)}_{a}}(q_2)_{y^{(2)}_{b}}$. For the other direction, let $q=q\left(y^{(1)}_1,\dots,y^{(1)}_{n_1};y^{(2)}_1,\dots,y^{(2)}_{n_2}\right)$ be a smooth function on an open box-like region $B_1\times B_2\subseteq\Bbb{R}^{n_1}\times\Bbb{R}^{n_2}$ that satisfies $q\,q_{y^{(1)}_{a}y^{(2)}_{b}}=q_{y^{(1)}_{a}}\,q_{y^{(2)}_{b}}$ for any $1\leq a\leq n_1$ and $1\leq b\leq n_2$, and never vanishes. So $q$ is either always positive or always negative. One may assume the former by replacing $q$ with $-q$ if necessary. Hence, we can define a new function $p:={\rm{Ln}}(q)$ by taking the logarithm. We have:
\small
$$
p_{y^{(1)}_{a}y^{(2)}_{b}}=\left(\frac{q_{y^{(1)}_{a}}}{q}\right)_{y^{(2)}_{b}}=\frac{q\,q_{y^{(1)}_{a}y^{(2)}_{b}}-q_{y^{(1)}_{a}}\,q_{y^{(2)}_{b}}}{q^2}=0.
$$
\normalsize
It suffices to show that this vanishing of mixed partial derivatives allows us to write  $p\left(y^{(1)}_1,\dots,y^{(1)}_{n_1};y^{(2)}_1,\dots,y^{(2)}_{n_2}\right)$ as  
$p_1\left(y^{(1)}_1,\dots,y^{(1)}_{n_1}\right)+p_2\left(y^{(2)}_1,\dots,y^{(2)}_{n_2}\right)$
since then exponentiating yields $q_1$ and $q_2$ as ${\rm{e}}^{p_1}$ and ${\rm{e}}^{p_2}$ respectively. The domain of $p$ is a box-like region of the form 
\small
$$
B_1\times B_2=\left(\prod_{a=1}^{n_1}I^{(1)}_{a}\right)\times\left(\prod_{b=1}^{n_2}I^{(2)}_{b}\right).
$$
\normalsize
Picking an arbitrary point $z^{(1)}_{1}\in I^{(1)}_{1}$, the Fundamental Theorem of Calculus implies: 
\small
$$
p\left(y^{(1)}_1,\dots,y^{(1)}_{n_1};y^{(2)}_1,\dots,y^{(2)}_{n_2}\right)=
\mathlarger{\int}_{z^{(1)}_1}^{y^{(1)}_1}p_{y^{(1)}_{1}}
\left(s^{(1)}_{1},y^{(1)}_{2},\dots,y^{(1)}_{n_1};y^{(2)}_1,\dots,y^{(2)}_{n_2}\right){\rm{d}}s^{(1)}_{1}+
p\left(z^{(1)}_1,y^{(1)}_{2},\dots,y^{(1)}_{n_1};y^{(2)}_1,\dots,y^{(2)}_{n_2}\right).
$$
\normalsize
On the right-hand side, the integral is dependent  only on $y^{(1)}_1,\dots,y^{(1)}_{n_1}$ because the partial derivatives of the integrand with respect to $y^{(2)}_1,\dots,y^{(2)}_{n_2}$  are all identically zero. The second term $p\left(z^{(1)}_1,y^{(1)}_{2},\dots,y^{(1)}_{n_1};y^{(2)}_1,\dots,y^{(2)}_{n_2}\right)$ 
is a function on the smaller box-like region 
$$
\left(\prod_{a=2}^{n_1}I^{(1)}_{a}\right)\times\left(\prod_{b=1}^{n_2}I^{(2)}_{b}\right)
$$
in $\Bbb{R}^{n_1-1}\times\Bbb{R}^{n_2}$ and thus, proceeding inductively, can be brought into the appropriate summation form.
\end{proof}

\begin{proof}[Proof of Proposition \ref{form1}]
Equation $\eqref{auxiliary16}$ along with the $2n-3$ equations obtained from applying operators $\partial_{x_2},\partial_{x_3},\dots,\partial_{x_n}$ and 
$\partial_{x_2x_3},\partial_{x_3x_4},\dots,\partial_{x_{n-1}x_n}$ to it constitute the system below: 
\scriptsize
\begin{equation}\label{system'}
\left[\begin{array}{@{}l|l@{}}
  \begin{matrix}
  F_{x_2}\quad & \hspace{1cm} F_{x_3}\quad & \hspace{2mm}\cdots & \quad\, F_{x_n}
  \end{matrix} &
  \hspace{2.5cm}\bigzero \\ 
\hline
  \begin{matrix}
  F_{x_2x_2} & \hspace{0.9cm} F_{x_2x_3} & \quad\cdots & \quad F_{x_2x_n}\\
  \vdots     & \hspace{0.9cm} \vdots     & \quad\ddots & \quad \vdots    \\
  F_{x_nx_2} & \hspace{0.9cm} F_{x_nx_3} & \quad \cdots & \quad F_{x_nx_n}
  \end{matrix} &
  \begin{matrix}
 \,\,F_{x_2}   &  \quad  0     & \quad\; 0      & \quad \cdots  & \quad 0      \\
  \,\,0        &\quad  F_{x_3} & \quad\;  0     & \quad  \cdots & \quad 0      \\
  \,\,\vdots   &\quad  \vdots  &\quad\; \vdots  & \quad \vdots  & \quad \vdots \\
  \,\,0        &\quad 0        &  \quad\;  0    &  \quad\cdots  & \quad F_{x_n}
  \end{matrix}\\
\hline
  \begin{matrix}
  F_{x_2x_3x_2}\quad & F_{x_2x_3x_3}     & \cdots & F_{x_2x_3x_n}\\
  \vdots  \quad\quad & \vdots            & \ddots & \vdots        \\
  F_{x_{n-1}x_nx_2}  & F_{x_{n-1}x_nx_3} & \cdots & F_{x_{n-1}x_nx_n}
  \end{matrix} &
  \begin{matrix}
  F_{x_2x_3}  &    F_{x_2x_3}  & 0                &  \cdots & 0          \\
  0           & F_{x_3x_4}     &    F_{x_3x_4}    &  \cdots & 0          \\
  \vdots      & \vdots         & \vdots           &  \ddots & \vdots     \\
  0           & 0              &    0             &  \cdots & F_{x_{n-1}x_n}
  \end{matrix}
\end{array}\right]
\left[\begin{array}{c}
 \begin{matrix}
 A_{2}\\
 A_{3}\\
 \vdots\\
 A_{n}
 \end{matrix}\\
\hline
 \begin{matrix}
 (A_{2})_{x_2}\\
 (A_{3})_{x_3}\\
 \vdots\\
 (A_{n})_{x_n}
 \end{matrix}
\end{array}\right]=
\left[\begin{array}{c}
 F_{x_1}\\
\hline 
 \begin{matrix}
 F_{x_1x_2}\\
 F_{x_1x_3}\\
 \vdots\\
 F_{x_1x_n}
 \end{matrix}\\
\hline
 \begin{matrix}
 F_{x_1x_2x_3}\\
 F_{x_1x_3x_4}\\
 \vdots\\
 F_{x_1x_{n-1}x_n}
 \end{matrix}
\end{array}
\right].
\end{equation}
\normalsize
If the determinant of the square matrix above -- denoted by $\Psi$ in Proposition \ref{form1} -- is non-zero, the system may be solved  to obtain $A_i$ as $\frac{\Psi_i}{\Psi}$ as in \eqref{huge4}. Algebraic PDEs \eqref{form1-1} are the cross-multiplied form of  $(\frac{\Psi_i}{\Psi})_{x_j}=0$ where $j\neq 1,i$. It follows from a simple continuity argument that \eqref{form1-1} holds for all smooth functions $F$ of the form \eqref{superposition1} even if the corresponding determinant $\Psi$ becomes zero. Conversely, if $\Psi\neq 0$\footnote{This condition is not vacuous for the family of functions \eqref{superposition1}; they include all functions of the form $F=g(x_2,\dots,x_n)$ and it is not hard to find such an $F$ for which the matrix from \eqref{system'} is non-singular at a point.} the matrix equation \eqref{huge4} yields functions $A_2:=\frac{\Psi_2}{\Psi},\dots,A_n:=\frac{\Psi_n}{\Psi}$ for which 
$F_{x_1}=A_2F_{x_2}+\dots+A_nF_{x_n}$. Each coefficient $A_i$ is dependent only on $x_1$ and $x_i$ due to \eqref{form1-1}. Invoking Theorem \ref{integrability}, there exist locally defined smooth functions $f_2(x_1,x_2),\dots,f_n(x_1,x_n)$ with the property that each   
$A_i(x_1,x_i)\frac{\partial}{\partial x_1}+\frac{\partial}{\partial x_i}$ is parallel with $\nabla f_i(x_1,x_i)\neq\mathbf{0}$. Hence $A_i=\frac{(f_i)_{x_1}}{(f_i)_{x_i}}$ for each $2\leq i\leq n$, and $F_{x_1}=A_2F_{x_2}+\dots+A_nF_{x_n}$ then implies: 
$$
\nabla F=\frac{F_{x_2}}{(f_2)_{x_2}}\nabla f_2+\dots+\frac{F_{x_n}}{(f_n)_{x_n}}\nabla f_n.
$$
The functions $f_2(x_1,x_2),\dots,f_n(x_1,x_n)$ can be extended to a local coordinate system in $\Bbb{R}^n$ near the point under consideration (because $(f_i)_{x_i}\neq 0$).
The previous identity indicates that $F$ is independent of the other coordinate function and is thus in the form of $g(f_2,\dots,f_n)$. 
\end{proof}

\begin{proof}[Proof of Proposition \ref{form2}]
We first establish the claim that for the form  $\omega$ defined in \eqref{auxiliary2} the integrability condition $\omega\wedge{\rm{d}}\omega=0$ amounts to identities \eqref{constraint}. Differentiating \eqref{auxiliary2} yields:
\small
\begin{equation}\label{auxiliary3}
{\rm{d}}\omega={\rm{d}}\left(\frac{\beta(x_1,x_2)}{F_{x_2}}\right)\wedge\left({\rm{d}}F-F_{x_1}\,{\rm{d}}x_1\right)-\frac{\beta(x_1,x_2)}{F_{x_2}}\,{\rm{d}}F_{x_1}\wedge{\rm{d}}x_1.  
\end{equation}
\normalsize
Wedging \eqref{auxiliary2} with \eqref{auxiliary3}:
\small
\begin{equation}\label{auxiliary4}
\begin{split}
\omega\wedge{\rm{d}}\omega
&=\left({\rm{d}}x_1+\frac{\beta(x_1,x_2)}{F_{x_2}}\left({\rm{d}}F-F_{x_1}\,{\rm{d}}x_1\right)\right)\wedge
\left({\rm{d}}\left(\frac{\beta(x_1,x_2)}{F_{x_2}}\right)\wedge\left({\rm{d}}F-F_{x_1}\,{\rm{d}}x_1\right)-\frac{\beta(x_1,x_2)}{F_{x_2}}\,{\rm{d}}F_{x_1}\wedge{\rm{d}}x_1\right)\\
&={\rm{d}}x_1\wedge{\rm{d}}\left(\frac{\beta(x_1,x_2)}{F_{x_2}}\right)\wedge{\rm{d}}F-\left(\frac{\beta(x_1,x_2)}{F_{x_2}}\right)^2{\rm{d}}F\wedge{\rm{d}}F_{x_1}\wedge{\rm{d}}x_1
\\
&=-\frac{F_{x_2}\,{\rm{d}}\beta(x_1,x_2)-\beta(x_1,x_2)\,{\rm{d}}F_{x_2}}{(F_{x_2})^2}\wedge{\rm{d}}x_1\wedge{\rm{d}}F-\frac{\beta(x_1,x_2)^2}{(F_{x_2})^2}\,{\rm{d}}F_{x_1}\wedge
{\rm{d}}x_1\wedge{\rm{d}}F\\
&=\frac{1}{(F_{x_2})^2}\left[-F_{x_2}\beta_{x_2}\,{\rm{d}}x_2\wedge{\rm{d}}x_1+\beta\,{\rm{d}}F_{x_2}\wedge{\rm{d}}x_1-\beta^2\,{\rm{d}}F_{x_1}\wedge{\rm{d}}x_1\right]\wedge{\rm{d}}F\\
&=\frac{1}{(F_{x_2})^2}\,{\rm{d}}x_1\wedge\left[\beta_{x_2}F_{x_2}\,{\rm{d}}x_2\wedge{\rm{d}}F-\beta\,{\rm{d}}F_{x_2}\wedge{\rm{d}}F+\beta^2\,{\rm{d}}F_{x_1}\wedge{\rm{d}}F\right].
\end{split}    
\end{equation}
\normalsize
To see what $\omega\wedge{\rm{d}}\omega=0$ entails to, the coefficients of ${\rm{d}}x_1\wedge{\rm{d}}x_2\wedge{\rm{d}}x_i$ and  
${\rm{d}}x_1\wedge{\rm{d}}x_i\wedge{\rm{d}}x_j$ ($i,j\geq 3$ distinct) in the form appeared  in the last line of \eqref{auxiliary4} must be equated to zero. The former coefficient is given by 
\small
\begin{equation*}
\begin{split}
&\beta_{x_2}\frac{F_{x_i}}{F_{x_2}}-\beta\,\frac{F_{x_2x_2}F_{x_i}-F_{x_2x_i}F_{x_2}}{(F_{x_2})^2}+\beta^2\,\frac{F_{x_1x_2}F_{x_i}-F_{x_1x_i}F_{x_2}}{(F_{x_2})^2}\\
&=\beta_{x_2}\frac{F_{x_i}}{F_{x_2}}+\beta\left(\frac{F_{x_i}}{F_{x_2}}\right)_{x_2}-\beta^2\left(\frac{F_{x_i}}{F_{x_2}}\right)_{x_1}.
\end{split}
\end{equation*}
\normalsize
Dividing by $\beta^2$, we recover \eqref{constraint}. Going back to \eqref{auxiliary4}, for any $3\leq i<j\leq n$ the coefficient of ${\rm{d}}x_1\wedge{\rm{d}}x_i\wedge{\rm{d}}x_j$ in the last line is 
\small
$$
\frac{1}{(F_{x_2})^2}\left[-\beta\left(F_{x_2x_i}F_{x_j}-F_{x_2x_j}F_{x_i}\right)+\beta^2\left(F_{x_1x_i}F_{x_j}-F_{x_1x_j}F_{x_i}\right)\right].
$$
\normalsize
Hence, we want $\frac{1}{\beta}$ to coincide with 
\small
$$
\frac{F_{x_1x_i}F_{x_j}-F_{x_1x_j}F_{x_i}}{F_{x_2x_i}F_{x_j}-F_{x_2x_j}F_{x_i}}=\frac{(\frac{F_{x_i}}{F_{x_j}})_{x_1}}{(\frac{F_{x_i}}{F_{x_j}})_{x_2}}
$$
\normalsize
for any $3\leq i<j\leq n$. This is constraint \eqref{extraconstraint2} which Proposition \ref{form2} necessitates. 
Indeed,  showing the numerator and the denominator of the last fraction by $\Psi_{ij}^{(1)}$ and $\Psi_{ij}^{(2)}$ as in \eqref{form2-1}, the first two constraints in \eqref{form2-3} simply state the independence of $\frac{1}{\beta}=\frac{\Psi_{ij}^{(1)}}{\Psi_{ij}^{(2)}}$ from $i<j$ and  $\left(\frac{1}{\beta}\right)_{x_l}=0$ (where $l\in\{3,\dots,n\}$) while the last line of \eqref{form2-3} is the cross-multiplied form of 
\small
$$
\left(\frac{F_{x_i}}{F_{x_2}}\right)_{x_1}=
\frac{\Psi_{i'j'}^{(1)}}{\Psi_{i'j'}^{(2)}}\left(\frac{F_{x_i}}{F_{x_2}}\right)_{x_2}-\left(\frac{\Psi_{i'j'}^{(1)}}{\Psi_{i'j'}^{(2)}}\right)_{x_2}\frac{F_{x_i}}{F_{x_2}}
$$
\normalsize
which is obtained from substituting for $\frac{1}{\beta}$ in \eqref{constraint}. These necessary constraints are derived assuming that divisions taking place in the process are not problematic. Nonetheless, as in Examples of \S\ref{toy examples-trees}, a continuity argument indicates that they hold for any smooth function of the form \eqref{superposition2} regardless.\\
\indent
Finally, observe that, in the presence of appropriate non-vanishing conditions, our computations could be reversed to prove the sufficiency of constraints \eqref{form2-2} and \eqref{form2-3} for the existence of a local representation such as \eqref{superposition2}: If there exists a pair $i_0<j_0$ of indices in $\{3,\dots,n\}$ with 
$\Psi_{i_0j_0}^{(1)},\Psi_{i_0j_0}^{(2)}\neq 0$, the function $\beta:=\frac{\Psi_{i_0j_0}^{(2)}}{\Psi_{i_0j_0}^{(1)}}\neq 0$ satisfies \eqref{constraint} and therefore, 
for the form $\omega$ defined as \eqref{auxiliary2}, one has $\omega\wedge{\rm{d}}\omega=0$. Invoking Theorem \ref{integrability}, there exists a locally defined smooth function $\zeta$ with $\zeta_{x_1},\zeta_{x_2}\neq 0$ for which \eqref{auxiliary1} and  \eqref{auxiliary1'} hold. This means that $F$ is a function of the first two members of the local coordinate system   
$$
(\xi:=x_1,\zeta,\eta_3:=x_3,\dots,\eta_{n}:=x_n),
$$
i.e. $F$ is in the form of $g(x_1,\zeta(x_1,\dots,x_n))$, cf. \eqref{superposition2'}. It remains to show that $\zeta$ is a tree function without repetition as described before. One should verify that $\left(\frac{\zeta_{x_j}}{\zeta_{x_i}}\right)_{x_k}=0$ for any three indices $i<j<k$ form $\{2,3,\dots,n\}$, cf. Theorem \ref{main-tree}. This is due to the fact that by \eqref{auxiliary1'} $\frac{\zeta_{x_2}}{\zeta_{x_1}}$ is the same as $\beta(x_1,x_2)$; and for any $j>2$ the ratio $\frac{\zeta_{x_j}}{\zeta_{x_2}}$  coincides with 
$\frac{F_{x_j}}{F_{x_2}}$ which, because of \eqref{form2-2}, is independent of $x_k$ for any $k>j$.
\end{proof}

\begin{proof}[Proof of Proposition \ref{rational functions}]
Aiming for a contradiction, suppose there exist bivariate polynomials $f=f(x,y)$, $h=h(x,z)$ and $g=g(u,v)$ for which 
\begin{equation}\label{auxiliary15}
(x+y)z+y^3z^3=g(f(x,y),h(x,z)).
\end{equation}
Plugging in $z=0$, observe that $g(f(x,y),h(x,0))\equiv 0$. If $h(x,0)\not\equiv 0$, then the degree of $g(f(x,y),h(x,0))$ with respect to $y$ is given by $\deg_ug.\deg_yf$. But this is a positive integer because if $g(u,v)$ is independent of $u$ or $f(x,y)$ is independent of $y$, the left-hand side of \eqref{auxiliary15} must be independent of $y$ which is absurd. Therefore, $h(x,z)$ is divisible by $z$, say in the form of $h(x,z)=zh_1(x,z)$. Next, we argue that $g(u,v)$ is divisible by $v$. Writing it as $g_1(u)+vg_2(u,v)$ for appropriate polynomials $g_1$ and $g_2$, \eqref{auxiliary15} amounts to 
$$
(x+y)z+y^3z^3=g_1(f(x,y))+zh_1(x,z)\,g_2(f(x,y),zh_1(x,z)).
$$
Now $g_1(f(x,y))$ must be divisible by $z$ as all other terms are. But as $f$ is non-constant, this cannot occur unless $g_1\equiv 0$. The previous equation then may be written as 
$$
x+y+y^3z^2=h_1(x,z)\,g_2(f(x,y),zh_1(x,z)).
$$
Thus, the polynomial $h_1(x,z)$ divides $x+y+y^3z^2$. The only possibility is $h_1(x,z)\equiv\pm 1$. Substituting in the equation above, we arrive at an identity of the form
$$
x+y+y^3z^2=\pm g_2(f(x,y),\pm z).
$$
This cannot be the case since the trivariate function $x+y+y^3z^2$ does not satisfy  constraint \eqref{auxiliary-basic-constraint'}.
\end{proof}

\section{Differential forms}\label{Frobenius}
Differential forms are ubiquitous objects in differential geometry and tensor calculus. We only need the theory of differential forms on open domains in Euclidean spaces. 
Theorem \ref{integrability} (which has been used several times throughout the paper e.g. in the proof of Theorem \ref{main-tree}) is formulated in terms of differential forms.  The goal of this appendix is to provide the necessary background for understanding the theorem and its proof along with the related computations appeared in the proof of Proposition \ref{form2}. 

We begin with a very brief account of the local theory of differential forms. The reader can find a detailed treatment in \cite[chap. 5]{MR1886084}. Let $U$ be an open subset of $\Bbb{R}^n$. A \textit{differential $k$-form} $\omega$ on $U$ assigns a scalar to any $k$-tuple of tangent vectors at a point $\mathbf{p}$ of $U$. 
This assignment, denoted by $\omega_{\mathbf{p}}$, must be \textit{multi-linear} and \textit{alternating}.  We say $\omega$ is smooth (respectively analytic) if $\omega_{\mathbf{p}}$ varies smoothly (resp. analytically) with $\mathbf{p}$. In other words, feeding $\omega$ with $k$ smooth (resp. analytic) vector fields $\mathbf{V}_1,\dots,\mathbf{V}_k$ on $U$ results in a function $\omega(\mathbf{V}_1,\dots,\mathbf{V}_k):U\rightarrow\Bbb{R}$ which is smooth (resp. analytic). We next exhibit an expression for $\omega$. Consider the standard basis  
$\left(\frac{\partial}{\partial x_1},\dots,\frac{\partial}{\partial x_n}\right)$ of vector fields on $U$ where $\frac{\partial}{\partial x_i}$ assigns $\mathbf{e}_i$ to each point. The dual basis is denoted by $({\rm{d}}x_1,\dots,{\rm{d}}x_n)$ which at each point yields the dual of the standard basis $(\mathbf{e}_1,\dots,\mathbf{e}_n)$ for $\Bbb{R}^n$, i.e. 
${\rm{d}}x_i\left(\frac{\partial}{\partial x_j}\right)\equiv\delta_{ij}$.
Each of ${\rm{d}}x_1,\dots,{\rm{d}}x_n$ is a $1$-form on $U$, and any $k$-form $\omega$ can be written in terms of them:
\begin{equation}\label{differential form}
\omega=\sum_{1\leq i_1<\dots<i_k\leq n}f_{i_1\dots i_k}\,{\rm{d}}x_{i_1}\wedge\dots\wedge{\rm{d}}x_{i_k}.
\end{equation}
Here, each coefficient $f_{i_1\dots i_k}$ is a (smooth or analytic according to the context) function $U\rightarrow\Bbb{R}$. In front of it, ${\rm{d}}x_{i_1}\wedge\dots\wedge{\rm{d}}x_{i_k}$ appears which is a $k$-form satisfying 
${\rm{d}}x_{i_1}\wedge\dots\wedge{\rm{d}}x_{i_k}\left(\frac{\partial}{\partial x_{i_1}},\dots,\frac{\partial}{\partial x_{i_k}}\right)=1.$
This is constructed by the operation of \textit{exterior product} (also called \textit{wedge product}) from multi-linear algebra. The exterior product is an associative and distributive linear operation that out of $k_i$-tensors $\tau_i$ $(1\leq i\leq l)$ constructs an alternating $(k_1+\dots+k_l)$-tensor $\tau_1\wedge\dots\wedge\tau_l$. This product is \textit{anti-commutative}, e.g. 
${\rm{d}}x_i\wedge{\rm{d}}x_j=-{\rm{d}}x_j\wedge{\rm{d}}x_i$; this is the reason that in \eqref{differential form} the indices are taken to be strictly ascending.  \\
\indent
Another operation in the realm of differential forms is \textit{exterior differentiation}. For the $k$-form $\omega$ from \eqref{differential form}, its exterior derivative ${\rm{d}}\omega$ is a $(k+1)$-form defined as
$$
{\rm{d}}\omega:=\sum_{1\leq i_1<\dots<i_k\leq n}{\rm{d}}f_{i_1\dots i_k}\wedge{\rm{d}}x_{i_1}\wedge\dots\wedge{\rm{d}}x_{i_k};
$$
where the exterior derivative of a function $f$ is defined as
\begin{equation}\label{exact1}
{\rm{d}}f:=\sum_{i=1}^nf_{x_i}\,{\rm{d}}x_i.
\end{equation}
Notice that the $1$-form just appeared is the dual of the gradient vector field 
\begin{equation}\label{exact2}
\nabla f=\sum_{i=1}^n f_{x_i}\,\frac{\partial}{\partial x_i}.
\end{equation}

\begin{example}
In dimension three, the exterior differentiation encapsulates the familiar vector calculus operators curl and divergence. Consider the  vector field 
$$
\mathbf{V}(x,y,z)=[V_1(x,y,z),V_2(x,y,z),V_3(x,y,z)]^{\rm{T}}.
$$
The exterior derivatives 
$$
{\rm{d}}\big(V_1\,{\rm{d}}x+V_2\,{\rm{d}}y+V_3\,{\rm{d}}z\big)=\big((V_3)_y-(V_2)_z\big){\rm{d}}y\wedge{\rm{d}}z+\big((V_1)_z-(V_3)_x\big){\rm{d}}z\wedge{\rm{d}}x+\big((V_2)_x-(V_1)_y\big){\rm{d}}x\wedge{\rm{d}}y
$$
and 
$$
{\rm{d}}\big(V_1\,{\rm{d}}y\wedge{\rm{d}}z+V_2\,{\rm{d}}z\wedge{\rm{d}}x+V_3\,{\rm{d}}x\wedge{\rm{d}}y\big)=\big((V_1)_x+(V_2)_y+(V_3)_z\big){\rm{d}}x\wedge{\rm{d}}y\wedge{\rm{d}}z
$$
respectively have ${\rm{curl}}\mathbf{V}$ and ${\rm{div}}\mathbf{V}$ as their coefficients. As a matter of fact, there is a general Stokes' formula for differential forms that recovers the Kelvin–Stokes Theorem  and the Divergence Theorem as special cases. Finally, we point out that the familiar identities ${\rm{curl}}\circ\nabla=\mathbf{0}$ and ${\rm{div}}\circ{\rm{curl}}=0$ are instances of the general property  
${\rm{d}}\circ {\rm{d}}=0$ of the exterior differentiation.
\end{example} 

\begin{example}
As mentioned in the previous example, the outcome of twice applying the exterior differentiation operator to a form is always zero. This is an extremely important property that leads to the definitions of \textit{closed} and \textit{exact} differential forms. A $k$-form $\omega$ on an open subset $U$ of $\Bbb{R}^n$ is called closed if ${\rm{d}}\omega=0$. This holds if $\omega$ is in the form of $\omega={\rm{d}}\alpha$ for a $(k-1)$-form $\alpha$ on $U$. Such forms are called exact. The space of closed forms may be strictly larger than the space of exact forms; the difference of these spaces can be used to measure the topological complexity of $U$. If $U$ is an open box-like region, every closed form on it is exact. But for instance, the $1$-form $\omega=-\frac{y}{x^2+y^2}\,{\rm{d}}x+\frac{x}{x^2+y^2}\,{\rm{d}}y$ on $\Bbb{R}^2-\{(0,0)\}$ is closed while may not be written as ${\rm{d}}\alpha$  for any smooth function $\alpha:\Bbb{R}^2-\{(0,0)\}\rightarrow\Bbb{R}$. This brings us to a famous fact from multi-variable calculus that we have used several times (e.g. in the proof of Theorem \ref{main-tree-activation}): A necessary condition for a vector field  $\mathbf{V}=\sum_{i=1}^nV_i\,\frac{\partial}{\partial x_i}$ on an open subset $U$ of $\Bbb{R}^n$ to be a gradient vector field is $(V_i)_{x_j}=(V_j)_{x_i}$ for any $1\leq i,j\leq n$. Near each point of $U$ the vector field $\mathbf{V}$ may be written as $\nabla f$; it is globally in the form of $\nabla f$ for a function 
$f:U\rightarrow\Bbb{R}$ when $U$ is simply connected. In view of \eqref{exact1} and \eqref{exact2}, one may rephrase this fact as: \textit{Closed $1$-forms on $U$ are exact if and only if $U$ is simply connected.}
\end{example}

\begin{proof}[Proof of Theorem \ref{integrability}]
Near a point $\mathbf{p}\in\Bbb{R}^n$ at which $\mathbf{V}(\mathbf{p})\neq\mathbf{0}$ we seek a locally defined  function  $\xi$ with $\mathbf{V}\parallel\nabla\xi\neq\mathbf{0}$. 
Recall that if $\mathbf{q}\in\Bbb{R}^n$ is a regular point of $\xi$, then near $\mathbf{q}$ the level set of $\xi$ passing through $\mathbf{q}$ is an $(n-1)$-dimensional submanifold of $\Bbb{R}^n$ to which the gradient vector field $\nabla\xi\neq\mathbf{0}$ is perpendicular. As we want the gradient to be parallel to the vector field $\mathbf{V}$, the equivalent characterization in terms of the $1$-form $\omega$ which is the dual of $\mathbf{V}$ (cf. \eqref{vector field},\eqref{1-form}) asserts that $\omega$ is zero at any vector tangent to the level set. So the tangent space to the level set at the point $\mathbf{q}$ could be described as $\{\mathbf{v}\in\Bbb{R}^n\,|\,\omega_{\mathbf{q}}(\mathbf{v})=0\}$. 
As $\mathbf{q}$ varies near $\mathbf{p}$, these $(n-1)$-dimensional subspaces of $\Bbb{R}^n$ vary smoothly. In differential geometry, such a higher-dimensional version of a vector field is called a \textit{distribution}; and the property that these subspaces are locally given by tangent spaces to a family of submanifolds (the level sets here) is called \textit{integrability}. The seminal Frobenius Theorem (\cite[Theorem 2.11.11]{MR0251745}) states that the distribution defined by a nowhere vanishing $1$-from $\omega$ is integrable if and only if $\omega\wedge{\rm{d}}\omega=0$. 
\end{proof}

\section{Basic notions of algebraic geometry}\label{Background}
The goal of this appendix is to provide a very brief introduction to the theory of algebraic varieties in order to illuminate the concepts and results cited in \S\ref{Polynomial}. There are numerous introductory texts on the subject; see \cite{MR1416564,milneAG} for the general theory and \cite{coste2000introduction} for real varieties.

An  affine algebraic variety (complex or real) is the set of common zeros of a family of polynomials.\footnote{A variety is not required to be \textit{irreducible} in our convention. As a matter of fact, it is interesting to determine if  functional varieties from Definition \ref{variety} are irreducible and if not, what are their irreducible components.} A complex (respectively real) variety considered in this paper is a subset of $\Bbb{C}^m$ (resp. $\Bbb{R}^m$) defined as the set of common zeros of a family polynomials belonging to $\Bbb{C}[x_1,\dots,x_m]$ (resp. $\Bbb{R}[x_1,\dots,x_m]$). The subsets just described constitute the closed subsets of the affine spaces $\Bbb{C}^m$ or $\Bbb{R}^m$ in the Zariski topology, a topology coarser than the usual topology.  \\
\indent
There is a notion of dimension for varieties that agrees with our usual conception of dimension when the variety under consideration is a manifold. Also in the algebraic category, the maps between varieties (the morphisms) are studied. In our setting, one only needs to consider polynomial maps such as  $\Bbb{C}^m\rightarrow\Bbb{C}^l$ or 
$\Bbb{R}^m\rightarrow\Bbb{R}^l$. For polynomial maps the dimension of the Zariski closure of the image is equal to the dimension of the domain minus the dimension of a \textit{generic fiber}. The image itself may not be a locally closed subvariety, but belongs to certain Boolean algebras of subsets that we shall describe. This brings us to Chevalley's Theorem and the Tarski–Seidenberg Theorem which were both mentioned in \S\ref{Polynomial}. A subset of an ambient complex affine space is called constructible if it can be expressed as a union of finitely subsets of the form 
$$
\{P_1=0,\dots, P_k=0, Q\neq 0\}
$$
where $P_1,\dots,P_k$ and $Q$ are complex polynomials.
Chevalley's Theorem asserts that a polynomial map $\Bbb{C}^m\rightarrow\Bbb{C}^l$ takes constructible sets to constructible sets. In particular, the image of $\Bbb{C}^m\rightarrow\Bbb{C}^l$ is a constructible set.  Over the real numbers, one must include polynomial inequalities as well: A semi-algebraic subset of a Euclidean space is defined as a finite union of subsets of the form 
$$
\{P=0, Q_1>0,\dots,Q_j>0\}
$$
where $P$ and $Q_1,\dots,Q_j$ are real polynomials. The Tarski–Seidenberg Theorem asserts that the polynomial image of a semi-algebraic set is semi-algebraic. In particular, the image of a polynomial map $\Bbb{R}^m\rightarrow\Bbb{R}^l$ is semi-algebraic.

\begin{example}
The image of the map $\Bbb{C}^2\rightarrow\Bbb{C}^2$ defined by $(x,y)\mapsto (x^2,xy)$ is not a closed subvariety of $\Bbb{C}^2$; it is the constructible set $\{x\neq 0\}\cup\{x=y=0\}$. The image of the map $\Bbb{R}^2\rightarrow\Bbb{R}^2$ defined by the same formula is the semi-algebraic set $\{x>0\}\cup\{x^2+y^2=0\}$ in $\Bbb{R}^2$. 
\end{example}

\begin{proof}[Proof of Lemma \ref{dependence}]
The $m$-tuple $(p_1,\dots,p_l)$ of polynomials defines a map from the $m$-dimensional affine space to the $l$-dimensional affine space. The dimension of the image could not be greater than $m$. So when $l>m$, the image is nowhere dense in the Zariski topology; it must be included in a closed subset of the form $\{\Phi=0\}$ where $\Phi\not\equiv 0$. Indeed, the closure of the image is of codimension at least $l-m$, so there exist at least $l-m$ independent algebraic equation that are satisfied at any point $\left(p_1(t_1,\dots,t_m),\dots,p_l(t_1,\dots,t_m)\right)$ (compare with Remark \ref{dimension}).  \\
\indent
The existence of $\Phi$ could also be deduced in a more elementary manner which we include here for the sake of thoroughness. For a positive integer $a$, there are precisely $\binom{a+l}{l}$ monomials such as 
$p_1^{a_1}\dots p_l^{a_l}$ with their total degree $a_1+\dots+a_l$ not greater than $a$. But each of them is a polynomial of $t_1,\dots,t_m$ of total degree at most $ad$ where $d:=\max\{\deg p_1,\dots,\deg p_l\}$. For $a$ large enough, $\binom{a+l}{l}$ is greater than $\binom{ad+m}{m}$ because the degree of the former as a polynomial of $a$ is $l$ while the degree of the latter is $m$. For such an $a$ the number of monomials  $p_1^{a_1}\dots p_l^{a_l}$ is larger than the dimension of the space of polynomials of $t_1,\dots,t_m$ of total degree at most $ad$. Therefore, there  exists a linear dependency among these monomials which amounts to a  non-trivial polynomial relation among $p_1,\dots,p_l$.
\end{proof}

\begin{proof}[Proof of Lemma \ref{semi-algebraic}]
It suffices to show that the complement 
$$
\{P\in {\mathbf{Poly}}_{d,n}\,|\, \exists\, \mathbf{p}\in\Bbb{R}^n \text{ with } P(\mathbf{p})\leq 0 \}
$$
is a semi-algebraic subset of the affine space ${\mathbf{Poly}}_{d,n}$. It is the image of  
$$
\left\{(P,\mathbf{p})\in{\mathbf{Poly}}_{d,n}\times\Bbb{R}^n\,|\, P(\mathbf{p})\leq 0\right\}
$$
under the projection map ${\mathbf{Poly}}_{d,n}\times\Bbb{R}^n\rightarrow{\mathbf{Poly}}_{d,n}$. Showing that the latter subset of ${\mathbf{Poly}}_{d,n}\times\Bbb{R}^n$ is semi-algebraic concludes the proof: It is the preimage of the semi-algebraic set $(-\infty,0]\subset\Bbb{R}$ under the evaluation map 
$$
\begin{cases}
{\mathbf{Poly}}_{d,n}\times\Bbb{R}^n\rightarrow\Bbb{R}\\
(P,\mathbf{p})\mapsto P(\mathbf{p}).
\end{cases}
$$
\end{proof}

\vspace*{0.5cm}
\bibliography{biblography}

\newcommand{\etalchar}[1]{$^{#1}$}
\begin{thebibliography}{SMGL{\etalchar{+}}16}

\bibitem[Arn09a]{arnold2009representation1}
Vladimir~I. Arnold.
\newblock On the representation of functions of several variables as a
  superposition of functions of a smaller number of variables.
\newblock {\em Collected Works: Representations of Functions, Celestial
  Mechanics and KAM Theory, 1957--1965}, pages 25--46, 2009.

\bibitem[Arn09b]{arnold2009representation}
Vladimir~I. Arnold.
\newblock Representation of continuous functions of three variables by the
  superposition of continuous functions of two variables.
\newblock {\em Collected Works: Representations of Functions, Celestial
  Mechanics and KAM Theory, 1957--1965}, pages 47--133, 2009.

\bibitem[BD12]{boyce2012elementary}
William~E. Boyce and Richard~C. DiPrima.
\newblock {\em Elementary differential equations}.
\newblock Wiley New York, tenth edition, 2012.

\bibitem[BMM99]{bartlett1999almost}
Peter~L. Bartlett, Vitaly Maiorov, and Ron Meir.
\newblock Almost linear vc dimension bounds for piecewise polynomial networks.
\newblock In {\em Advances in neural information processing systems}, pages
  190--196, 1999.

\bibitem[Bra07]{brattka2007hilbert}
Vasco Brattka.
\newblock From {H}ilbert’s 13th problem to the theory of neural networks:
  constructive aspects of {K}olmogorov’s superposition theorem.
\newblock In {\em Kolmogorov’s heritage in mathematics}, pages 253--280.
  Springer, 2007.

\bibitem[BS14]{bianchini2014complexity}
Monica Bianchini and Franco Scarselli.
\newblock On the complexity of neural network classifiers: A comparison between
  shallow and deep architectures.
\newblock {\em IEEE transactions on neural networks and learning systems},
  25(8):1553--1565, 2014.

\bibitem[Buc76]{buck1976approximate}
R.~Creighton Buck.
\newblock
  \href{https://apps.dtic.mil/dtic/tr/fulltext/u2/a031972.pdf}{Approximate
  complexity and functional representation.}
\newblock Technical report, Mathematics research center, University of
  Wisconsin, Madison, 1976.

\bibitem[Buc79]{MR541075}
R.~Creighton Buck.
\newblock Approximate complexity and functional representation.
\newblock {\em J. Math. Anal. Appl.}, 70(1):280--298, 1979.

\bibitem[Buc81a]{MR606252}
R.~Creighton Buck.
\newblock Characterization of classes of functions.
\newblock {\em Amer. Math. Monthly}, 88(2):139--142, 1981.

\bibitem[Buc81b]{MR630992}
R.~Creighton Buck.
\newblock The solutions to a smooth {PDE} can be dense in {$C[I]$}.
\newblock {\em J. Differential Equations}, 41(2):239--244, 1981.

\bibitem[CHP03]{MR1965992}
Hung-Chieh Chang, Wei He, and Nagabhushana Prabhu.
\newblock The analytic domain in the implicit function theorem.
\newblock {\em JIPAM. J. Inequal. Pure Appl. Math.}, 4(1):Article 12, 5, 2003.

\bibitem[CNPW19]{chatziafratis2019depth}
Vaggos Chatziafratis, Sai~G. Nagarajan, Ioannis Panageas, and Xiao Wang.
\newblock Depth-width trade-offs for relu networks via sharkovsky's theorem.
\newblock {\em arXiv preprint arXiv:1912.04378v1}, 2019.

\bibitem[Cos00]{coste2000introduction}
Michel Coste.
\newblock {\em
  \href{http://citeseerx.ist.psu.edu/viewdoc/download?doi=10.1.1.178.1934&rep=rep1&type=pdf}{An
  introduction to semialgebraic geometry}}.
\newblock Citeseer, 2000.

\bibitem[CSS16]{cohen2016expressive}
Nadav Cohen, Or~Sharir, and Amnon Shashua.
\newblock On the expressive power of deep learning: A tensor analysis.
\newblock In {\em Conference on Learning Theory}, pages 698--728, 2016.

\bibitem[Cyb89]{cybenko1989approximation}
George Cybenko.
\newblock Approximation by superpositions of a sigmoidal function.
\newblock {\em Mathematics of control, signals and systems}, 2(4):303--314,
  1989.

\bibitem[DL18]{du2018power}
Simon~S. Du and Jason~D. Lee.
\newblock On the power of over-parametrization in neural networks with
  quadratic activation.
\newblock {\em arXiv preprint arXiv:1803.01206v2}, 2018.

\bibitem[DRK19]{dehmamy2019direct}
Nima Dehmamy, Neda Rohani, and Aggelos~K. Katsaggelos.
\newblock Direct estimation of weights and efficient training of deep neural
  networks without sgd.
\newblock In {\em ICASSP 2019-2019 IEEE International Conference on Acoustics,
  Speech and Signal Processing (ICASSP)}, pages 3232--3236. IEEE, 2019.

\bibitem[ES16]{eldan2016power}
Ronen Eldan and Ohad Shamir.
\newblock The power of depth for feedforward neural networks.
\newblock In {\em Conference on learning theory}, pages 907--940, 2016.

\bibitem[FFJK19]{Farhoodi2019OnFC}
Roozbeh Farhoodi, Khashayar Filom, Ilenna~S. Jones, and Konrad~P. K{\"o}rding.
\newblock On functions computed on trees.
\newblock {\em Neural Computation}, 31:2075--2137, 2019.

\bibitem[FM94]{fefferman1994recovering}
Charles Fefferman and Scott Markel.
\newblock Recovering a feed-forward net from its output.
\newblock In {\em Advances in Neural Information Processing Systems}, pages
  335--342, 1994.

\bibitem[GA15]{gillette2015topological}
Todd~A. Gillette and Giorgio~A. Ascoli.
\newblock Topological characterization of neuronal arbor morphology via
  sequence representation: I-motif analysis.
\newblock {\em BMC bioinformatics}, 16(1):216, 2015.

\bibitem[GAF{\etalchar{+}}17]{gerhard2017conserved}
Stephan Gerhard, Ingrid Andrade, Richard~D. Fetter, Albert Cardona, and
  Casey~M. Schneider-Mizell.
\newblock Conserved neural circuit structure across drosophila larval
  development revealed by comparative connectomics.
\newblock {\em Elife}, 6:e29089, 2017.

\bibitem[GP89]{girosi1989representation}
Federico Girosi and Tomaso Poggio.
\newblock Representation properties of networks: Kolmogorov's theorem is
  irrelevant.
\newblock {\em Neural Computation}, 1(4):465--469, 1989.

\bibitem[Har95]{MR1416564}
Joe Harris.
\newblock {\em Algebraic geometry}, volume 133 of {\em Graduate Texts in
  Mathematics}.
\newblock Springer-Verlag, New York, 1995.
\newblock A first course, Corrected reprint of the 1992 original.

\bibitem[Hil02]{hilbert1902mathematical}
David Hilbert.
\newblock Mathematical problems.
\newblock {\em Bulletin of the American Mathematical Society}, 8(10):437--479,
  1902.

\bibitem[HN87]{hecht1987kolmogorov}
Robert Hecht-Nielsen.
\newblock Kolmogorov’s mapping neural network existence theorem.
\newblock In {\em Proceedings of the international conference on Neural
  Networks}, volume~3, pages 11--14. IEEE Press New York, 1987.

\bibitem[Hor91]{hornik1991approximation}
Kurt Hornik.
\newblock Approximation capabilities of multilayer feedforward networks.
\newblock {\em Neural networks}, 4(2):251--257, 1991.

\bibitem[HSW{\etalchar{+}}89]{hornik1989multilayer}
Kurt Hornik, Maxwell Stinchcombe, Halbert White, et~al.
\newblock Multilayer feedforward networks are universal approximators.
\newblock {\em Neural networks}, 2(5):359--366, 1989.

\bibitem[KD05]{kollins2005branching}
Katherine~M. Kollins and Roger~W. Davenport.
\newblock Branching morphogenesis in vertebrate neurons.
\newblock In {\em Branching morphogenesis}, pages 8--65. Springer, 2005.

\bibitem[Kol57]{MR0111809}
Andrey~N. Kolmogorov.
\newblock On the representation of continuous functions of many variables by
  superposition of continuous functions of one variable and addition.
\newblock {\em Dokl. Akad. Nauk SSSR}, 114:953--956, 1957.

\bibitem[KP02]{MR1894435}
Steven~G. Krantz and Harold~R. Parks.
\newblock {\em The implicit function theorem}.
\newblock Birkh\"{a}user Boston, Inc., Boston, MA, 2002.
\newblock History, theory, and applications.

\bibitem[KTB19]{kileel2019expressive}
Joe Kileel, Matthew Trager, and Joan Bruna.
\newblock On the expressive power of deep polynomial neural networks.
\newblock {\em arXiv preprint arXiv:1905.12207v1}, 2019.

\bibitem[Ků91]{kuurkova1991kolmogorov}
Věra Kůrková.
\newblock Kolmogorov's theorem is relevant.
\newblock {\em Neural Computation}, 3(4):617--622, 1991.

\bibitem[Ků92]{kuurkova1992kolmogorov}
Věra Kůrková.
\newblock Kolmogorov's theorem and multilayer neural networks.
\newblock {\em Neural networks}, 5(3):501--506, 1992.

\bibitem[Lor66]{MR0213785}
George~G. Lorentz.
\newblock {\em Approximation of functions}.
\newblock Holt, Rinehart and Winston, New York-Chicago, Ill.-Toronto, Ont.,
  1966.

\bibitem[LTR17]{lin2017does}
Henry~W. Lin, Max Tegmark, and David Rolnick.
\newblock Why does deep and cheap learning work so well?
\newblock {\em Journal of Statistical Physics}, 168(6):1223--1247, 2017.

\bibitem[Mha96]{mhaskar1996neural}
Hrushikesh Mhaskar.
\newblock Neural networks for optimal approximation of smooth and analytic
  functions.
\newblock {\em Neural computation}, 8(1):164--177, 1996.

\bibitem[Mil17]{milneAG}
James~S. Milne.
\newblock Algebraic geometry (v6.02), 2017.
\newblock Available at www.jmilne.org/math/.

\bibitem[MLP17]{mhaskar2017and}
Hrushikesh Mhaskar, Qianli Liao, and Tomaso Poggio.
\newblock When and why are deep networks better than shallow ones?
\newblock In {\em Thirty-First AAAI Conference on Artificial Intelligence},
  2017.

\bibitem[MP17]{minsky2017perceptrons}
Marvin Minsky and Seymour~A. Papert.
\newblock {\em Perceptrons: An introduction to computational geometry}.
\newblock MIT press, 2017.

\bibitem[MPCB14]{montufar2014number}
Guido~F. Montufar, Razvan Pascanu, Kyunghyun Cho, and Yoshua Bengio.
\newblock On the number of linear regions of deep neural networks.
\newblock In {\em Advances in neural information processing systems}, pages
  2924--2932, 2014.

\bibitem[Nar68]{MR0251745}
Raghavan Narasimhan.
\newblock {\em Analysis on real and complex manifolds}.
\newblock Advanced Studies in Pure Mathematics, Vol. 1. Masson \& Cie,
  \'{E}diteurs, Paris; North-Holland Publishing Co., Amsterdam, 1968.

\bibitem[Ost20]{ostrowski1920dirichletsche}
Alexander Ostrowski.
\newblock {\"U}ber dirichletsche reihen und algebraische
  differentialgleichungen.
\newblock {\em Mathematische Zeitschrift}, 8(3):241--298, 1920.

\bibitem[PBL19]{poggio2019theoretical}
Tomaso Poggio, Andrzej Banburski, and Qianli Liao.
\newblock Theoretical issues in deep networks: Approximation, optimization and
  generalization.
\newblock {\em arXiv preprint arXiv:1908.09375v1}, 2019.

\bibitem[PLR{\etalchar{+}}16]{poole2016exponential}
Ben Poole, Subhaneil Lahiri, Maithra Raghu, Jascha Sohl-Dickstein, and Surya
  Ganguli.
\newblock Exponential expressivity in deep neural networks through transient
  chaos.
\newblock In {\em Advances in neural information processing systems}, pages
  3360--3368, 2016.

\bibitem[PMR{\etalchar{+}}17]{poggio2017and}
Tomaso Poggio, Hrushikesh Mhaskar, Lorenzo Rosasco, Brando Miranda, and Qianli
  Liao.
\newblock Why and when can deep-but not shallow-networks avoid the curse of
  dimensionality: a review.
\newblock {\em International Journal of Automation and Computing},
  14(5):503--519, 2017.

\bibitem[PRV20]{petersen2020topological}
Philipp Petersen, Mones Raslan, and Felix Voigtlaender.
\newblock Topological properties of the set of functions generated by neural
  networks of fixed size.
\newblock {\em Foundations of Computational Mathematics}, pages 1--70, 2020.

\bibitem[PS45]{MR0015435}
George P\'{o}lya and G\'{a}bor Szeg\"{o}.
\newblock {\em Aufgaben und {L}ehrs\"{a}tze aus der {A}nalysis}.
\newblock Dover Publications, New York, N. Y., 1945.

\bibitem[Pug02]{MR1886084}
Charles~C. Pugh.
\newblock {\em Real mathematical analysis}.
\newblock Undergraduate Texts in Mathematics. Springer-Verlag, New York, 2002.

\bibitem[RK19]{2019arXiv191000744R}
David {Rolnick} and Konrad~P. {Kording}.
\newblock {Reverse-Engineering Deep ReLU Networks}.
\newblock {\em arXiv e-prints}, page arXiv:1910.00744v2, October 2019.

\bibitem[RPK{\etalchar{+}}17]{raghu2017expressive}
Maithra Raghu, Ben Poole, Jon Kleinberg, Surya Ganguli, and Jascha~S.
  Dickstein.
\newblock On the expressive power of deep neural networks.
\newblock In {\em Proceedings of the 34th International Conference on Machine
  Learning-Volume 70}, pages 2847--2854. JMLR. org, 2017.

\bibitem[Rub81]{MR609048}
Lee~A. Rubel.
\newblock A universal differential equation.
\newblock {\em Bull. Amer. Math. Soc. (N.S.)}, 4(3):345--349, 1981.

\bibitem[SJL18]{soltanolkotabi2018theoretical}
Mahdi Soltanolkotabi, Adel Javanmard, and Jason~D. Lee.
\newblock Theoretical insights into the optimization landscape of
  over-parameterized shallow neural networks.
\newblock {\em IEEE Transactions on Information Theory}, 65(2):742--769, 2018.

\bibitem[SMGL{\etalchar{+}}16]{schneider2016quantitative}
Casey~M. Schneider-Mizell, Stephan Gerhard, Mark Longair, Tom Kazimiers, Feng
  Li, Maarten~F. Zwart, Andrew Champion, Frank~M. Midgley, Richard~D. Fetter,
  Stephan Saalfeld, et~al.
\newblock Quantitative neuroanatomy for connectomics in drosophila.
\newblock {\em Elife}, 5:e12059, 2016.

\bibitem[Spr65]{MR210852}
David~A. Sprecher.
\newblock On the structure of continuous functions of several variables.
\newblock {\em Trans. Amer. Math. Soc.}, 115:340--355, 1965.

\bibitem[Tel16]{telgarsky2016benefits}
Matus Telgarsky.
\newblock Benefits of depth in neural networks.
\newblock {\em arXiv preprint arXiv:1602.04485v2}, 2016.

\bibitem[VBB18]{venturi2018spurious}
Luca Venturi, Afonso~S. Bandeira, and Joan Bruna.
\newblock Spurious valleys in two-layer neural network optimization landscapes.
\newblock {\em arXiv preprint arXiv:1802.06384v3}, 2018.

\bibitem[vG80]{MR602748}
Manfred von Golitschek.
\newblock Remarks on functional representation.
\newblock In {\em Approximation theory, {III} ({P}roc. {C}onf., {U}niv.
  {T}exas, {A}ustin, {T}ex., 1980)}, pages 429--434. Academic Press, New
  York-London, 1980.

\bibitem[VH67]{MR0237729}
Anatoliĭ~G. Vitu\v{s}kin and Gennadiĭ~M. Henkin.
\newblock Linear superpositions of functions.
\newblock {\em Uspehi Mat. Nauk}, 22(1 (133)):77--124, 1967.

\bibitem[Vit54]{MR0062212}
Anatoliĭ~G. Vitu\v{s}kin.
\newblock On {H}ilbert's thirteenth problem.
\newblock {\em Doklady Akad. Nauk SSSR (N.S.)}, 95:701--704, 1954.

\bibitem[Vit64]{MR0165138}
Anatoliĭ~G. Vitu\v{s}kin.
\newblock A proof of the existence of analytic functions of several variables
  not representable by linear superpositions of continuously differentiable
  functions of fewer variables.
\newblock {\em Dokl. Akad. Nauk SSSR}, 156:1258--1261, 1964.

\bibitem[Vit04]{vitushkin2004hilbert}
Anatoliĭ~G. Vitu\v{s}kin.
\newblock On {H}ilbert's thirteenth problem and related questions.
\newblock {\em Russian Mathematical Surveys}, 59(1):11, 2004.

\end{thebibliography}
\bibliographystyle{alpha}

\end{document}